\documentclass[11pt]{article}

\usepackage[margin=1in]{geometry}
\usepackage[T1]{fontenc}
\usepackage{lmodern}
\usepackage{microtype}
\usepackage{amsmath,amssymb,amsthm,physics}
\usepackage{mathtools}
\usepackage{graphicx}
\usepackage{booktabs}
\usepackage{hyperref}
\usepackage[nameinlink]{cleveref}
\usepackage{algorithm}
\usepackage{algpseudocode}

\usepackage{subcaption} 
\theoremstyle{plain} 

\newtheorem{theorem}{Theorem}
\newtheorem{lemma}[theorem]{Lemma}
\newtheorem{definition}[theorem]{Definition}
\newtheorem{corollary}[theorem]{Corollary}
\newtheorem{remark}[theorem]{Remark}

\usepackage{natbib}
\hypersetup{
  colorlinks=true,
  linkcolor=blue,
  citecolor=blue,
  urlcolor=blue
}
\DeclareMathOperator*{\argmin}{arg\,min}
\title{On the Loss Landscape Geometry of Regularized Deep Matrix Factorization: Uniqueness and Sharpness}
\author{Anıl Kamber \\ UC San Diego \\ \texttt{akamber@ucsd.edu} \and Rahul Parhi \\ UC San Diego \\ \texttt{rahul@ucsd.edu}}
\date{}

\begin{document}

\maketitle
\begin{abstract}
    Weight decay is ubiquitous in training deep neural network architectures. Its empirical success is often attributed to capacity control; nonetheless, our theoretical understanding of its effect on the loss landscape and the set of minimizers remains limited. In this paper, we show that $\ell^2$-regularized deep matrix factorization/deep linear network training problems with squared-error loss admit a unique end-to-end minimizer for all target matrices subject to factorization, except for a set of Lebesgue measure zero formed by the depth and the regularization parameter. This observation reveals fundamental properties of the loss landscape of regularized deep matrix factorization problems: \emph{the Hessian spectrum is constant across all minimizers} of the regularized deep scalar factorization problem with squared-error loss. Moreover, we show that, in regularized deep matrix factorization problems with squared-error loss, if the target matrix does not belong to the Lebesgue measure-zero set, then the Frobenius norm of each layer is constant across all minimizers. This, in turn, yields a global lower bound on the trace of the Hessian evaluated at any minimizer of the regularized deep matrix factorization problem. Furthermore, we establish a critical threshold for the regularization parameter above which the unique end-to-end minimizer collapses to zero.
\end{abstract}
\textbf{Keywords:} $\ell^2$ regularization, weight decay, deep matrix factorization, deep linear networks, flatness, threshold, Schatten norm, gradient descent

\begingroup

\renewcommand\thefootnote{}
\footnotetext{Copyright © 2026 by Anıl Kamber and Rahul Parhi}
\endgroup


\section{Introduction}

Weight decay/$\ell^2$ regularization is an \emph{explicit regularization}\footnote{Explicit regularization is typically implemented by incorporating into the loss function a complexity penalty scaled by a regularization parameter, and the level of regularization imposed on the optimization problem can be adjusted by tuning this parameter. The state-of-the-art implementation is decoupled weight decay \citep{loshchilov2017decoupled} with the Adam optimizer.} technique to improve the performance of deep neural networks. \citet{plaut1986experiments} were the first to suggest shrinking network weights during training, and \citet{hinton1987learning} later observed that weight decay improved the performance of a deep neural network trained for a shape recognition task by $76$\%. Today, it is widely used \citep{krizhevsky2012imagenet, simonyan2014very,he2016deep,devlin2019bert,brown2020language}. Despite its widespread use, our theoretical understanding of why explicit $\ell^2$ regularization of weights during training leads to models that generalize better than those trained without it remains limited. 

The optimization landscapes of deep learning architectures are highly nonconvex, even though the loss functions themselves are convex. Therefore, internal representations of the network are learned by the gradient-based optimization algorithms. However, due to the nonconvex structure of the loss landscape, gradient-based algorithms can become trapped in spurious minima or nonstrict saddles. Moreover, if they are able to escape these critical points, overparameterized models can represent infinitely many functions that perfectly fit the training data, so how do gradient-based algorithms avoid bad minima? Considering the empirical success of $\ell^2$ regularization, it is natural to hypothesize that $\ell^2$ regularization reshapes the loss landscape and the set of minima in a way that the effect of these problems on the learning dynamics is reduced.

To this end, \citet{chen2025complete} showed that under a necessary and sufficient condition on the regularization parameters, the $\ell^2$-regularized deep matrix factorization problem is partially benign. This means that every critical point is either a local minimum or a strict saddle from which gradient-based optimization algorithms escape almost always. \citet{liang2025stable} empirically observed that explicit regularization seems to break the edge-of-stability phenomenon. Recently, \citet{boursier2025benignitylosslandscapeweight} presented a probabilistic analysis of the loss landscape of $\ell^2$-regularized two-layer ReLU networks. They showed that when the network is sufficiently overparameterized, almost all partitioning cones of the parameter space contain no spurious minima. However, this does not necessarily mean that almost all local minima are global minima. Supporting this argument, they showed that this benignity holds relevance under large initialization. 

In this paper, we investigate the structure of the solution set of $\ell^2$-regularized deep matrix factorization/deep linear network training problems. Since generalization is fundamentally a property of the function space rather than the parameter space, we analyze how explicit $\ell^2$ regularization reshapes the set of functions represented by global minima. It is well known that deep linear networks with quadratic loss admit a unique end-to-end minimizing function determined by the second-order statistics of the training data \citep{mulayoff}. Motivated by this observation, we ask the fundamental question of whether the minimizing end-to-end function remains unique once $\ell^2$ regularization is introduced. To this end, we investigate the $\ell^2$-regularized deep matrix factorization problem with squared-error loss by highlighting the fact that deep matrix factorization and deep linear neural network training problems are equivalent when the input data covariance matrix is full rank \citep{chou2024gradient}. Note that this investigation is also quite remarkable for the regularized nonlinear networks since, in particular, ReLU activation partitions the parameter space into cones. Inside each of these cones, the network acts as a linear model. Hence, to answer the question of how rich the set of functions represented by global minimizers within each cone is, this investigation is a good start.

We present our main results in \Cref{sec:mainresults}. In \Cref{sec:discussion}, we discuss how we bring the notions from low-rank matrix recovery to analyze the geometry of the loss landscape and the structure of the set of minimizers of the $\ell^2$-regularized deep matrix factorization problem. We conclude in \Cref{sec:conclusion}.

\subsection{Contributions}

In this paper, we characterize various remarkable aspects of the geometry of the loss landscape near minima and the structure of the set of minima in $\ell^2$-regularized deep matrix factorization problems with squared-error loss. At a glance, our contributions are as follows:

\begin{itemize}
    \item $\ell^2$-regularized deep scalar factorization problems with squared-error loss admit a unique end-to-end minimizer for all scalars except two (\Cref{theorem:uniqueminscalar}). Furthermore, we characterize the full Hessian spectrum across all minimizers of the regularized deep scalar factorization problem for all scalars except these two. We observe that the Hessian spectrum is constant across all minimizers, and the maximum Hessian eigenvalue depends on the depth, the magnitude of optimal layers, and the regularization parameter. This implies that under Hessian-based sharpness measures, \emph{all global minima are equally flat almost always}. To the best of our knowledge, our results offer the first complete characterization of Hessian spectrum across minimizers in deep-factorization-type problems (\Cref{theorem:hessianspectrumisconstant}). 

    \item The singular vectors of the end-to-end product of any minimizer of the $\ell^2$-regularized deep matrix factorization problem must align with those of the target matrix (\Cref{theorem:maintheorem}).

    \item We show that $\ell^2$-regularized deep matrix factorization problems with squared-error loss admit a unique end-to-end minimizer for all target matrices subject to factorization, except for a set of Lebesgue measure zero formed by the depth and the regularization parameter (\Cref{theorem:maintheorem}). This implies that the regularized deep matrix factorization problem admits a unique end-to-end minimizer \emph{almost always}.

    \item \citet{chen2025complete} showed that, at any minimizer of the regularized deep matrix factorization problem, the layers are Frobenius-norm balanced. We extend this result as follows: we show that if the target matrix does not belong to the Lebesgue measure-zero set, then the Frobenius norm of each layer is the same across all minimizers (\Cref{corollary:extended}).

    \item We establish a critical threshold for the regularization parameter above which the unique end-to-end minimizer collapses to zero (\Cref{corollary:collapse}).
    
    \item We present a lower bound for the trace of the Hessian matrix evaluated at any minimizer of the regularized deep matrix factorization problem that holds relevance if the target matrix does not belong to the Lebesgue measure-zero set. We leave the question of whether this lower bound is achieved at a flat minimum as an open problem (\Cref{theorem:trace}).

\end{itemize}
In general, theoretical analyses of deep matrix factorization or deep neural network training problems focus either on global properties of the loss landscape and the set of minima \citep{kawaguchi2016deep,ge2017no,laurent2018deep, zhou2022optimization,ethpaper,boursier2025benignitylosslandscapeweight,kim2025exploringlosslandscaperegularized,josz2025geometryflatminima,kamber2026sharpnessminimadeepmatrix} or on the dynamics of gradient-based optimization methods \citep{gunasekar2017implicit,chizat2018global,implicitregularizationindeepmatrixfactorization, chou2024gradient,liang2025gradientdescentlargestep,boursier2025simplicity,ghosh2025learning}. This paper falls into the former category, as we analyze how $\ell^2$ regularization reshapes the loss landscape and the set of minima.

\subsection{Related Work}

\paragraph{Benign and Partially Benign Landscapes}
A nonconvex loss landscape can exhibit several properties that impede gradient-based optimization. Characterization of these malign properties can help the practitioner choose a better training strategy, while their absence can help explain the empirical success of gradient-based methods in neural network training. To this end, \citet{chen2025complete} defined a loss landscape as  \emph{benign} when every local minimum is global, and every saddle point is a strict saddle point\footnote{A strict saddle point is defined as a critical point where the Hessian has at least one strictly negative eigenvalue. \citep{ge2015escaping,lee2016gradientescaping} showed that gradient descent (GD) can escape from strict saddle points.}. Furthermore, they defined a loss landscape as \emph{partially benign} when either every local minimum is global or every saddle point is a strict saddle point. We adopt these definitions throughout the paper. Another malign property characterized in the literature is the presence of \emph{spurious valleys}. Spurious valleys\footnote{\citet{venturi2019spurious} defined a spurious valley as one connected component of the set $\{\theta : \mathcal{L}(\theta)\le c\}$ that does not contain global minima, where $\mathcal{L}$ is the training loss.} are defined as connected components of a sub-level set of training loss that do not contain global minima~\citep{venturi2019spurious}. Note that every spurious valley contains a local minimum, whereas the converse---every local minimum lies within a spurious valley---is not necessarily true \citep[Figure~8c]{le2023spurious}. Moreover, \citet{liang2022revisiting} showed that there exist paths in the parameter space of a deep linear network training problem along which the loss decreases and diverges to infinity. They named these paths as \emph{decreasing paths to infinity}, and argued that a desirable landscape should be free of both spurious minima and these decreasing paths.

\paragraph{Loss Landscapes of Deep Networks} Several works showed that the landscape of deep matrix factorization/deep linear network training problem with squared-error loss is partially benign \citep{laurent2018deep,sun2020global}. Most notably, \citet{kawaguchi2016deep} showed that the loss landscape of the deep linear network training problem with squared-error loss is benign when the number of layers is at most three. For depths greater than three, however, the landscape becomes partially benign: every local minimum is global, but non-strict saddle points exist. Moreover, \citet{hardt2016identity} showed that deep linear residual networks have a partially benign loss landscape, i.e., every local minimum is global. 

On the other hand, for regularized deep networks, \citet{haeffele2017global} presented sufficient conditions that guarantee global optimality of every local minimum. Most notably, \citet{chen2025complete} presented a closed-form characterization of all critical points of the $\ell^2$-regularized deep matrix factorization problem. Furthermore, they provided precise conditions under which a critical point of the regularized deep matrix factorization problem is either a global minimum, a local minimum, a strict saddle, or a non-strict saddle. Lastly, they derived a necessary and sufficient condition on the regularization parameters under which the regularized deep matrix factorization problem is partially benign, i.e., every critical point is either a local minimizer or a strict saddle point. Recently, \citet{boursier2025benignitylosslandscapeweight} presented a probabilistic analysis of the loss landscape of the $\ell^2$-regularized two-layer ReLU networks that extends the work of \citet{karhadkar2023mildly} to the regularized setting. They showed that when the network is sufficiently overparameterized, almost all partitioning cones of the parameter space contain no spurious minima.

\paragraph{Structure of the Set of Global Minima}
\citet{garipov2018loss} empirically observed that optimal points in the parameter space are connected by curves along which the training and test losses remain approximately constant. \citet{simsek2021geometry} showed that for a multi-layer perceptron, adding a single neuron to each layer connects discrete symmetry-induced global minima into a single manifold. \citet{tolgamert} showed that a neural network training problem can be reformulated as a convex optimization problem. Furthermore, several studies have shown that various neural network training problems with $\ell^2$ regularization/weight decay also have such convex reformulations \citep{sahiner2020vector,ergen2020training,ergen2021global}. Leveraging this convex characterization, \citet{kimmert2024exploring} revealed several properties of the loss landscape of regularized neural networks by reformulating the training objective as an equivalent convex optimization problem and considering its dual. In particular, they characterized the solution set of the convex reformulation of the two-layer neural network training problem, and showed that the set of global minima undergoes a phase transition as the network gets wider.

\section{Notation, Preliminaries, and Problem Setup}

We use the following notation throughout the paper. We denote by $\otimes$ the \emph{Kronecker product} and by $\langle \cdot , \cdot \rangle$ the \emph{Frobenius inner product}. We denote by $\sigma_{\max}(\cdot)$ the \emph{spectral norm} and by $\norm{\cdot}_F$ the \emph{Frobenius norm}. For $p \in [0,\infty]$, we denote by $\norm{\cdot}_p$ the $\ell^p$ norm/quasi-norm\footnote{$\|\cdot\|_p$ is a quasi-norm when $0 < p < 1$.} and by $\norm{\cdot}_{\mathcal{S}^p}$ the Schatten-$p$ norm/quasi-norm. Furthermore, we write $[L] := \{1,2,\dots,L\}$ for the set of natural numbers up to $L$, represent a matrix $\mathbf{X} \in \mathbb{R}^{m \times n}$ as $[x_{ij}] \in \mathbb{R}^{m \times n}$, denote by $\mathbf{1} := [1 \quad 1 \quad \cdots \quad 1] \in \mathbb{R}^n$ the vector whose entries are all equal to $1$, by $\mathbf{e}_i \in \mathbb{R}^n$ the $i$th standard basis vector, by $\mathbb{R}_+^n$ the non-negative orthant of $\mathbb{R}^n$, and by ${\mathbb{R}^n}^\downarrow$ the non-increasingly ordered $\mathbb{R}^n$. We denote by $(a,b)$ the open interval between $a$ and $b$, where $a < b$. Lastly, we denote by $A^C$ the complement of a set $A$.

We use the following definition throughout the paper for the singular value decomposition.
\begin{definition}
Let  $\mathbf{X} \in \mathbb{R}^{m \times n}$ and $r:= \min \{m,n\}$. We define the full singular value decomposition of $\mathbf{X}$ as
\begin{equation}
\mathbf{X} = \mathbf{U}_\mathbf{X} \boldsymbol{\Sigma}_\mathbf{X} \mathbf{V}_{\mathbf{X}}^\top,
\end{equation}
where $\mathbf{U}_\mathbf{X} \in \mathbb{R}^{m \times m}$ and $\mathbf{V}_\mathbf{X} \in \mathbb{R}^{n \times n}$ are orthogonal matrices, and $\mathbf{\Sigma}_\mathbf{X} \in \mathbb{R}^{m \times n}$ is a diagonal matrix. Furthermore, we denote the vector of non-increasingly ordered singular values of a matrix $\mathbf{X} \in {\mathbb{R}^{m \times n}}$ by $\sigma({\mathbf{X}}) \in {\mathbb{R}^{\min \{m,n\}}}^\downarrow$. Without loss of generality, we assume that the diagonal entries in $\mathbf{\Sigma}_\mathbf{X}$ are sorted non-increasingly.
\end{definition}

\subsection{Problem Setup}
We consider the following objective function
\begin{equation}
    \min_{\mathbf{w}\in \mathbb{R}^N}\mathcal{L}(\mathbf{w}) := \norm {\mathbf{M}^\natural-\mathbf{W}_L \mathbf{W}_{L-1}\cdots \mathbf{W}_1}_F^2 +\frac{\lambda}{L}\sum_{i=1}^{L} \norm{\mathbf{W}_i}_F^2,
    \label{mainobjectivefunction}
\end{equation}
where $\mathbf{M}^\natural \in \mathbb{R}^{d_L \times d_0}$ is the matrix of interest, $L \geq 2$ is the depth, $\mathbf{W}_i \in \mathbb{R}^{d_i \times d_{i-1}}$ is the $i^{th} $ factor (layer), $\lambda > 0$ is the regularization parameter, and $\mathbf{w} = \operatorname{vec}(\mathbf{W}_1,\ldots,\mathbf{W}_L)$ denotes the full set of parameters. To guarantee the feasibility of the factorization of every point in $\mathbb{R}^{d_L \times d_0}$, we require $\min_{i} d_i \geq \min \{d_0, d_L \}$. This form of explicit regularization is well known to be equivalent to Schatten-$(2/L)$ regularization~\citep{schatten1,schatten2}. Define the solution set of the optimization criterion as
$\Omega := \argmin_{\mathbf{w} \in \mathbb{R}^{N}} \mathcal{L}(\mathbf{w})$, where $N := \sum_{i=1}^{L} d_i d_{i-1}$ is the total number of parameters.  To simplify the notation for subsequent derivations, we define 
\begin{equation}
    \prod_{j = n}^ m \mathbf{W}_j := \begin{cases}
        \mathbf{W}_m \mathbf{W}_{m-1} \dots {\mathbf{W}_{n}} \quad &\text{if $n \leq m$}, \\
        \mathbf{I}_{d_m} &\text{o.w.}, \forall n,m \in [L],
    \end{cases}
\end{equation}
where $\mathbf{W}_m \in \mathbb{R}^{d_m \times d_{m-1}}$.

Recently, \citet[Lemma~3.2]{chen2025complete} showed that, at each minimizer of $\mathcal{L}$, every layer has exactly the same singular values. We can formulate this observation in the following theorem.
\begin{theorem}[{\cite[Lemma 3.2]{chen2025complete}}]
For any $\mathbf{w}^* \in \Omega$, layers (factors) are balanced, i.e.,
\begin{equation}
\mathbf{W}_i^*\,\mathbf{W}_i^{*\top}
= \mathbf{W}_{i+1}^{*\top}\,\mathbf{W}_{i+1}^* \quad \forall i \in [L-1],
\end{equation}
which implies that layers possess exactly the same singular values. Furthermore, if the singular values are distinct, then their left and right singular vectors align, up to an incurable sign ambiguity.
\label{chenetal}
\end{theorem}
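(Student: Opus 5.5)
The balance identity follows from the first-order optimality condition, combined with the observation that a diagonal rescaling of two adjacent layers leaves the data-fit term unchanged. Every $\mathbf{w}^* \in \Omega$ is a critical point of $\mathcal{L}$, since $\mathcal{L}$ is smooth. Write $\mathbf{R} := \mathbf{W}_L^*\cdots\mathbf{W}_1^* - \mathbf{M}^\natural$. The gradient with respect to layer $i$ is
\begin{equation*}
\nabla_{\mathbf{W}_i}\mathcal{L} = 2\Big(\prod_{j=i+1}^{L}\mathbf{W}_j^*\Big)^{\!\top}\mathbf{R}\Big(\prod_{j=1}^{i-1}\mathbf{W}_j^*\Big)^{\!\top} + \frac{2\lambda}{L}\mathbf{W}_i^* = \mathbf{0}.
\end{equation*}
I multiply the condition for layer $i$ on the right by $\mathbf{W}_i^{*\top}$, and the condition for layer $i+1$ on the left by $\mathbf{W}_{i+1}^{*\top}$. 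In both cases the data-fit term becomes the same matrix,
\begin{equation*}
2\Big(\prod_{j=i+1}^{L}\mathbf{W}_j^*\Big)^{\!\top}\mathbf{R}\Big(\prod_{j=1}^{i}\mathbf{W}_j^*\Big)^{\!\top}.
\end{equation*}
Subtracting the two resulting equations cancels this term and leaves $\frac{2\lambda}{L}\big(\mathbf{W}_i^*\mathbf{W}_i^{*\top}-\mathbf{W}_{i+1}^{*\top}\mathbf{W}_{i+1}^*\big)=\mathbf{0}$. Since $\lambda>0$, this is the balance identity for every $i\in[L-1]$. Only stationarity is used here, so the identity in fact holds at every critical point, and in particular on $\Omega$.

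For the singular values, $\mathbf{W}_i^*\mathbf{W}_i^{*\top}$ has as its nonzero eigenvalues the squared nonzero singular values of $\mathbf{W}_i^*$. Likewise, $\mathbf{W}_{i+1}^{*\top}\mathbf{W}_{i+1}^*$ has as its nonzero eigenvalues the squared nonzero singular values of $\mathbf{W}_{i+1}^*$. Equality of these two $d_i\times d_i$ matrices gives equal nonzero singular values, with multiplicity, for adjacent layers. Chaining over $i$ gives equality for all layers, up to padding with zeros, because the layer dimensions differ.

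For alignment, assume the nonzero singular values are distinct. Let $\mathbf{W}_i^* = \mathbf{U}_i\boldsymbol{\Sigma}_i\mathbf{V}_i^\top$. The matrix $\mathbf{W}_i^*\mathbf{W}_i^{*\top}$ has eigenvectors given by the left singular vectors $\mathbf{U}_i$, and $\mathbf{W}_{i+1}^{*\top}\mathbf{W}_{i+1}^*$ has eigenvectors given by the right singular vectors $\mathbf{V}_{i+1}$. Because these are the same symmetric matrix and each eigenspace for a nonzero distinct eigenvalue is one-dimensional, the corresponding columns satisfy $\mathbf{u}_{i,k} = \pm\mathbf{v}_{i+1,k}$. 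The sign cannot be removed, since flipping the sign of a pair $(\mathbf{u}_{i,k},\mathbf{v}_{i,k})$ leaves the SVD unchanged.

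The main obstacle is bookkeeping rather than conceptual. I must state the singular-value claim carefully when the $d_i$ differ, by comparing nonzero singular values only. I must also restrict the alignment statement to the nonzero singular values, since the null-space directions are not determined. No earlier result is needed beyond smoothness of $\mathcal{L}$.
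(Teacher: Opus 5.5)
The paper gives no proof of this statement; it imports it from \citet[Lemma~3.2]{chen2025complete}. Your argument is correct and is the standard first-order-condition derivation of this fact: multiplying the layer-$i$ stationarity equation on the right by $\mathbf{W}_i^{*\top}$, and the layer-$(i+1)$ equation on the left by $\mathbf{W}_{i+1}^{*\top}$, yields the common term $2\bigl(\prod_{j=i+1}^{L}\mathbf{W}_j^*\bigr)^{\top}\mathbf{R}\bigl(\prod_{j=1}^{i}\mathbf{W}_j^*\bigr)^{\top}$, which cancels, and this in fact holds at every critical point. Your restriction of the ``same singular values'' and alignment claims to the nonzero singular values, since the widths $d_i$ differ, is the right way to make the paper's informal phrasing precise.
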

\noindent
Furthermore, we can formulate an equivalent optimization problem to (\ref{mainobjectivefunction}) by using the variational form of the Schatten-$2/L$ quasi-norm (see \Cref{appendix:schattenvariational}, \Cref{theorem:variationalform}).
\begin{theorem}
    Consider the following optimization objective
    \begin{equation}
        \min_{\mathbf{M}\in \mathbb{R}^{d_L\times d_0}} \mathcal{L}_{\mathbf{M}^{\natural}}(\mathbf{M}) := \norm{\mathbf{M}^{\natural}-\mathbf M}_F^2 + \lambda\norm{\mathbf M}_{\mathcal{S}^{2/L}}^{2/L},
        \label{equiavlentform}
    \end{equation}
  where $\mathbf{M}^\natural\in \mathbb{R}^{d_L \times d_0}$ is the target matrix, $\lambda > 0$ is the regularization parameter, $L \geq 1$ denotes the depth. Also, consider the optimization objective $\mathcal{L}(\mathbf{w})$ defined in (\ref{mainobjectivefunction}). Denote by 
  \begin{equation}
      S := \argmin_{\mathbf{M}\in \mathbb{R}^{d_L \times d_0}}\mathcal{L}_{\mathbf{M}^{\natural}}(\mathbf{M}) \quad \text{and} \quad  \Omega:=\argmin_{\mathbf{w}\in \mathbb{R}^N} \mathcal{L}(\mathbf{w})
  \end{equation}
  the set of minimizers of the optimization problems in (\ref{equiavlentform}) and (\ref{mainobjectivefunction}), respectively. Then for any $\mathbf{w}^* \in \Omega$, $\prod_{i=1}^L \mathbf{W}_i^* \in S$. Furthermore, for any $\mathbf{M}^* \in S$, there exists $\mathbf{w}^* \in \Omega$ such that $\prod_{i=1}^L \mathbf{W}_i^* = \mathbf{M}^*$.
  \label{equivalentoptimization}
\end{theorem}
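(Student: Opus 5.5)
The plan is to derive the equivalence from the variational characterization of the Schatten-$2/L$ quasi-norm (\Cref{theorem:variationalform}). For every $\mathbf{M}\in\mathbb{R}^{d_L\times d_0}$, and under the standing width assumption $\min_i d_i \ge \min\{d_0,d_L\}$, that characterization reads
\begin{equation*}
\norm{\mathbf M}_{\mathcal{S}^{2/L}}^{2/L} \;=\; \min_{\mathbf{w}\,:\,\prod_{i=1}^L \mathbf{W}_i = \mathbf{M}} \frac{1}{L}\sum_{i=1}^L \norm{\mathbf{W}_i}_F^2,
\end{equation*}
and the minimum is attained. If only an infimum is available, I would establish attainment directly. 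Take the SVD $\mathbf{M}=\mathbf{U}_\mathbf{M}\boldsymbol{\Sigma}_\mathbf{M}\mathbf{V}_\mathbf{M}^\top$ with $r=\min\{d_0,d_L\}$ nonzero-padded singular values, and build balanced factors $\mathbf{W}_i$ that carry $\boldsymbol{\Sigma}_\mathbf{M}^{1/L}$ on an $r$-dimensional block, with $\mathbf{V}_\mathbf{M}$ in $\mathbf{W}_1$, $\mathbf{U}_\mathbf{M}$ in $\mathbf{W}_L$, and zero padding in the intermediate widths. These factors achieve $\frac1L\sum_i\norm{\mathbf W_i}_F^2=\sum_j\sigma_j(\mathbf M)^{2/L}$. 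The matching lower bound is the AM--GM inequality combined with the generalized Hölder inequality for Schatten quasi-norms, $\norm{\prod_i \mathbf{W}_i}_{\mathcal S^{2/L}}\le\prod_i\norm{\mathbf W_i}_{\mathcal S^2}$. Together these give
\begin{equation*}
\norm{\mathbf M}_{\mathcal S^{2/L}}^{2/L}\le\prod_i\norm{\mathbf W_i}_F^{2/L}\le\frac1L\sum_i\norm{\mathbf W_i}_F^2 .
\end{equation*}

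With this identity, the two objectives are linked pointwise. For any $\mathbf{w}$ with product $\mathbf{M}=\prod_{i=1}^L\mathbf{W}_i$ we get $\mathcal{L}(\mathbf{w})\ge\mathcal{L}_{\mathbf{M}^\natural}(\mathbf{M})$, with equality for the minimizing factorization. Taking infima over all $\mathbf{w}$ then gives $\inf_\mathbf{w}\mathcal{L}=\inf_\mathbf{M}\mathcal{L}_{\mathbf{M}^\natural}$, since every $\mathbf{M}$ is realizable by the width assumption.

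Next I need $S\neq\emptyset$. The function $\mathcal{L}_{\mathbf{M}^\natural}$ is continuous, because singular values depend continuously on $\mathbf M$ and $t\mapsto t^{2/L}$ is continuous on $[0,\infty)$. It is also coercive, because of the Frobenius term. A minimizer therefore exists. The same argument applies to $\mathcal L$, which is continuous and coercive because of the $\lambda>0$ weight penalty, so $\Omega\neq\emptyset$ as well.

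Both claims now follow.
\begin{itemize}
\item Let $\mathbf{w}^*\in\Omega$ and $\mathbf{M}=\prod_i\mathbf W_i^*$. Then $\mathcal{L}_{\mathbf{M}^\natural}(\mathbf{M})\le\mathcal{L}(\mathbf{w}^*)=\min\mathcal L=\min\mathcal{L}_{\mathbf{M}^\natural}$, so $\mathbf{M}\in S$.
\item Let $\mathbf{M}^*\in S$, and take $\mathbf{w}^*$ to be the attaining balanced factorization of $\mathbf{M}^*$. Then $\mathcal{L}(\mathbf{w}^*)=\mathcal{L}_{\mathbf{M}^\natural}(\mathbf{M}^*)=\min\mathcal{L}$, so $\mathbf{w}^*\in\Omega$.
\end{itemize}

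The main obstacle is the rigorous lower bound in the variational formula, namely the Schatten-quasi-norm Hölder inequality for products of rectangular matrices when $2/L<1$. I would prove it by induction on $L$ using Horn's multiplicative majorization of singular values of products, $\prod_{j\le k}\sigma_j(\mathbf{AB})\le\prod_{j\le k}\sigma_j(\mathbf A)\sigma_j(\mathbf B)$, which implies weak log-majorization. Since $t\mapsto t^{p}$ composed with $\exp$ is convex and increasing, this log-majorization yields the Hölder bound with exponents $1/p=\sum_i 1/p_i$, where $p_i=2$ and $p=2/L$. Rectangular shapes are handled by zero-padding to square matrices. The case $L=1$ is trivial, since then $\mathcal{L}$ and $\mathcal{L}_{\mathbf{M}^\natural}$ coincide.
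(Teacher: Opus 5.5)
Your proposal is correct and follows essentially the same route as the paper. The variational form of the Schatten-$2/L$ quasi-norm (proved in the appendix via log-majorization of singular values of products, AM--GM, and a balanced SVD-based factorization) shows that minimizing $\mathcal{L}$ over factorizations of a fixed product $\mathbf{M}$ gives $\mathcal{L}_{\mathbf{M}^\natural}(\mathbf{M})$, and the width condition then collapses the nested minimum into a single minimum over $\mathbf{w}$. Your extra details (existence of minimizers by coercivity, zero-padding in the intermediate widths, and Horn's inequality for every $k$ rather than only the full product) fill in points the paper leaves implicit without changing the argument.
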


\begin{proof}
By using the variational form of the Schatten-$2/L$ quasi-norm (see \Cref{theorem:variationalform}), minimizing $\mathcal{L}(\mathbf{M})$ is equivalent to 
\begin{equation}
    \min_{\mathbf{M}\in \mathbb{R}^{d_L\times d_0}} \left \{\norm{\mathbf{M}^{\natural}-\mathbf M}_F^2 + \min_{\substack{\mathbf{W}_1, \cdots,\mathbf{W}_L: \\
        \mathbf{W}_L \cdots \mathbf{W}_1 = \mathbf{M}}} \frac{\lambda}{L} \sum_{i=1}^L \norm{\mathbf{W}_i}_F^2\right \}.
\end{equation}
This is equivalent to 
\begin{align}
    &\min_{\mathbf{M}\in \mathbb{R}^{d_L\times d_0}} \left \{ \min_{\substack{\mathbf{W}_1, \cdots,\mathbf{W}_L: \\
        \mathbf{W}_L \cdots \mathbf{W}_1 = \mathbf{M}}}\left \{\norm{\mathbf{M}^{\natural}-\mathbf M}_F^2 +  \frac{\lambda}{L} \sum_{i=1}^L \norm{\mathbf{W}_i}_F^2\right \} \right \} \\ 
        =&\min_{\mathbf{M}\in \mathbb{R}^{d_L\times d_0}} \left \{ \min_{\substack{\mathbf{W}_1, \cdots,\mathbf{W}_L: \\
        \mathbf{W}_L \cdots \mathbf{W}_1 = \mathbf{M}}}\left \{\norm{\mathbf{M}^{\natural}-\mathbf{W}_L \mathbf{W}_{L-1} \cdots \mathbf{W}_1}_F^2 +  \frac{\lambda}{L} \sum_{i=1}^L \norm{\mathbf{W}_i}_F^2\right \} \right \}.
\end{align}
The condition 
\begin{equation}
    \min_{i} d_i \geq \min \{d_0,d_L\} \quad  \forall i \in [L]
\end{equation}
guarantees the feasibility of factorization for all points in $\mathbb{R}^{d_L \times d_0}$. Therefore, we can rewrite the optimization objective as follows.
\begin{equation}
    \min_{\mathbf{w} \in \mathbb{R}^N} \left\{ \norm{\mathbf{M}^{\natural}-\mathbf{W}_L \mathbf{W}_{L-1} \cdots \mathbf{W}_1}_F^2 + \frac{\lambda}{L} \sum_{i=1}^L \norm{\mathbf{W}_i}_F^2 \right \}. 
\end{equation}
\end{proof}

\begin{remark}
This means that we can examine the set of minimizers of the Schatten-$2/L$ regularized problem in (\ref{equiavlentform}) to understand the structure of the set of functions represented by the global minimizers of the $\ell^2$-regularized deep matrix factorization problem in (\ref{mainobjectivefunction}).   
\end{remark}

\section{Uniqueness of the End-to-End Minimizer in Regularized Deep Matrix Factorization}
\label{sec:mainresults}
\subsection{Deep Scalar Factorization}
Before delving into the more general results, we first consider the problem of $\ell^2$-regularized deep scalar factorization. 
\begin{equation}
    \mathcal{L}(\mathbf{w}) := \underbrace{(m-w_L w_{L-1}\cdots w_1)^2}_{=:\, D(\mathbf{w})}+ \underbrace{\frac{\lambda}{L}\sum_{i=1}^{L} w_i^2}_{=:\, R(\mathbf{w})},
    \label{oopregsc}
\end{equation}
where $m \in \mathbb{R}$ denotes the scalar of interest, $L \geq 1$ is the depth, $w_i \in \mathbb{R}$ is the $i^{\text{th}}$ factor (layer), $\lambda > 0$ is the regularization parameter, and $\mathbf{w} = (w_1, \ldots, w_L) \in \mathbb{R}^L$ denotes the full set of parameters. Define the solution set of the optimization criterion as
$\Omega := \arg\min_{\mathbf{w} \in \mathbb{R}^{L}} \mathcal{L}(\mathbf{w})$. We observed that for any $m$ except two, $\mathcal{L}$ has a unique end-to-end minimizer.

\begin{theorem}
    Consider the objective function for the $\ell^2$-regularized deep scalar factorization problem in~(\ref{oopregsc}). Define $q := 2/L$. For any $m \in \mathbb{R} \setminus \left \{\pm\left (1-\frac{q}{2} \right)\lambda^{\frac{1}{2-q}}\left(1-q\right)^{\frac{q-1}{2-q}}  \right \}$, $\mathcal{L}$ has a unique end-to-end minimizer. Furthermore, the end-to-end minimizer is characterized as 
    \begin{equation}
\prod_{i=1}^L w^*_i = 
\begin{cases}
0
&|m| < \left (1-\frac{q}{2} \right)\lambda^{\frac{1}{2-q}}\left(1-q\right)^{\frac{q-1}{2-q}},\\
\rho^*(m)
&|m| > \left (1-\frac{q}{2} \right)\lambda^{\frac{1}{2-q}}\left(1-q\right)^{\frac{q-1}{2-q}},\\

\left \{0, \rho^*(m) \right \}
&|m|= \left (1-\frac{q}{2} \right)\lambda^{\frac{1}{2-q}}\left(1-q\right)^{\frac{q-1}{2-q}},
\end{cases}
\end{equation}
where $\rho^*$(m) is the possible unique minimizer other than $0$.
\label{theorem:uniqueminscalar}
\end{theorem}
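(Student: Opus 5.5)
Via \Cref{equivalentoptimization} specialized to $d_0=\dots=d_L=1$, the end-to-end products of minimizers of (\ref{oopregsc}) are exactly the minimizers of the scalar problem
\begin{equation*}
f(\rho) := (m-\rho)^2 + \lambda |\rho|^{q}, \qquad q = 2/L \in (0,1],
\end{equation*}
so the plan is to study this one-dimensional function. By symmetry ($f$ with $m$ replaced by $-m$ is $f(-\rho)$) I assume $m \ge 0$. Any minimizer then satisfies $\rho \ge 0$, since $f(|\rho|)\le f(\rho)$ with strict inequality when $\rho<0$ and $m>0$. For $m=0$ the unique minimizer is clearly $0$. The case $L=1$ ($q=2$) is strictly convex; the formula for $q=2$ degenerates, so I treat it separately or exclude it as $L\ge 2$. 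The case $L=2$ ($q=1$) is soft thresholding, and the threshold $\lambda/2$ matches the formula with the convention $0^0=1$.

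For $q<1$ and $\rho>0$, I would rewrite $f'(\rho)=0$ as $g(\rho):=\rho + \tfrac{\lambda q}{2}\rho^{q-1} = m$. The function $g$ is strictly convex on $(0,\infty)$ and tends to $\infty$ at both ends, with a unique minimizer $\rho_0=(\lambda q(1-q)/2)^{1/(2-q)}$. Hence the equation has no positive root if $m<g(\rho_0)$, and exactly two roots $\rho_1<\rho_0<\rho_2$ if $m>g(\rho_0)$. Since $f''(\rho)=2-\lambda q(1-q)\rho^{q-2}$ is negative for $\rho<\rho_0$ and positive for $\rho>\rho_0$, the root $\rho_1$ is a local maximum and $\rho_2=:\rho^*(m)$ is the only positive local minimizer. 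The global minimizer therefore lies in $\{0,\rho^*(m)\}$, and the whole question reduces to comparing $f(0)=m^2$ with $f(\rho^*(m))$.

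The key step, and the main obstacle, is showing that
\begin{equation*}
h(m):=f(\rho^*(m))-m^2
\end{equation*}
changes sign exactly once, at the stated threshold. By the envelope theorem, $h'(m)=2(m-\rho^*(m))-2m=-2\rho^*(m)<0$, so $h$ is strictly decreasing wherever $\rho^*$ is defined. Thus there is at most one $m$ with $h(m)=0$: below it $0$ is the unique minimizer, above it $\rho^*(m)$ is, and at it both are minimizers, which gives exactly the three cases of the theorem.

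To locate the crossing, I would solve the system $f(\rho)=m^2$ and $f'(\rho)=0$ with $\rho>0$:
\begin{equation*}
\rho^2-2m\rho+\lambda\rho^q=0, \qquad \rho-m+\tfrac{\lambda q}{2}\rho^{q-1}=0.
\end{equation*}
Dividing the first equation by $\rho$ and subtracting twice the second gives $-\rho+\lambda(1-q)\rho^{q-1}=0$, so $\rho=(\lambda(1-q))^{1/(2-q)}$. Substituting back gives
\begin{equation*}
m=\rho+\tfrac{\lambda q}{2}\rho^{q-1}=\left(1-\tfrac{q}{2}\right)\lambda^{1/(2-q)}(1-q)^{(q-1)/(2-q)}.
\end{equation*}
Finally, I would check that this $m$ exceeds $g(\rho_0)$, so that $\rho^*$ is defined there. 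This holds because the crossing point $\rho=(\lambda(1-q))^{1/(2-q)}$ is larger than $\rho_0$ (since $1-q>q(1-q)/2$), and therefore it is $\rho_2$ rather than $\rho_1$. For $m$ in $(g(\rho_0),\text{threshold})$ we get $h>0$ by monotonicity, and for $m\le g(\rho_0)$ no positive critical point exists, so $0$ is the unique minimizer in both ranges.
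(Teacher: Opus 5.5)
Your proposal is correct and follows essentially the paper's route. Both reduce to minimizing $(m-\rho)^2+\lambda|\rho|^{q}$ (you via \Cref{equivalentoptimization}, the paper via the balancedness in \Cref{chenetal}), restrict to $\rho\ge 0$ by the sign argument, use strict convexity of $\phi'$ (your $g$) to get a single positive local minimizer, and solve the same two-equation system to locate the tie at $\rho=(\lambda(1-q))^{1/(2-q)}$.

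Your envelope computation $h'(m)=-2\rho^*(m)<0$, together with the check that $(\lambda(1-q))^{1/(2-q)}>\rho_0$, supplies something the paper only asserts: which side of the threshold gives $0$ and which gives $\rho^*(m)$, and that the tie is actually attained. One small slip: at $m=g(\rho_0)$ there is one degenerate critical point $\rho_0$. This is harmless, because $f'\ge 0$ with equality only there, so $f$ is still strictly increasing on $[0,\infty)$.
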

\noindent

\begin{figure*}
    \centering

    \begin{subfigure}[t]{0.30\textwidth}
        \centering
        \includegraphics[width=\linewidth]{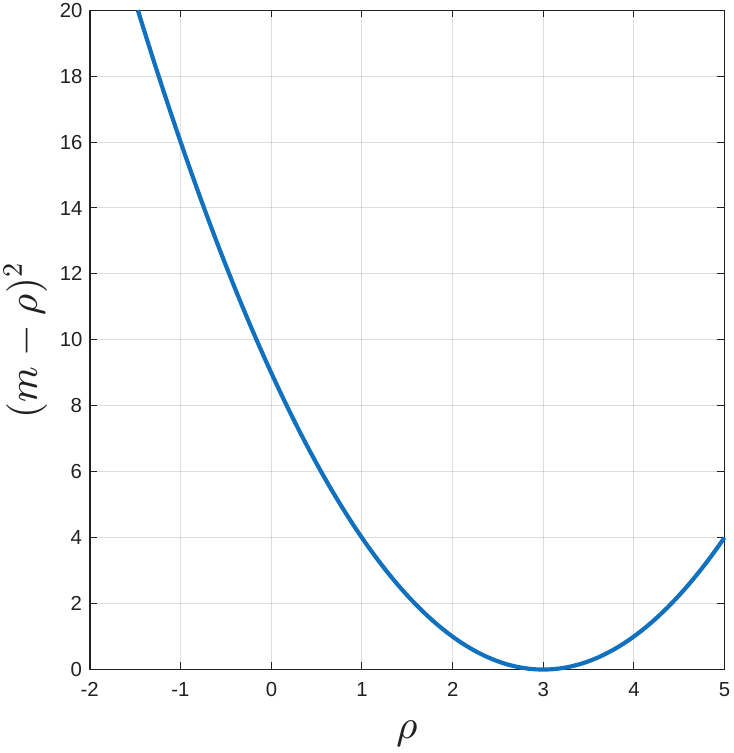}
        \caption{$D(\mathbf{\rho}) = (3-\rho)^2$.}
        \label{fig:fig1a}
    \end{subfigure}
    \hfill
    \begin{subfigure}[t]{0.30\textwidth}
        \centering
        \includegraphics[width=\linewidth]{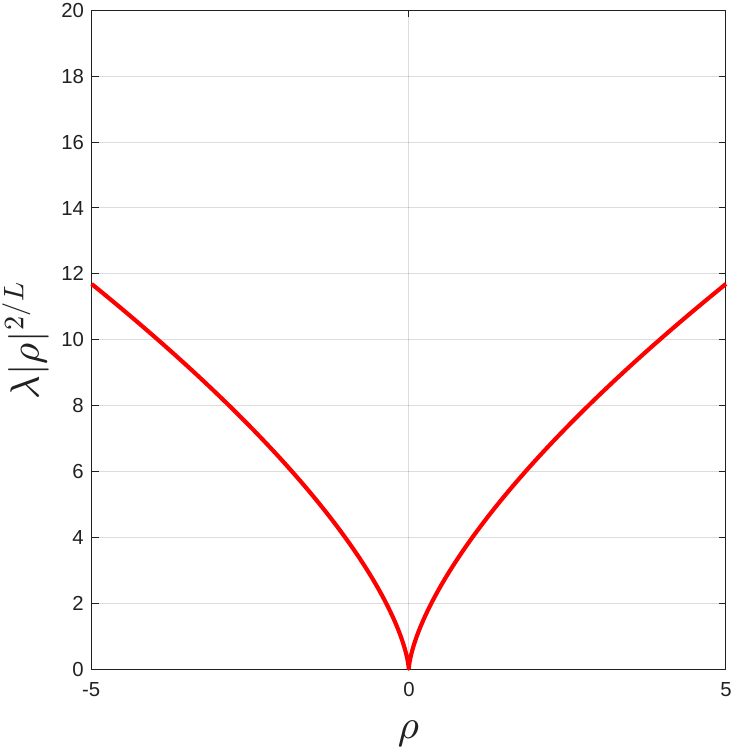}
        \caption{$R(\mathbf{\rho})=4|\rho|^{2/3}$.}
        \label{fig:fig1b}
    \end{subfigure}
    \hfill
    \begin{subfigure}[t]{0.30\textwidth}
        \centering
        \includegraphics[width=\linewidth]{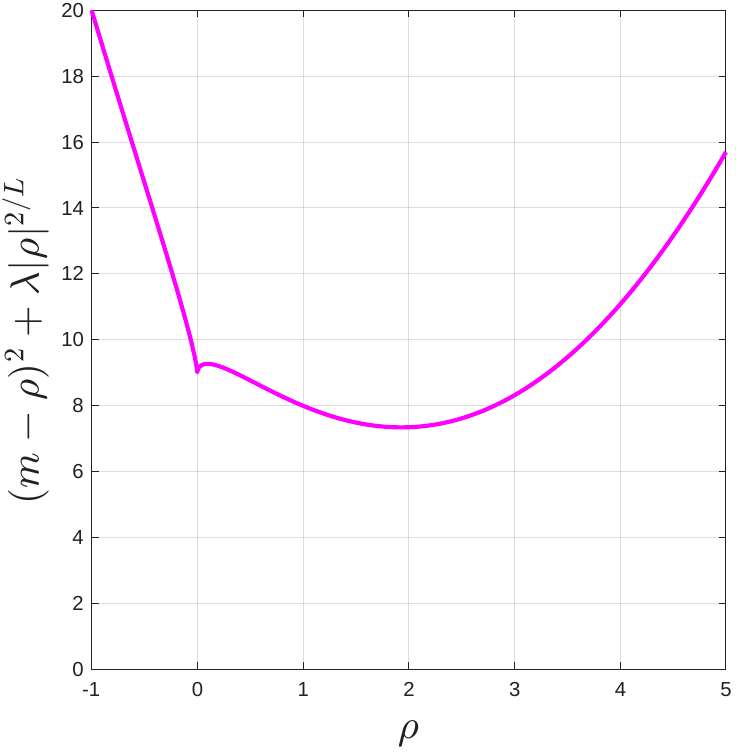}
        \caption{Objective function $\phi(\rho)$.}
        \label{fig:fig1c}
    \end{subfigure}
    \caption{Behavior of the data-fitting term $D(\rho)$, the regularization term $R(\rho)$, and the optimization objective $\phi(\rho)$ for a depth-$3$ factorization of $3$ with $\lambda=4$.}
    \label{fig:figure1}
\end{figure*}

\begin{proof}
As we formally stated in Theorem~\ref{chenetal}, a recent observation by \citet[Lemma~3.2]{chen2025complete} showed that, at each minimizer of $\mathcal{L}$, layers (factors) are balanced across all minimizers, i.e., $|w^*_L| = |w^*_{L-1}| = \cdots = |w^*_1|$ for all $\mathbf{w}^* \in \Omega$. Because of this, we can restrict the optimization to a subspace of $\mathbb{R}^L$ where all points are balanced and formulate an equivalent optimization problem to~\eqref{oopregsc}; that is
\begin{equation}
 \min_{\rho \in \mathbb{R}} \underbrace{(m- \rho)^2}_{D(\rho)} + \underbrace{\lambda |\rho| ^{2/L}}_{R(\rho)}, 
 \label{newformulation}
\end{equation}
where $\rho$ denotes the end-to-end product. An instance of this objective is shown in Fig. \ref{fig:figure1}. Therefore, to prove Theorem~\ref{theorem:uniqueminscalar}, it is sufficient to show that the minimizer of \eqref{newformulation} is unique. Note that for $L \in \{1,2\}$, the objective function is strictly convex; therefore, the optimal solution $\rho^*$ is unique. Hence, it is trivial to examine the case where $L \in \{1,2\}$. Now, suppose $L \geq 3$. For the sake of the method of exhaustion, we investigate the behavior of the solution set of the new formulation case by case. We first assume that $m = 0$. Then 
\begin{equation}
    \rho^2 +\lambda |\rho|^{2/L} \geq 0.
\end{equation}
Since $\rho^2$ and $|\rho|^{2/L}$ are both nonnegative, the lower bound is achieved if and only if $\rho = 0$. Therefore, $\rho^*$ is unique. If $m \neq 0$, we can investigate the problem in two cases, where $m > 0$ and $m < 0$. Before delving into our analysis, to simplify the notation, let us define $\phi:\mathbb{R} \rightarrow \mathbb{R}$ and $q$ such that $\phi(\rho) := (m- \rho)^2 + \lambda |\rho| ^{q}$ and $q := 2/L$, where $L \geq 3$. 

\begin{figure*}
    \centering

    \begin{subfigure}[t]{0.49\textwidth}
        \centering
        \includegraphics[width=\linewidth]{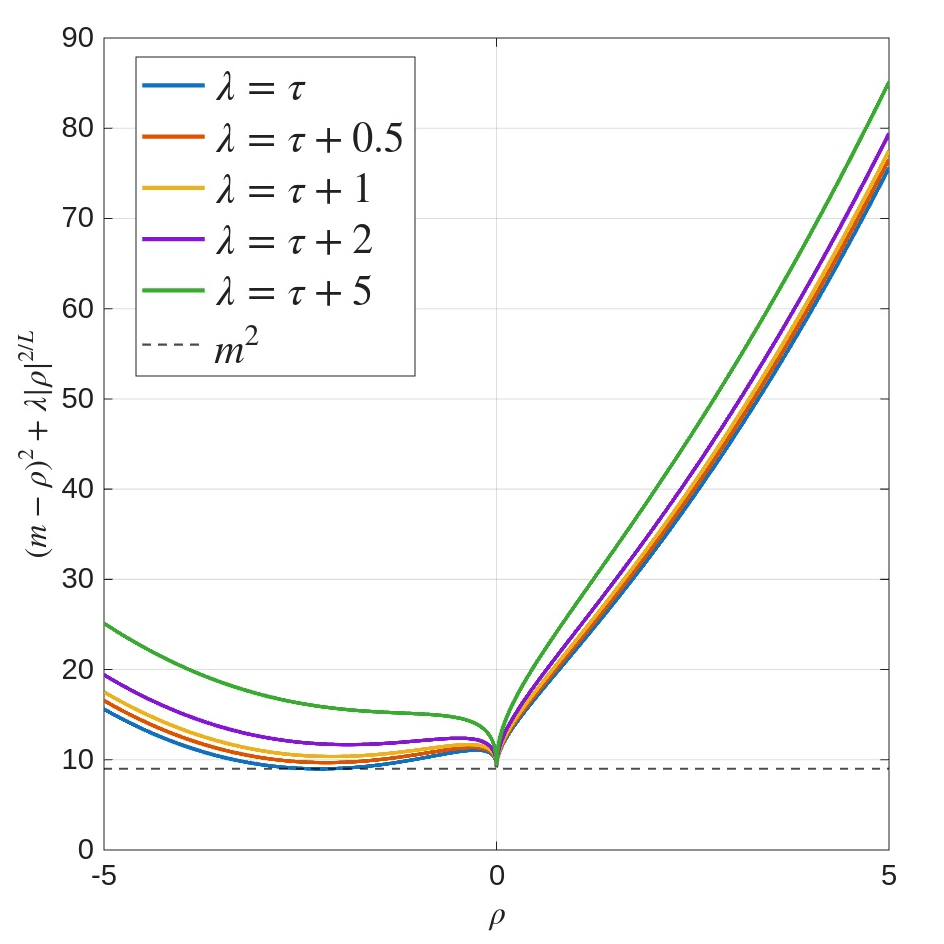}
    \end{subfigure}
    \hfill
    \begin{subfigure}[t]{0.49\textwidth}
        \centering
        \includegraphics[width=\linewidth]{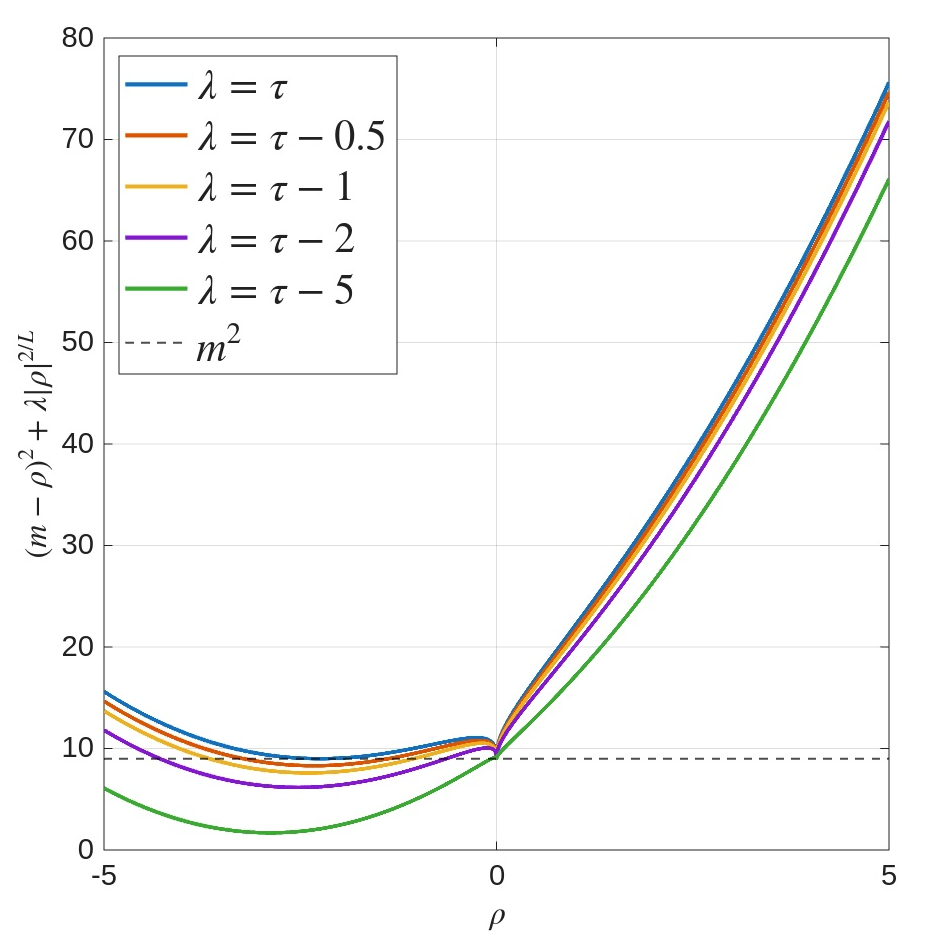}
    \end{subfigure}
    \caption{Behavior of $\phi(\rho)$ for a depth-$5$ factorization of $-3$ under different regularization parameters. Threshold is computed as $\tau = \left(|m|/\left (1-\frac{q}{2} \right)\left(1-q\right)^{({q-1})/({2-q})}\right)^{(2-q)}$, where $q := 2/L$.}
    \label{fig:figure2}
\end{figure*}

Now, we investigate the case where $m > 0$. For any minimizer $\rho^*$, we have that
\begin{equation}
    \phi(\rho^*) - \phi(-\rho^*) \leq 0.
\end{equation}
Suppose that $\rho^* < 0$. Then,
\begin{equation}
   \phi(\rho^*) - \phi(-\rho^*) = (m-\rho^*)^2 + \lambda|\rho^*|^q - (m+\rho^*)^2 - \lambda|\rho^*|^q = -4m\rho^* > 0, 
\end{equation}
which is a contradiction. Therefore, for the case $m > 0$, $\rho^*$ must be larger than or equal to $0$. We now consider the case $m < 0$. Suppose that $\rho^* > 0$. By symmetry, this also leads to a contradiction. Hence, for any minimizer $\rho^*$, we have that
\begin{equation}
    \operatorname{sign}(\rho^*) = \left \{ \operatorname{sign}(m), 0 \right \}.
    \label{signn}
\end{equation}
Therefore, for any $m \geq 0$, it is sufficient to prove that $\phi(\rho)$ has a unique minimizer over $\rho \geq 0$. Furthermore, note that $|\rho|^q$ is symmetric as shown in Fig. \ref{fig:fig1b}, and for any $m \in \mathbb{R}$ such that $f(\rho) = (\rho-m)^2$, we have $f(-\rho) = (\rho+m)^2$. This means that showing $\phi(\rho)$ with $m \geq 0$  has a unique minimizer over $\rho \geq 0$ is sufficient to prove that $\phi(\rho)$ has a unique minimizer for all $m \in \mathbb{R}$. Hence, let us define
\begin{equation}
    \phi(\rho) = (m-\rho)^2 + \lambda \rho^q, \quad \rho, m > 0,
\end{equation}
where $0 < q  < 1$. Note that
\begin{align}
    &\lim_{\rho \rightarrow 0} \phi (\rho) = m^2 \quad \text{and} \quad \lim_{\rho \rightarrow \infty} \phi(\rho) = \infty,\\ 
    &\lim_{\rho \rightarrow 0} \phi' (\rho) = \infty \quad \text{and} \quad \lim_{\rho \rightarrow \infty} \phi'(\rho) = \infty, \\ 
    &\lim_{\rho \rightarrow 0} \phi'' (\rho) = -\infty \quad \text{and} \quad \lim_{\rho \rightarrow \infty} \phi''(\rho) = 2, \label{according} \\
    & \phi'''(\rho) = \lambda q(q-1)(q-2)\rho^{q-3}.
\end{align}
Note that $\phi'(\rho)$ is strictly convex and $\phi''(\rho)$ is strictly increasing since $\phi'''(\rho)$ is positive for all $\rho > 0$. Therefore, $\phi''(\rho)$ intersects the $\rho$-axis at a single point. This implies that $\phi'(\rho)$ has a single local minimum and intersects the $\rho$-axis at most at two points. If $\phi'(\rho)$ intersects the $\rho$-axis at most at one point, i.e.,  $\phi'(\rho) \geq 0$ for all $\rho > 0$, then $\phi(\rho)$ is monotonically increasing and $\rho^* = 0$.  This implies that $\rho^* = 0$ is the unique minimizer. Otherwise, $\phi(\rho)$ has one local maximum and one local minimum for $\rho > 0$, respectively. Let us denote the local minimum by $\rho_m$. Then

\begin{equation}
\arg\min_{\rho \geq 0} \phi(\rho) = 
\begin{cases}
\rho_m 
&\phi(\rho_m) < \phi(0),\\
0
&\phi(\rho_m) > \phi(0),\\

\left \{0, \rho_m \right \}
&\phi(\rho_m) = \phi(0).
\end{cases}
\end{equation}
Now, we investigate the case where $\phi(\rho_m) = \phi(0)$. We know that any critical point $\rho_*$ of $\phi(\rho)$ satisfies $\phi '(\rho_*) = 0$. Therefore, we have two equations to solve as follows. 
\begin{equation}
    -2(m-\rho_m) + \lambda q \rho_m^{q-1} = 0 \quad \text{and} \quad m^2 = (m-\rho_m)^2 + \lambda \rho_m^{q}.
\end{equation}
These equations lead to 

\begin{equation}
    \rho_m = \left ( \lambda(1-q)\right )^{1/(2-q)} \quad \text{and} \quad m= \frac{2-q}{2(1-q)}\rho_m. 
\end{equation}
Hence, $\phi(\rho)$ has a unique minimizer if and only if $m \neq \frac{2-q}{2(1-q)}\rho_m$. Note that when $m \leq 0$, the minimizer is unique if and only if $m \neq - \frac{2-q}{2(1-q)}\rho_m$. Therefore, for any $m \in \mathbb{R} \setminus \left \{\pm\frac{2-q}{2(1-q)}\rho_m  \right \}$, $\phi(\rho)$ has a unique minimizer. Furthermore, this implies that 

\begin{equation}
\arg\min_{\rho \in \mathbb{R}} \phi(\rho) = 
\begin{cases}
0
&|m| < \left (1-\frac{q}{2} \right)\lambda^{\frac{1}{2-q}}\left(1-q\right)^{\frac{q-1}{2-q}},\\
\rho^*(m)
&|m| > \left (1-\frac{q}{2} \right)\lambda^{\frac{1}{2-q}}\left(1-q\right)^{\frac{q-1}{2-q}},\\

\left \{0, \rho^*(m) \right \}
&|m|= \left (1-\frac{q}{2} \right)\lambda^{\frac{1}{2-q}}\left(1-q\right)^{\frac{q-1}{2-q}}.
\end{cases}
\end{equation}
where $\rho^*(m)$ is the possible unique minimizer other than $0$.
\end{proof}
As shown in Fig. \ref{fig:figure2}, when $|m| = \left \{\pm\left (1-\frac{q}{2} \right)\lambda^{\frac{1}{2-q}}\left(1-q\right)^{\frac{q-1}{2-q}}  \right \}$, the end-to-end minimizer is not unique. This result was also observed by \citet{chen2016computing} in the context of computing the proximal operator for the $\ell^p$ quasi-norm. Furthermore, this result leads to a remarkable observation about the geometry of the loss landscape of the $\ell^2$-regularized deep scalar factorization problem near a minimizer. 

\subsubsection{Hessian Spectrum is Constant Across All Minimizers}

Despite the fact that deep learning architectures are heavily overparameterized, they are quite capable of finding benign solutions. This success is usually attributed to the \emph{implicit} regularization mechanisms within neural network training. The implicit regularization refers to the regularization imposed on the optimization objective by the optimization algorithm or the model architecture. Therefore, tuning the level of the implicit regularization imposed on the optimization objective is much more complicated than tuning the level of \emph{explicit regularization}. The explicit regularization is generally realized by incorporating a complexity penalty into the loss function scaled by a hyperparameter. Therefore, tuning the level of explicit regularization boils down to tuning the regularization parameter.

Most notably, one of the implicit regularization mechanisms within neural network training is associated with the \emph{dynamical stability} of gradient-based optimization algorithms near minima. It has been shown that \emph{sharp minima} can be dynamically unstable for gradient-based methods; therefore, they avoid sharp minima and converge to \emph{flat minima} \citep{Wustab}, and flat minima generalize well \citep{hochreiter1997flat}. However, the link between flatness and generalization has been obfuscated: In a large-scale empirical analysis, \citet{Jiang2020Fantastic} evaluated a range of complexity measures for deep networks and observed that sharpness-based measures were most strongly correlated with generalization. However, it is important to note that correlation does not imply causation. Moreover, there is also theoretical evidence for this phenomenon in low-rank matrix recovery~\citep{ding2024flat}. On the other hand, \citet{sharpminimacangeneralize} demonstrated that even good minima in deep neural networks can be arbitrarily sharp, and \citet{xu2026doessgdseekflatness} recently showed that stochastic gradient descent (SGD) does not seek flat minima intrinsically. 

To understand the relationship between generalization ability and flatness, several metrics have been proposed to measure the sharpness/flatness of a minimum. One class of such metrics is the \emph{Hessian-based} measures. There are two measures in this class: the maximum eigenvalue of the Hessian matrix of the loss (worst-case sharpness) and its trace (average sharpness). Unfortunately, there are no exact expressions for these metrics in general settings. To this end, \citet{kamber2026sharpnessminimadeepmatrix} presented an exact expression for the maximum eigenvalue of the Hessian matrix of the squared-error loss evaluated at any minimizer in deep matrix factorization problems. \cite{mulayoff} derived an exact expression for the maximum eigenvalue of the Hessian at flat minima for deep linear neural networks. \citet{ethpaper} provided a full characterization of the Hessian spectrum at an arbitrary point in parameter space for one-hidden layer linear and ReLU networks designed for scalar regression. 

\Cref{theorem:uniqueminscalar} enables us to characterize the full Hessian spectrum of the problem of deep scalar factorization with explicit $\ell^2$ regularization. Moreover, we show that the Hessian spectrum is constant across all minima, and that both the maximum eigenvalue of the Hessian and its trace depend on the depth, the magnitude of the optimal layers, and the regularization parameter. This implies that under Hessian-based measures, \emph{all global minima are equally flat almost always} in the regularized deep scalar factorization problem. To the best of our knowledge, \Cref{theorem:hessianspectrumisconstant} provides the first complete characterization of the Hessian spectrum across minima in deep-factorization-type problems.
\begin{theorem}
Consider the deep scalar factorization objective 
\begin{equation}
    \mathcal{L}(\mathbf{w}) := (m-w_L w_{L-1}\cdots w_1)^2 + \frac{\lambda}{L}  \sum_{i=1}^{L} w_i^2,
\end{equation}
where $w_1,w_2,\cdots,w_L \in \mathbb{R}$ and $\mathbf{w} = (w_1, \ldots, w_L) \in \mathbb{R}^L$. Define $q := 2/L$. Suppose $m \in \mathbb{R} \setminus \left \{\pm\left (1-\frac{q}{2} \right)\lambda^{\frac{1}{2-q}}\left(1-q\right)^{\frac{q-1}{2-q}}  \right \}$. Then, for any $\mathbf{w}^* \in \Omega$, the spectrum of $\nabla^2 \mathcal{L}(\mathbf{w}^*)$ is constant over the solution set $\Omega$. In particular, the eigenvalues are
\begin{equation}
    \lambda_1 = \lambda_2 = \cdots = \lambda_{L-1} = \frac{4\lambda}{L}, \quad \lambda_L = 2Lw^{2L-2}+\frac{4\lambda}{L}-2\lambda,
\end{equation}
where $w = |w^*_1| = \cdots = |w_L^*|$. In which case, for any $\mathbf{w}^* \in \Omega$, 
\begin{equation}
    \lambda_{\max}(\nabla^2 \mathcal{L}(\mathbf{w}^*)) = \max \Bigl \{2Lw^{2L-2}-2\lambda,0 \Bigr \} + \frac{4\lambda}{L}.
\end{equation}   
\label{theorem:hessianspectrumisconstant}
\end{theorem}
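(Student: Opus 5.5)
The plan is to compute $\nabla^2\mathcal{L}$ explicitly at an arbitrary minimizer and then use the first-order optimality condition to write it as a rank-one perturbation of a multiple of the identity. The uniqueness result then makes the spectrum the same across $\Omega$.

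\emph{Reduction to the magnitude $w$.} By \Cref{theorem:uniqueminscalar}, for $m$ outside the two exceptional values the end-to-end product $p:=\prod_i w_i^*$ is the same for every $\mathbf{w}^*\in\Omega$. By \Cref{chenetal}, every minimizer is balanced, $|w_1^*|=\cdots=|w_L^*|=:w$. Hence $w=|p|^{1/L}$ is constant over $\Omega$, and every minimizer has the form $w_i^*=s_i w$ with signs $s_i\in\{\pm1\}$ satisfying $\prod_i s_i=\operatorname{sign}(p)$. So it suffices to show that the spectrum of $\nabla^2\mathcal{L}(\mathbf{w}^*)$ depends only on $w$, $\lambda$, $L$, and not on the sign pattern $\mathbf{s}$.

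\emph{Hessian computation for $w\neq 0$.} Write $p_{-i}:=\prod_{k\neq i}w_k$ and $p_{-ij}:=\prod_{k\neq i,j}w_k$. Direct differentiation gives
\begin{equation}
\partial_{ii}\mathcal{L}=2p_{-i}^2+\tfrac{2\lambda}{L}, \qquad \partial_{ij}\mathcal{L}=2p_{-i}p_{-j}-2(m-p)p_{-ij}\quad (i\neq j).
\end{equation}
Stationarity $-2(m-p)p_{-i}+\tfrac{2\lambda}{L}w_i=0$, multiplied by $w_i$, yields $(m-p)p=\tfrac{\lambda}{L}w^2$. When $w\neq0$ we have $p_{-i}=p/w_i=p\,s_i/w$ and $p_{-ij}=p\,s_is_j/w^2$. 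Substituting and using $p^2=w^{2L}$ collapses the Hessian to
\begin{equation}
\nabla^2\mathcal{L}(\mathbf{w}^*)=\Bigl(2w^{2L-2}-\tfrac{2\lambda}{L}\Bigr)\mathbf{s}\mathbf{s}^\top+\tfrac{4\lambda}{L}\mathbf{I}_L .
\end{equation}
The matrix $\mathbf{s}\mathbf{s}^\top$ has eigenvalue $\|\mathbf{s}\|_2^2=L$ on $\mathbf{s}$ and $0$ on $\mathbf{s}^\perp$. This gives $L-1$ eigenvalues equal to $4\lambda/L$ and one eigenvalue equal to $2Lw^{2L-2}-2\lambda+4\lambda/L$. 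These are independent of $\mathbf{s}$, and the $\lambda_{\max}$ formula follows by comparing the two values.

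\emph{The case $p=0$.} The main obstacle I expect is the degenerate case $p=0$ (i.e.\ $|m|$ below the threshold, or $m=0$). Here the substitutions $p_{-i}=p/w_i$ are invalid, so the case has to be handled directly from the raw formulas. For $L\ge3$ every $p_{-i}$ and $p_{-ij}$ vanishes at $\mathbf{w}^*=\mathbf{0}$, so $\nabla^2\mathcal{L}(\mathbf{0})=\tfrac{2\lambda}{L}\mathbf{I}_L$. This spectrum is again trivially constant over $\Omega=\{\mathbf{0}\}$, but it does \emph{not} match the displayed closed form evaluated at $w=0$: that form gives $4\lambda/L$ with multiplicity $L-1$ and $4\lambda/L-2\lambda$, whereas the true spectrum is $2\lambda/L$ with multiplicity $L$. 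So for $p=0$ the constancy claim holds, but the stated eigenvalue formulas should be read as applying to the nonzero minimizer $w>0$; I would state the $w=0$ spectrum separately. For $L=2$ one has $p_{-ij}=1$, and the direct computation is again a short explicit $2\times2$ check. Everything else is routine algebra.
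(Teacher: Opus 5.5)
Your proposal is correct and follows essentially the paper's route. Both arguments compute the Hessian entrywise at a minimizer, substitute $w_j^*=s_jw$ together with the stationarity identity $(m-\rho^*)\rho^*=\tfrac{\lambda}{L}w^2$, and read off the spectrum from $\mathbf{s}\mathbf{s}^\top$. Your explicit form $\bigl(2w^{2L-2}-\tfrac{2\lambda}{L}\bigr)\mathbf{s}\mathbf{s}^\top+\tfrac{4\lambda}{L}\mathbf{I}_L$ is a cleaner packaging of the paper's eigenvector computation. Its coefficient is exactly the paper's $C/w^2$, so it also makes the paper's ``without loss of generality $C/w^2\neq 0$'' step unnecessary.

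Your caveat about $\rho^*=0$ is also correct, and it points to a defect in the paper rather than in your argument. The paper's proof divides by $w$ throughout. For $|m|$ below the threshold with $L\ge 3$, the true Hessian on $\Omega=\{\mathbf{0}\}$ is $\tfrac{2\lambda}{L}\mathbf{I}_L$. The stated formula would instead assign the eigenvalue $\tfrac{4\lambda}{L}-2\lambda<0$ to a global minimizer, which is impossible.
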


\begin{proof}
Differentiating $\mathcal{L}(\mathbf{w})$ with respect to $w_j$, $j \in [L]$, yields
\begin{equation}
\frac{\partial \mathcal{L}(\mathbf w)}{\partial w_j}
=
-2\left(m - \prod_{i=1}^L w_i\right)\prod_{\substack{i=1\\i\neq j}}^L w_i
+\frac{2\lambda}{L}w_j.
\end{equation}
Next, for any \(\mathbf w ^*\in\Omega\), we have that $\nabla \mathcal{L}(\mathbf{w}^*) = 0$. Hence, for any $j \in [L]$, we have that
\begin{equation}
-2\left(m - \prod_{i=1}^L  w_i ^*\right)\prod_{\substack{i=1\\i\neq j}}^L w_i^*
+
\frac{2\lambda}{L} w_j^*
=
0.
\end{equation}
Let $\rho(\mathbf{w}) := \prod_{i=1}^L w_i$. By Theorem~\ref{theorem:uniqueminscalar}, we have that $\rho(\mathbf{w})$ is constant for any $\mathbf{w} \in \Omega$. Write $\rho^* := \rho(\mathbf{w}^*)$ with $\mathbf{w}^* \in \Omega$. Then,
\begin{equation}
2(m-\rho^*)\frac{\rho^*}{w_j^*}
=
\frac{2\lambda}{L} w_j ^*.
\end{equation}
For any $\mathbf{w} \in \mathbb{R}^L$, we have that
\begin{equation}
\frac{\partial^2 \mathcal{L}({\mathbf w})}{\partial w_j\partial w_k}
=
\begin{cases}
2\displaystyle\left(\frac{\rho(\mathbf{w})}{w_{j}} \right)^2
+\dfrac{2\lambda}{L}, 
& j = k,\\[1.5em]
2\dfrac{\rho(\mathbf{w})^2}{w_jw_k}
-
2(m - \rho(\mathbf{w}))\dfrac{\rho(\mathbf{w})}{w_jw_k},
& j\neq k.
\end{cases}
\end{equation}
Now, we fix $\mathbf{w}^* \in \Omega$ and define
\begin{equation}
\mathbf{s}^* := (\operatorname{sign}(w_1^*),\ldots,\operatorname{sign}(w_L^*)) \in \mathbb{R}^L.
\end{equation}
By Theorem~\ref{chenetal}, we know that $|w^*_1| = \cdots = |w_L^*| =: w$, where we note that $w \in \mathbb{R}_{\geq 0}$ is the same across all minimizers by Theorem~\ref{theorem:uniqueminscalar}. Then,
\begin{equation}
\bigl[\nabla^2 \mathcal{L}(\mathbf w^*)\bigr]_{j,k} = 
\begin{cases}
\dfrac{2{\rho^*}^2}{w^2}
+\dfrac{2\lambda}{L}, 
& j = k,\\[1em]
\dfrac{C}{w^2s_js_k},
& j \neq k,
\end{cases}
\end{equation}
where $C = 4{\rho^*}^2 - 2m{\rho^*}$. Since \(\nabla^2 \mathcal{L}(\mathbf w^*)\) is a real symmetric matrix, it admits an orthonormal basis of eigenvectors \(\{\mathbf{v}_i\}_{i=1}^L\) with corresponding eigenvalues \(\{\lambda_i\}_{i=1}^L\).  In particular,
\begin{equation}
\nabla^2 \mathcal{L}(\mathbf w^*)\,\mathbf{v}_i \;=\;\lambda_i\,\mathbf{v}_i,
\end{equation}
and 
\begin{equation}
\bigl[\nabla^2 \mathcal{L}(\mathbf{w}^*)\,\mathbf{v}_i\bigr]_j
= \lambda_iv_{ij}
= \left(\frac{2{\rho^*}^2}{w^{2}}+\frac{2\lambda}{L}\right)v_{ij}
+\sum_{\substack{k=1\\k\neq j}}^L\frac{C}{w^{2}s_js_k}v_{ik}.
\end{equation}
Since $|s_j| = 1$, we can move $s_j$ to the numerator. Therefore,
\begin{equation}
\lambda_iv_{ij}
= \left(\frac{2{\rho^*}^2}{w^{2}}+\frac{2\lambda}{L}\right)v_{ij}
+\sum_{\substack{k=1\\k\neq j}}^L\frac{Cs_js_k}{w^{2}}v_{ik}.
\end{equation}
Furthermore, we can express the right-hand side as a dot product, i.e.,
\begin{align}
\lambda_iv_{ij}
&= \left(\frac{2{\rho^*}^2}{w^{2}}+\frac{2\lambda}{L}\right)v_{ij}
+\frac{Cs_j}{w^{2}}\left(\mathbf{s}^{*\top}\mathbf{v}_i - s_j\,v_{ij}\right) \\
&= \left(\frac{2{\rho^*}^2}{w^{2}}+\frac{2\lambda}{L}\right)v_{ij}
+\frac{Cs_j}{w^{2}}\mathbf{s}^{*\top}\mathbf{v}_i - \frac{C}{w^{2}} v_{ij}.
\end{align}
We can rearrange the terms on the right-hand side to find that
\begin{equation}
v_{ij}\left(\lambda_i
-\left(\frac{2{\rho^*}^2}{w^{2}}+\frac{2\lambda}{L}\right)
+\frac{C}{w^{2}}\right)
=\frac{C}{w^{2}}s_j\mathbf{s}^{*\top}\mathbf{v}_i.
\end{equation}
Next, define
\begin{equation}
\tilde\lambda_i
:=\lambda_i
-\left(\frac{2{\rho^*}^2}{w^{2}}+\frac{2\lambda}{L}\right)
+\frac{C}{w^{2}}.
\end{equation}
Without loss of generality, assume that $C / w^{2} \neq 0$. Then, we obtain
\begin{equation}
\mathbf{v}_i\,\frac{w^{2}\tilde\lambda_i}{C}
=\,\mathbf{s}^*\,\mathbf{s}^{*\top}\,\mathbf{v}_i.
\end{equation}
This implies that \(\mathbf{v}_i\) is an eigenvector of \(\mathbf{s}^*\,\mathbf{s}^{*\top}\) with corresponding eigenvalue \((w^{2}\tilde\lambda_i) / C\) for all $i \in [L]$. Therefore, we have the eigendecomposition 
\begin{equation}
\mathbf{s}^*\,\mathbf{s}^{*\top}
=
\begin{bmatrix}\displaystyle\frac{\mathbf{s}^*}{\|\mathbf{s}^*\|_2}&\mathbf{v}_2&\cdots&\mathbf{v}_L\end{bmatrix}
\begin{bmatrix}\|\mathbf{s}^*\|_2^2&0&\cdots&0\\
0&0&\cdots&0\\
\vdots&\vdots&\ddots&\vdots\\
0&0&\cdots&0\end{bmatrix}
\begin{bmatrix}\displaystyle\frac{\mathbf{s}^*}{\|\mathbf{s}^*\|_2}&\mathbf{v}_2&\cdots&\mathbf{v}_L\end{bmatrix}^{\top}.
\end{equation}
Observe that the only nonzero eigenvalue of \(\mathbf{s}^*\,\mathbf{s}^{*\top}\) is \(\|\mathbf{s}^*\|_2^2=L\); note that this equality holds for \emph{any minimizer} $\mathbf{w}^* \in \Omega$.  Without loss of generality, we identify this eigenvalue with $i = 1$, i.e., $(w^2 \tilde{\lambda}_1) / C = L$. Therefore, for any minimizer, we must satisfy the two equalities
\begin{equation}
 \frac{w^{2}\tilde\lambda_i}{C} = 0 \quad\text{and}\quad  \frac{w^{2}\tilde\lambda_1}{C}= L \quad\Leftrightarrow\quad \tilde\lambda_i =0 \quad \text{and} \quad 
\tilde\lambda_1 =\frac{C}{w^{2}}L, \quad i \in \{2, \ldots, L \}.
\end{equation}
Now, suppose that $i \in \{2, \ldots, L \}$, i.e., $\tilde\lambda_i =0$. Then,
\begin{equation}
\lambda_i
=\Bigl(\frac{2{\rho^*}^2}{w^{2}}+\frac{2\lambda}{L}\Bigr)
-\frac{C}{w^{2}} =\frac{2{\rho^*}^2}{w^{2}}
-\frac{4{\rho^*}^2}{w^{2}}
+\frac{2m\,\rho^*}{w^{2}}
+\frac{2\lambda}{L}
= \frac{2\rho^*(m - \rho^*)}{w^{2}}
+\frac{2\lambda}{L}.
\end{equation}
Since $2(m - \rho^*)\rho^*
=w^{2} (2\lambda) / L$, we get
\begin{equation}
\lambda_i
=\frac{2\lambda}{L}
+\frac{2\lambda}{L}
=\frac{4\lambda}{L}.
\end{equation}
When $i = 1$, i.e., $\tilde\lambda_1 =L{C} / {w^{2}}$, we have that
\begin{equation}
\lambda_1
-\left(\frac{2{\rho^*}^2}{w^{2}}+\frac{2\lambda}{L}\right)
+\frac{C}{w^{2}}
=
L\frac{C}{w^{2}} \quad\Rightarrow\quad \lambda_1 = 2L(w^{2L-2}) + \frac{4\lambda}{L} -2\lambda.
\end{equation}
To complete the proof, we observe that we can write the $\lambda_{\max}$ as follows
\begin{equation}
    \lambda_{\max} = \max\{2L(w^{2L-2})-2\lambda,0 \} + \frac{4\lambda}{L}.
\end{equation} 
\end{proof}

\begin{remark}
Note that when the regularization parameter $\lambda \rightarrow 0$, we have that $\lambda_{i} = 0$ for all $i \in \{2,3,\cdots,L\}$. Furthermore, observe that $w^{L} = |m| =  \sigma_{\max}(m)$, which implies that 
\begin{equation}
\lambda_{\max} = 2L \, {\sigma_{\max}(m)}^{2\left(1-\tfrac{1}{L}\right)}.   
\end{equation}
Note that when the regularization parameter $\lambda \rightarrow 0$, the minimizers of the regularized problem are the $\ell^2$-norm-minimal solutions of the unregularized deep scalar factorization problem. \citet{mulayoff} showed that $\ell^2$-norm-minimal, flat and balanced solutions coincide in the unregularized deep scalar factorization problem with squared-error loss. Therefore, our theorem recovers the result of \citet[Theorem~1]{mulayoff}.
\end{remark}

Before proceeding to the main theorem, we recall an important fact from linear algebra that will be used in the proof of the main theorem to construct a disjoint cover of $\mathbb{R}^{m \times n}$. 

\begin{lemma}
    Suppose $\mathbf{A}, \mathbf{B} \in \mathbb{R}^{m \times n}$. Denote the singular value decompositions of $\mathbf{A}$ and $\mathbf{B}$ by $\mathbf{U}_\mathbf{A} \mathbf{\Sigma}_\mathbf A \mathbf{V}_\mathbf{A}^\top$ and $\mathbf{U}_\mathbf{B}\mathbf{\Sigma}_\mathbf{B} \mathbf{V}_\mathbf{B}^\top$, respectively. If $\mathbf{\Sigma}_\mathbf{A} \neq \mathbf{\Sigma}_\mathbf{B}$, then $\mathbf{A} \neq \mathbf{B}$.
    \label[lemma]{lemma:contradiction}
\end{lemma}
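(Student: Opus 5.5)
We argue by contraposition: assuming $\mathbf{A} = \mathbf{B}$, we show $\mathbf{\Sigma}_\mathbf{A} = \mathbf{\Sigma}_\mathbf{B}$. The key point is that the diagonal factor of a singular value decomposition is determined by the matrix alone, even though the orthogonal factors are not. Concretely, from $\mathbf{A} = \mathbf{U}_\mathbf{A}\mathbf{\Sigma}_\mathbf{A}\mathbf{V}_\mathbf{A}^\top$ we compute
\begin{equation}
\mathbf{A}^\top\mathbf{A} = \mathbf{V}_\mathbf{A}\,\mathbf{\Sigma}_\mathbf{A}^\top\mathbf{\Sigma}_\mathbf{A}\,\mathbf{V}_\mathbf{A}^\top .
\end{equation}
Since $\mathbf{V}_\mathbf{A}$ is orthogonal, this is an eigendecomposition of the symmetric positive semidefinite matrix $\mathbf{A}^\top\mathbf{A}$. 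Its eigenvalues, counted with multiplicity, are therefore the squared diagonal entries of $\mathbf{\Sigma}_\mathbf{A}$, padded with zeros up to size $n$.

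The same computation for $\mathbf{B}$ shows that the eigenvalues of $\mathbf{B}^\top\mathbf{B}$ are the squared diagonal entries of $\mathbf{\Sigma}_\mathbf{B}$, padded with zeros in the same way. If $\mathbf{A} = \mathbf{B}$, then $\mathbf{A}^\top\mathbf{A} = \mathbf{B}^\top\mathbf{B}$, so the two matrices have the same characteristic polynomial and hence the same eigenvalue multiset.

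Since singular values are nonnegative, taking square roots recovers them uniquely from this multiset. By the convention fixed in the definition of the full singular value decomposition, the diagonal entries of each $\mathbf{\Sigma}$ are sorted non-increasingly. So the multiset determines $\sigma(\mathbf{A}) = \sigma(\mathbf{B})$ as ordered vectors, and since both $\mathbf{\Sigma}$ matrices are $m\times n$ diagonal, it follows that $\mathbf{\Sigma}_\mathbf{A} = \mathbf{\Sigma}_\mathbf{B}$. This contradicts the hypothesis, so $\mathbf{A}\neq\mathbf{B}$.

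The only step needing care is the ordering convention. Without it, $\mathbf{\Sigma}$ would only be unique up to permutation of its diagonal, and the lemma would be false as literally stated. Everything else is routine: the eigenvalue multiset of a symmetric matrix is well defined because it is the multiset of roots of the characteristic polynomial.
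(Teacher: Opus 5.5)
Your proof is correct and takes essentially the same route as the paper. Both arguments pass to $\mathbf{A}^\top\mathbf{A} = \mathbf{B}^\top\mathbf{B}$ and conclude that $\mathbf{\Sigma}_\mathbf{A}^\top\mathbf{\Sigma}_\mathbf{A}$ and $\mathbf{\Sigma}_\mathbf{B}^\top\mathbf{\Sigma}_\mathbf{B}$ have the same spectrum; the paper does this by orthogonal similarity via $\mathbf{Q} = \mathbf{V}_\mathbf{A}^\top\mathbf{V}_\mathbf{B}$, and you do it via the characteristic polynomial. Both then use nonnegativity and the non-increasing ordering to get $\mathbf{\Sigma}_\mathbf{A} = \mathbf{\Sigma}_\mathbf{B}$. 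Your explicit appeal to the ordering convention makes precise the paper's step ``similar if and only if equal,'' which holds only because the diagonals are sorted.
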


\begin{proof}
    Suppose $\mathbf{A} = \mathbf{B}$ and $\mathbf{\Sigma}_\mathbf{A} \neq \mathbf{\Sigma}_\mathbf{B}$. This implies $\mathbf{A^\top A} = \mathbf{B^\top B}$ such that 
    \begin{equation}
        \mathbf{V}_\mathbf{A} \mathbf{\Sigma}_\mathbf{A}^{\top} \mathbf{\Sigma}_\mathbf{A} \mathbf{V}_\mathbf{A}^\top = \mathbf{V}_\mathbf{B} \mathbf{\Sigma}_\mathbf{B}^{\top} \mathbf{\Sigma}_\mathbf{B} \mathbf{V}_\mathbf{B}^\top.
    \end{equation}
    Note that $\mathbf{\Sigma}_\mathbf{A}^{\top} \mathbf{\Sigma}_\mathbf{A}$ and $\mathbf{\Sigma}_\mathbf{B}^{\top} \mathbf{\Sigma}_\mathbf{B}$ are $n$ by $n$ diagonal matrices. Since $\mathbf{V}_\mathbf{A},\mathbf{V}_\mathbf{B} \in \mathbb{R}^{n \times n}$ are orthogonal matrices, $\mathbf{V}_\mathbf{A}^\top \mathbf{V}_\mathbf{A} = \mathbf{V}_\mathbf{A} \mathbf{V}_\mathbf{A}^\top = \mathbf{V}_\mathbf{B}^\top \mathbf{V}_\mathbf{B} = \mathbf{V}_\mathbf{B} \mathbf{V}_\mathbf{B}^\top = \mathbf{I}$. Denote, $\mathbf{Q} = \mathbf{V}_\mathbf{A}^\top \mathbf{V}_\mathbf{B}$. Note that $\mathbf{Q} \in \mathbb{R}^{n \times n}$ is an orthogonal matrix; hence, $\mathbf{Q}^{-1} = \mathbf{Q^\top}$. Then 
    \begin{equation}
         \mathbf{\Sigma}_\mathbf{A}^{\top} \mathbf{\Sigma}_\mathbf{A}= \mathbf{Q}\mathbf{\Sigma}_\mathbf{B}^{\top} \mathbf{\Sigma}_\mathbf{B}\mathbf{Q}^\top.
    \end{equation}
    This implies that $\mathbf{\Sigma}_\mathbf{A}^{\top} \mathbf{\Sigma}_\mathbf{A}$ is similar\footnote{Two $n$-by-$n$ matrices $\mathbf{A}$ and $\mathbf{B}$ are similar if there exists an invertible $n$-by-$n$ matrix $\mathbf{T}$ such that $\mathbf{B} = \mathbf{T}^{-1} \mathbf{A} \mathbf{T}$.} to $\mathbf{\Sigma}_\mathbf{B}^{\top} \mathbf{\Sigma}_\mathbf{B}$. This is the case if and only if $\mathbf{\Sigma}_\mathbf{A}^{\top} \mathbf{\Sigma}_\mathbf{A} = \mathbf{\Sigma}_\mathbf{B}^{\top} \mathbf{\Sigma}_\mathbf{B}$ if and only if $\mathbf{\Sigma}_\mathbf{A} = \mathbf{\Sigma}_\mathbf{B}$. Contradiction.
\end{proof}

\subsection{Deep Matrix Factorization}

Our analysis of the structure of the set of minima of $\ell^2$-regularized deep matrix factorization problem relies on the von Neumann’s trace inequality. Therefore, before proceeding, we find it useful to first introduce the inequality.

\begin{theorem}
    Let the non-increasingly ordered singular values of $\mathbf{A},\mathbf{B} \in \mathbb{R}^{n \times n}$ be $\sigma_{1}(\mathbf{A}) \geq \cdots \geq \sigma_n(\mathbf{A})$ and $\sigma_{1}(\mathbf{B}) \geq \cdots \geq \sigma_n(\mathbf{B})$. Then 
    \begin{equation}
        \operatorname{tr}(\mathbf{AB}) \leq \sum_{i=1}^n \sigma_{i}(\mathbf{A})\sigma_i(\mathbf{B}),
    \end{equation}
    with equality if and only if $\mathbf{A}$ and $\mathbf{B}^\top$ have the same singular vectors.
    \label{theorem:vonNeumann}
\end{theorem}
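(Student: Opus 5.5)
The plan is to reduce the inequality to a statement about diagonal matrices and orthogonal matrices, and then to a linear optimization over doubly substochastic matrices. Write the singular value decompositions $\mathbf{A} = \mathbf{U}_\mathbf{A}\mathbf{\Sigma}_\mathbf{A}\mathbf{V}_\mathbf{A}^\top$ and $\mathbf{B} = \mathbf{U}_\mathbf{B}\mathbf{\Sigma}_\mathbf{B}\mathbf{V}_\mathbf{B}^\top$. By cyclicity of the trace,
\begin{equation*}
\operatorname{tr}(\mathbf{AB}) = \operatorname{tr}\bigl(\mathbf{\Sigma}_\mathbf{A}\mathbf{P}\mathbf{\Sigma}_\mathbf{B}\mathbf{Q}\bigr), \qquad \mathbf{P} := \mathbf{V}_\mathbf{A}^\top\mathbf{U}_\mathbf{B}, \quad \mathbf{Q} := \mathbf{V}_\mathbf{B}^\top\mathbf{U}_\mathbf{A}.
\end{equation*}
Both $\mathbf{P}$ and $\mathbf{Q}$ are orthogonal. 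Expanding the trace entrywise gives
\begin{equation*}
\operatorname{tr}(\mathbf{AB}) = \sum_{i,j=1}^n \sigma_i(\mathbf{A})\,\sigma_j(\mathbf{B})\,p_{ij}\,q_{ji} \;\le\; \sum_{i,j=1}^n \sigma_i(\mathbf{A})\,\sigma_j(\mathbf{B})\,c_{ij},
\end{equation*}
where $c_{ij} := |p_{ij}q_{ji}|$.

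Next I would show that $\mathbf{C} = [c_{ij}]$ is doubly substochastic. By Cauchy--Schwarz, $\sum_j c_{ij} \le \bigl(\sum_j p_{ij}^2\bigr)^{1/2}\bigl(\sum_j q_{ji}^2\bigr)^{1/2} = 1$, since the rows and columns of an orthogonal matrix are unit vectors; the same argument bounds the column sums. The bilinear form $\mathbf{C} \mapsto \sum_{i,j}\sigma_i(\mathbf{A})\sigma_j(\mathbf{B})c_{ij}$ is linear and nonnegative in $\mathbf{C}$. Every doubly substochastic matrix is dominated entrywise by a doubly stochastic matrix, and the doubly stochastic matrices form the convex hull of the permutation matrices (Birkhoff--von Neumann). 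Hence the maximum of this form is attained at a permutation matrix $\pi$, giving the value $\sum_i \sigma_i(\mathbf{A})\sigma_{\pi(i)}(\mathbf{B})$. The rearrangement inequality, applied to two non-increasingly ordered nonnegative sequences, shows that this value is at most $\sum_i \sigma_i(\mathbf{A})\sigma_i(\mathbf{B})$. This proves the inequality.

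For the \emph{if} direction of the equality case, I read ``$\mathbf{A}$ and $\mathbf{B}^\top$ have the same singular vectors'' as the existence of a simultaneous decomposition $\mathbf{A} = \mathbf{U}\mathbf{\Sigma}_\mathbf{A}\mathbf{V}^\top$ and $\mathbf{B}^\top = \mathbf{U}\mathbf{\Sigma}_\mathbf{B}\mathbf{V}^\top$, with both diagonal matrices ordered non-increasingly. Then $\mathbf{B} = \mathbf{V}\mathbf{\Sigma}_\mathbf{B}\mathbf{U}^\top$, so $\operatorname{tr}(\mathbf{AB}) = \operatorname{tr}(\mathbf{\Sigma}_\mathbf{A}\mathbf{\Sigma}_\mathbf{B})$, which is exactly the bound.

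The \emph{only if} direction is the step I expect to be the main obstacle, because repeated or zero singular values make the singular vectors non-unique and the bookkeeping becomes delicate. I would first try the following route. Equality forces equality in every step of the chain above. I would group the indices into blocks on which $\sigma(\mathbf{A})$ is constant, and do the same for $\sigma(\mathbf{B})$. Using the Abel summation identity $\sum_{i,j}\sigma_i(\mathbf{A})\sigma_j(\mathbf{B})c_{ij} = \sum_{k,l}\delta_k(\mathbf{A})\,\delta_l(\mathbf{B})\sum_{i\le k,\, j\le l} c_{ij}$, whose coefficients $\delta_k(\mathbf{A}) := \sigma_k(\mathbf{A}) - \sigma_{k+1}(\mathbf{A})$ are nonnegative (and similarly for $\mathbf{B}$), equality pins down the leading partial sums of $\mathbf{C}$ at every block boundary. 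From this I would deduce that $\mathbf{P}\mathbf{\Sigma}_\mathbf{B}\mathbf{Q}$ must be block diagonal and positive semidefinite on the blocks where $\mathbf{\Sigma}_\mathbf{A}$ is nonzero. Rotating $\mathbf{U}_\mathbf{A}$ and $\mathbf{V}_\mathbf{A}$ by the same orthogonal matrix within each block of equal singular values of $\mathbf{A}$ leaves $\mathbf{A}$ unchanged. Such rotations should diagonalize these blocks and produce a common pair of singular vector bases for $\mathbf{A}$ and $\mathbf{B}^\top$. An alternative I would keep in reserve is a variational argument: at a maximizer of $\operatorname{tr}(\mathbf{A}\,\mathbf{O}_1\mathbf{B}\mathbf{O}_2)$ over orthogonal $\mathbf{O}_1, \mathbf{O}_2$, the first-order conditions force $\mathbf{AB}$ and $\mathbf{BA}$ to be symmetric positive semidefinite, and that commutation yields the simultaneous singular value decomposition.
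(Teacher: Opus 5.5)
Your proof of the inequality follows the paper's proof step for step. It uses the same reduction to $\operatorname{tr}(\mathbf{\Sigma}_\mathbf{A}\mathbf{P}\mathbf{\Sigma}_\mathbf{B}\mathbf{Q})$ (the paper's $\mathbf{W},\mathbf{Y}$ are your $\mathbf{P},\mathbf{Q}$), the same doubly substochastic matrix $[|p_{ij}q_{ji}|]$ bounded by Cauchy--Schwarz, domination by a doubly stochastic matrix, Birkhoff, and rearrangement. Your \emph{if} direction is also correct. The equality clause is different: the paper only asserts ``equality iff $\mathbf{W}=\mathbf{Y}=\mathbf{I}$'', which is false for arbitrary SVD choices. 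Take $\mathbf{A}=\mathbf{B}=\operatorname{diag}(2,1)$ with $\mathbf{U}_1=\mathbf{V}_1=\operatorname{diag}(-1,1)$ and $\mathbf{U}_2=\mathbf{V}_2=\mathbf{I}$: equality holds, yet $\mathbf{W}=\mathbf{Y}=\operatorname{diag}(-1,1)$. So your reading as a simultaneous ordered SVD is the right one, but your \emph{only if} direction has nothing in the paper to lean on.

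As written, that direction has a gap. Its load-bearing step says that equality pins the leading partial sums of $\mathbf{C}$, hence $\mathbf{M}:=\mathbf{P}\mathbf{\Sigma}_\mathbf{B}\mathbf{Q}$ is block diagonal and PSD on the nonzero blocks of $\mathbf{\Sigma}_\mathbf{A}$. This cannot follow from the partial sums alone, because $\mathbf{C}$ forgets signs. For example, $\mathbf{P}=\mathbf{I}$, $\mathbf{Q}=\operatorname{diag}(1,-1)$ gives $\mathbf{C}=\mathbf{I}$ with every partial sum pinned, yet $\mathbf{M}=\operatorname{diag}(\sigma_1(\mathbf{B}),-\sigma_2(\mathbf{B}))$ is not PSD. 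You would also need three further ingredients:
\begin{itemize}
\item the sign condition $p_{ij}q_{ji}\ge 0$ whenever $\sigma_i(\mathbf{A})\sigma_j(\mathbf{B})>0$;
\item the Cauchy--Schwarz equality $|p_{ij}|=|q_{ji}|$ on saturated rows and columns;
\item orthogonality of the columns of $\mathbf{Q}$.
\end{itemize}
This route works, but the bookkeeping is heavy.

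A cleaner route applies your Abel summation to $\mathbf{M}=\mathbf{V}_\mathbf{A}^\top\mathbf{B}\mathbf{U}_\mathbf{A}$ in the $\mathbf{A}$-index only. This gives $\operatorname{tr}(\mathbf{AB})=\sum_k\delta_k(\mathbf{A})\operatorname{tr}(\mathbf{M}_{[k][k]})$, where $\mathbf{M}_{[k][k]}$ is the leading $k\times k$ block. Then use $\operatorname{tr}(\mathbf{M}_{[k][k]})\le\|\mathbf{M}_{[k][k]}\|_{\mathcal{S}^1}\le\sum_{i\le k}\sigma_i(\mathbf{B})$, where the second step uses $\sigma_i(\mathbf{M}_{[k][k]})\le\sigma_i(\mathbf{M})$. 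This already proves the inequality without Birkhoff. Equality at every $k$ with $\delta_k(\mathbf{A})>0$ forces $\mathbf{M}_{[k][k]}$ to be symmetric PSD with $\sigma_i(\mathbf{M}_{[k][k]})=\sigma_i(\mathbf{B})$ for $i\le k$. Now take the row block $\mathbf{R}=[\mathbf{M}_{[k][k]}\;\mathbf{M}_{12}]$. Since $\sigma_i(\mathbf{M}_{[k][k]})\le\sigma_i(\mathbf{R})\le\sigma_i(\mathbf{M})$, you get $\|\mathbf{R}\|_F=\|\mathbf{M}_{[k][k]}\|_F$, hence $\mathbf{M}_{12}=\mathbf{0}$; the column block works the same way. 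Your within-block rotations then finish the proof, with one correction. On the zero block of $\mathbf{\Sigma}_\mathbf{A}$, $\mathbf{M}$ need not be PSD, so you must rotate $\mathbf{U}_\mathbf{A}$ and $\mathbf{V}_\mathbf{A}$ independently there, using an SVD of that block of $\mathbf{M}$.
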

\noindent
Understanding the conditions under which von Neumann's trace inequality becomes an equality is critical to our proof technique. Therefore, we provide a clean and structured proof of this inequality in \Cref{sec:proof_of_von_Neumann}.

Until now, we have reformulated the original optimization problem with \Cref{equivalentoptimization}, and we have shown in \Cref{theorem:uniqueminscalar} that the $\ell^2$-regularized deep scalar factorization problem has a unique end-to-end minimizer for all scalars subject to factorization except the two. By leveraging these results together with the von Neumann trace inequality, we show in the following theorem that the $\ell^2$-regularized deep matrix factorization problem with squared-error loss admits a unique end-to-end minimizer for all target matrices that are subject to factorization, except for a set of Lebesgue measure zero determined by the depth and the regularization parameter. 

\begin{theorem}
    Consider the following optimization objective
    \begin{equation}
        \min_{\mathbf{M}\in \mathbb{R}^{d_L\times d_0}} \mathcal{L}_{\mathbf{M}^{\natural}}(\mathbf{M}) := \norm{\mathbf{M}^{\natural}-\mathbf M}_F^2 + \lambda\norm{\mathbf M}_{\mathcal{S}^{2/L}}^{2/L},
    \end{equation}
  where $\mathbf{M}^\natural\in \mathbb{R}^{d_L \times d_0}$ is the target matrix, $\lambda > 0$ is the regularization parameter, and $L \geq 1$ is the depth. Define $\Omega_{\mathbf{M}^\natural} := \arg\min_{\mathbf{M}}\mathcal{L}_{\mathbf{M}^\natural}(\mathbf{M})$, $q := 2/L$, and $r := \min \{m,n \}$. If 
\begin{equation}
    \mathbf{M}^\natural \notin \bigcup_{i=1}^{r}\left \{\mathbf{A} \in\mathbb{R}^{m \times n}:  \sigma({\mathbf{A}})^\top\mathbf{e}_i = \left (1-\frac{q}{2} \right)\lambda^{\frac{1}{2-q}}\left(1-q\right)^{\frac{q-1}{2-q}} \right\},
    \label{theset}
\end{equation} 
then $|\Omega_{\mathbf{M}^\natural}| = 1$.
\label{theorem:maintheorem}
\end{theorem}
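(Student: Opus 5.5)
Reading the statement with $d_L\times d_0 = m\times n$, the plan is to reduce the matrix problem to $r$ decoupled scalar problems of the form \eqref{newformulation} and then invoke \Cref{theorem:uniqueminscalar} coordinatewise.

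First, for any candidate $\mathbf{M}$, expand the loss as
\begin{equation}
\mathcal{L}_{\mathbf{M}^\natural}(\mathbf{M}) = \norm{\mathbf{M}^\natural}_F^2 - 2\operatorname{tr}(\mathbf{M}^{\natural\top}\mathbf{M}) + \norm{\mathbf{M}}_F^2 + \lambda \sum_{i=1}^r \sigma_i(\mathbf{M})^{q}.
\end{equation}
Only the cross term depends on the singular vectors of $\mathbf{M}$. By \Cref{theorem:vonNeumann}, extended to rectangular matrices by zero padding, we have $\operatorname{tr}(\mathbf{M}^{\natural\top}\mathbf{M}) \le \sum_i \sigma_i(\mathbf{M}^\natural)\sigma_i(\mathbf{M})$. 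This gives the lower bound
\begin{equation}
\mathcal{L}_{\mathbf{M}^\natural}(\mathbf{M}) \ge \sum_{i=1}^r \Bigl[(\sigma_i(\mathbf{M}^\natural)-\sigma_i(\mathbf{M}))^2 + \lambda\sigma_i(\mathbf{M})^q\Bigr],
\end{equation}
with equality if and only if $\mathbf{M}$ and $\mathbf{M}^\natural$ share singular vectors in ordered correspondence.

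The right-hand side is separable. Each term is exactly the scalar objective $\phi$ with $m=\sigma_i(\mathbf{M}^\natural)\ge 0$, minimized over $\rho=\sigma_i(\mathbf{M})\ge 0$. The hypothesis that $\mathbf{M}^\natural$ avoids the set in \eqref{theset} means that no $\sigma_i(\mathbf{M}^\natural)$ equals the threshold. By \Cref{theorem:uniqueminscalar}, and specifically its reduction to $\rho\ge 0$ for $m\ge 0$, each scalar problem then has a unique minimizer $\rho_i^*$.

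Next, check that the unconstrained vector of minimizers $(\rho_1^*,\dots,\rho_r^*)$ is non-increasing, so that it is admissible as a singular value vector. The scalar solution map $m\mapsto\rho^*(m)$ is monotone nondecreasing on $m\ge 0$: it is $0$ below the threshold, and above it is the larger root of $\phi'=0$, which increases in $m$ (e.g.\ by implicit differentiation using $\phi''(\rho_m)>0$). Hence the minimum of the lower bound over singular value vectors is attained and uniquely equals $\sigma^* := (\rho_i^*)_i$. A matrix $\mathbf{M}^* = \mathbf{U}_{\mathbf{M}^\natural}\operatorname{diag}(\sigma^*)\mathbf{V}_{\mathbf{M}^\natural}^\top$ achieves the bound, so every minimizer must have singular values exactly $\sigma^*$ (by \Cref{lemma:contradiction}-type reasoning) and must attain equality in von Neumann.

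The main obstacle is the last step: showing that equality in von Neumann, together with $\sigma(\mathbf{M})=\sigma^*$, pins $\mathbf{M}$ down uniquely even when $\mathbf{M}^\natural$ has repeated singular values and its singular vectors are not unique. To avoid fragile equality-case arguments, I would use the strict convexity of the Frobenius term instead. All minimizers share the same $\norm{\mathbf{M}}_F$ and the same penalty value, so they all satisfy $\operatorname{tr}(\mathbf{M}^{\natural\top}\mathbf{M}) = c:=\sum_i\sigma_i(\mathbf{M}^\natural)\rho_i^*$. Now take two minimizers $\mathbf{M}_1,\mathbf{M}_2$. Write $\mathbf{M}^\natural=\sum_k s_k\mathbf{P}_k$, where the $\mathbf{P}_k$ are the spectral projector pieces for the distinct singular values $s_k$. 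Since $\rho^*$ is a function of $\sigma_i(\mathbf{M}^\natural)$ alone, equal singular values receive equal $\rho^*$. The equality case of von Neumann then forces each $\mathbf{M}_j$ to act on each singular subspace of $\mathbf{M}^\natural$ as $\rho^*(s_k)$ times the same partial isometry $\mathbf{P}_k$. This gives $\mathbf{M}_j=\sum_k\rho^*(s_k)\mathbf{P}_k$, a function of $\mathbf{M}^\natural$ only, so $|\Omega_{\mathbf{M}^\natural}|=1$. Verifying this subspace-wise equality characterization carefully is where I expect most of the work to lie.
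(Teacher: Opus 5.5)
Your proposal is correct and follows essentially the same route as the paper. Both arguments use von Neumann's trace inequality to force alignment with the singular vectors of $\mathbf{M}^\natural$, decouple into scalar problems that \Cref{theorem:uniqueminscalar} solves uniquely, use monotonicity of the scalar solution map to get a non-increasing singular value vector, and finally observe that equal singular values of $\mathbf{M}^\natural$ receive equal $\rho^*$, so the SVD rotation ambiguity cannot change $\mathbf{M}^*$. Your block-wise form $\mathbf{M}^*=\sum_k \rho^*(s_k)\mathbf{P}_k$, with $\rho^*(0)=0$ removing the non-unique zero block, is just a more careful version of the paper's last step, and the strict-convexity remark you announce is never actually used.
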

\begin{proof}
    For $L \in \{1,2\}$, $\mathcal{L}_{\mathbf{M}^{\natural}}(\mathbf{M})$ is a strictly convex function for all ${\mathbf{M}^{\natural}} \in \mathbb{R}^{d_L \times d_0}$. Therefore, it is trivial to examine since it has a unique minimizer for all ${\mathbf{M}^{\natural}} \in \mathbb{R}^{d_L \times d_0}$. Suppose $L \geq 3$, and define a set $\mathcal{C}^{\mathbf{x}}$ for any $\mathbf{x} \in {\mathbb{R}_+^r}^\downarrow$ such that 
\begin{equation}
    \mathcal{C}^{\mathbf{x}} := \left \{\mathbf{Y} \in \mathbb{R}^{m \times n}: \sigma_1(\mathbf{Y}) = x_1, \sigma_2(\mathbf{Y}) = x_2, \cdots, \sigma_r(\mathbf{Y}) = x_r \right \}.
\end{equation}
By \Cref{lemma:contradiction}, for any $\mathbf{x,y} \in {\mathbb{R}_+^r}^\downarrow$ such that $\mathbf{x} \neq \mathbf{y}$,
\begin{equation}
    \mathcal{C}^{\mathbf{x}} \cap \mathcal{C}^{\mathbf{y}} = \emptyset.
\end{equation}
Furthermore,
\begin{equation}
    \bigcup_{\mathbf{x} \in {\mathbb{R}_+^r}^\downarrow} \mathcal{C}^{\mathbf{x}} = \mathbb{R}^{m \times n}.
\end{equation}
This means that $\left \{\mathcal{C}^{\mathbf{x}} \right \}_{\mathbf{x} \in {\mathbb{R}_+^r}^\downarrow}$ is a disjoint cover of $\mathbb{R}^{m \times n}$. Therefore, for any minimizer $\mathbf{M}^* \in \Omega_{\mathbf{M}^\natural}$, there exists a unique $\mathbf{x}(\mathbf{M}^*) \in {\mathbb{R}_+^r}^{\downarrow}$ such that $\mathbf{M}^{*} \in \mathcal{C}^{\mathbf{x}(\mathbf{M}^*)}$. Therefore, we can rewrite the optimization objective as follows. 
\begin{equation}
    \min_{\mathbf{x}\in {\mathbb{R}_+^r}^\downarrow} \left \{ \min_{\mathbf{M} \in \mathcal{C}^{\mathbf{x}}} \mathcal{L}_{\mathbf{M}^{\natural}}(\mathbf{M}) \right\}.
\end{equation}
Furthermore, a simple matrix algebra reveals that
    \begin{equation}
        \mathcal{L}_{\mathbf{M}^{\natural}}(\mathbf{M}) = \operatorname{tr}({\mathbf{M}^\natural}^\top \mathbf{M}^\natural) - 2\operatorname{tr}(\mathbf{M}^{\natural} \mathbf{M}^\top) + \operatorname{tr}(\mathbf{M^\top M}) + \lambda\norm{\mathbf M}_{\mathcal{S}^{q}}^{q}.
    \end{equation}
Then by von Neumann's trace inequality, for any $\mathbf{x} \in  {\mathbb{R}_+^r}^\downarrow$,
\begin{equation}
     \min_{\mathbf{M} \in \mathcal{C}^{\mathbf{x}}} \mathcal{L}_{\mathbf{M}^{\natural}}(\mathbf{M}) 
    \geq \min_{\mathbf{M} \in \mathcal{C}^{\mathbf{x}}} \left \{ \operatorname{tr}({\mathbf{M}^\natural}^\top \mathbf{M}^\natural) - 2\sum_{i=1}^r \sigma_i({{\mathbf{M}^\natural})}\sigma_i({{\mathbf{M}})} + \operatorname{tr}(\mathbf{M^\top M}) + \lambda\norm{\mathbf M}_{\mathcal{S}^{q}}^{q} \right \} = C.
\end{equation}
Note that the right-hand side of the inequality is constant within $\mathcal{C}^\mathbf{x}$, and this lower bound is achieved if and only if $\mathbf{M}^\natural$ and any minimizer of $\mathcal{L}_{\mathbf{M}^{\natural}}(\mathbf{M})$ within $\mathcal{C}^{\mathbf{x}}$ have the same singular vectors. Now, suppose $\mathbf{M}^*$ is a minimizer of  $\mathcal{L}_{\mathbf{M}^{\natural}}(\mathbf{M})$ within $\mathcal{C}^{\mathbf{x}}$, and the singular vectors of $\mathbf{M}^*$ do not align with the target matrix $\mathbf{M}^\natural$. Denote by $\mathbf{U}_{\mathbf{M}^\natural} \mathbf{\Sigma}_{\mathbf{M}^\natural} \mathbf{V}_{\mathbf{M}^\natural}^\top$ the singular value decomposition of $\mathbf{M}^\natural$. We know that 
\begin{equation}
    \Tilde{\mathbf{M}} := \mathbf{U}_{\mathbf{M}^\natural} 
    \begin{bmatrix}
        \operatorname{diag}(\mathbf{x}) & \mathbf{0}_{r \times (n-r)} \\ 
        \mathbf{0}_{(m-r) \times r} & \mathbf{0}_{(m-r) \times (n-r)}
    \end{bmatrix}
    \mathbf{V}_{\mathbf{M}^\natural}^\top \in \mathcal{C}^\mathbf{x}.
\end{equation}
By von Neumann’s trace inequality,
\begin{equation}
    \mathcal{L}_{\mathbf{M}^{\natural}}(\mathbf{M}^*) >  \mathcal{L}_{\mathbf{M}^{\natural}}(\Tilde{\mathbf{M}}),
\end{equation}
which is a contradiction. Hence, $\mathbf{M}^*$ must have the same singular vectors as the target matrix. This phenomenon is also observed by \citet{lu2015generalized} in the context of generalized singular value thresholding. Furthermore, this means that we can decouple the singular vectors from the singular values and rewrite the optimization problem as follows.
\begin{equation}
    \min_{ \mathbf{x}\in {\mathbb{R}_+^r}} \sum_{i=1}^r \left ( (\sigma_{i}({{\mathbf{M}^\natural}})-x_i)^2 + \lambda x_i^{q} \right).
\end{equation} 
Note that for all $x \in \mathbb{R}_+$ and $i \in [r]$, $(\sigma_{i}({{\mathbf{M}^\natural}})-x_i)^2 + \lambda x_i^{q} \geq 0$. Therefore, the minimizer must be composed of the individual minimizers of $(\sigma_{i}({{\mathbf{M}^\natural}})-x_i)^2 + \lambda x_i^{q}$. Moreover, the individual minimizers are arranged naturally in non-increasing order, as a consequence of the monotonicity of the problem \citep{lu2015generalized}, i.e., 
\begin{equation}
    a \geq b \implies \argmin_{x\in {\mathbb{R}_+}} \{( a-x)^2 + \lambda x^{q}\} \geq \argmin_{x\in {\mathbb{R}_+}} \{(b-x)^2 + \lambda x^{q}\}.
\end{equation} 
Then by \Cref{theorem:uniqueminscalar}, if 
\begin{equation}
    \mathbf{M}^\natural \notin \bigcup_{i=1}^{r}\left \{\mathbf{A} \in\mathbb{R}^{m \times n}:  \sigma({\mathbf{A}})^\top\mathbf{e}_i = \left (1-\frac{q}{2} \right)\lambda^{\frac{1}{2-q}}\left(1-q\right)^{\frac{q-1}{2-q}} \right\},
\end{equation} 
then $\mathbf{x}(\mathbf{M}^*)$ is constant across any $\mathbf{M}^* \in \Omega_{\mathbf{M}^\natural}$. Furthermore, we know that $\mathbf{M}^*$ must align with $\mathbf{M}^\natural$. Therefore,

\begin{equation}
    {\mathbf{M}^*} := \mathbf{U}_{\mathbf{M}^\natural} 
    \begin{bmatrix}
        \operatorname{diag}(\mathbf{x}(\mathbf{M}^*)) & \mathbf{0}_{r \times (n-r)} \\ 
        \mathbf{0}_{(m-r) \times r} & \mathbf{0}_{(m-r) \times (n-r)}
    \end{bmatrix}
    \mathbf{V}_{\mathbf{M}^\natural}^\top.
\end{equation}
So, the natural question to ask here is \emph{``How many different $\mathbf{U}_{\mathbf{M}^\natural}$ and $\mathbf{V}_{\mathbf{M}^\natural}^\top$ are there to generate different $\mathbf{M}^*$?}'' To answer this question, first assume that the singular values of $\mathbf{M}^\natural$ are distinct. This only leads to the incurable sign ambiguity that has no effect on $\mathbf{M}^*$. Therefore, $\mathbf{M}^*$ is unique. Now, suppose that $\mathbf{M}^\natural$ has repeated singular values. Then $\mathbf{M}^*$ must have repeated singular values at precisely the same indices as $\mathbf{M}^\natural$ by monotonicity. Therefore, any rotation ambiguity caused by the repeated singular values does not have any effect on $\mathbf{M}^*$. Therefore, $\mathbf{M}^*$ is unique.
\end{proof}
We now show that the set defined in (\ref{theset}) has Lebesgue measure zero in $\mathbb{R}^{d_L \times d_0}$.
\begin{lemma}
    For any $C \in \mathbb{R}_+$ and index $i \in [\min \{m,n \}]$, the set $S_i(C)$ 

    \begin{equation}
        S_i(C) := \left \{\mathbf{A} \in \mathbb{R}^{m \times n}: \sigma(\mathbf{A})^\top \mathbf{e}_i = C  \right \}
    \end{equation}
     has Lebesgue measure zero in $\mathbb{R}^{m \times n}$.
     \label[lemma]{lebesguemeasurezero}
\end{lemma}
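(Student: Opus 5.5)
Fix $C\ge 0$ and $i\in[r]$, where $r:=\min\{m,n\}$. Since $\sigma_i(\mathbf{A})^2$ is the $i$th largest eigenvalue of $\mathbf{A}^\top\mathbf{A}$ (or of $\mathbf{A}\mathbf{A}^\top$ when $m<n$), every $\mathbf{A}\in S_i(C)$ satisfies
\begin{equation}
p_C(\mathbf{A}) := \det\!\left(\mathbf{G}(\mathbf{A}) - C^2\mathbf{I}_r\right) = 0,
\end{equation}
where $\mathbf{G}(\mathbf{A}) := \mathbf{A}^\top\mathbf{A}\in\mathbb{R}^{n\times n}$ if $r=n$ and $\mathbf{G}(\mathbf{A}) := \mathbf{A}\mathbf{A}^\top\in\mathbb{R}^{m\times m}$ if $r=m$. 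Hence $S_i(C)\subseteq Z(p_C) := \{\mathbf{A}: p_C(\mathbf{A})=0\}$, and it suffices to show that $Z(p_C)$ is Lebesgue-null in $\mathbb{R}^{m\times n}\cong\mathbb{R}^{mn}$.

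The map $p_C$ is a polynomial in the $mn$ entries of $\mathbf{A}$, because the entries of $\mathbf{G}(\mathbf{A})$ are quadratic in those of $\mathbf{A}$ and the determinant is polynomial in its entries. I will use the standard fact that the zero set of a polynomial on $\mathbb{R}^k$ that is not identically zero has Lebesgue measure zero. Its proof is a short induction on $k$ using Fubini's theorem, together with the fact that a nonzero univariate polynomial has finitely many roots. So the remaining task is to show that $p_C\not\equiv 0$.

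For that, I evaluate $p_C$ at a single explicit matrix. Take $\mathbf{A}_t := t\,[\mathbf{I}_r\;\mathbf{0}]$ or $t\,[\mathbf{I}_r;\mathbf{0}]$, according to the shape of $\mathbf{A}$. Then $\mathbf{G}(\mathbf{A}_t)=t^2\mathbf{I}_r$, so
\begin{equation}
p_C(\mathbf{A}_t)=(t^2-C^2)^r .
\end{equation}
Choosing $t=C+1$ makes this nonzero. This argument also covers $C=0$, where the set is contained in the rank-deficient matrices.

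I expect the main obstacle to be only the bookkeeping needed to make the reduction airtight. First, I must pick the $r\times r$ Gram matrix, not the larger one: otherwise, when $m\neq n$, the larger Gram matrix always has a zero eigenvalue, and for $C=0$ the polynomial would vanish identically. Second, I must cite or briefly justify the measure-zero property of zero sets of nonzero polynomials. Beyond this, the argument is routine. As a consequence, the union in (\ref{theset}) over $i\in[r]$ with the fixed threshold $C=\left(1-\frac{q}{2}\right)\lambda^{\frac{1}{2-q}}(1-q)^{\frac{q-1}{2-q}}$ is a finite union of null sets, and is therefore null.
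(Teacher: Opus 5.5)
Your proof is correct and follows the paper's route. Both use the same polynomial $\det(\mathbf{G}(\mathbf{A})-C^2\mathbf{I}_r)$ built from the $r\times r$ Gram matrix, and the same principle that a polynomial (real-analytic function) that is not identically zero vanishes only on a Lebesgue-null set. The only difference is how you certify that $p_C\not\equiv 0$: you evaluate at $(C+1)[\mathbf{I}_r\;\mathbf{0}]$, whereas the paper detours through the scaled Stiefel set $\{\mathbf{A}^\top\mathbf{A}=C^2\mathbf{I}\}$ with an argument that conflates $p$ vanishing at particular points with $p$ being the zero polynomial. Your one-point evaluation is the rigorous way to close that step, and it also covers $C=0$, which the paper's proof, written for positive $C$, leaves out.
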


\begin{proof}
    Without loss of generality, assume $m \geq n$, fix a positive scalar $C$ and an index $i \in  [\min \{m,n \}]$. Then for any $\mathbf{A} \in S_i(C)$, define 
    \begin{equation}
        p(\mathbf{A}):= \operatorname{det}(\mathbf{A}^\top \mathbf{A}-C^2 \mathbf{I}).
    \end{equation}
    Note that each entry in $\mathbf{A}^\top \mathbf{A}$ can be expressed as a quadratic polynomial of the entries in $\mathbf{A}$; that is, 
    \begin{equation}
        {(\mathbf{A}^{\top} \mathbf{A})}_{ab} = \sum_{k=1}^n \mathbf{A}_{ak} \mathbf{A}_{kb}. 
    \end{equation}
    Since subtracting $C^2\mathbf{I}$ from $\mathbf{A}^\top \mathbf{A}$ only shifts the diagonal entries, each entry in $\mathbf{A}^\top \mathbf{A}-C^2 \mathbf{I}$ is still a quadratic polynomial of the entries in $\mathbf{A}$. Moreover, by definition, for any $\mathbf{X} = [x_{ij}] \in \mathbb{R}^{n\times n}$, 
    \begin{equation}
        \operatorname{det}(\mathbf{X}) = \sum_{\sigma \in S_n} \operatorname{sgn}(\sigma)\prod_{i=1}^n x_{i\sigma(i)},
    \end{equation}
    where $\operatorname{sgn}$ is the sign function over all possible permutations from $[n]$ to $[n]$, which returns $+1$ if the permutation is even\footnote{A permutation is even if the number of inversions it contains is even, and odd otherwise.}, and $-1$ otherwise. This implies that $p(\mathbf{A})$ is a $2n$-degree polynomial of the entries in $\mathbf{A}$. Note that 
    \begin{equation}
        p(\mathbf{A}) = 0 \iff C^2 \: \: \text{is an eigenvalue of} \: \: \mathbf{A^\top A} \iff C \: \: \text{is a singular value of} \: \: \mathbf{A}.
    \end{equation}
    This means that
    \begin{equation}
        S_i(C) \subseteq \{\mathbf{A} \in \mathbb{R}^{m \times n}: p(\mathbf{A}) = 0\} = \bigcup_{i=1} ^{\min\{m,n\}} S_i(C).
    \end{equation}
    Therefore, $S_i(C) \subseteq \{\mathbf{A} \in \mathbb{R}^{m \times n}: p(\mathbf{A}) = 0\}$. If a real analytic function vanishes on a set of positive measure, then it is identically zero \citep[Chapter~4]{krantz2002primer}. Note that $p$ is the zero polynomial if and only if $\mathbf{A^\top A} = C^2\mathbf{I}$. Therefore, we define $\mathcal{O} \subseteq \{\mathbf{A} \in \mathbb{R}^{m \times n}: p(\mathbf{A}) = 0\}$ such that
    \begin{equation}
        \mathcal{O}:= \left \{C\mathbf{A} \in \mathbb{R}^{m \times n}: \mathbf{A^\top A} = \mathbf{I} \right\}.
    \end{equation}
    Note that $p$ is a nontrivial polynomial on $\mathcal{O}^C$ whereas it is the zero polynomial on $\mathcal{O}$. Hence, $\mathcal{O}^C$ must have Lebesgue measure zero in $\mathbb{R}^{m \times n}$. Furthermore, $\mathcal{O}$ is a scaled \emph{Stiefel} manifold, and 
    \begin{equation}
        \operatorname{dim}(\mathcal{O}) = mn-\frac{n(n+1)}{2}.
    \end{equation}
    This observation implies that $\mathcal{O}$ has Lebesgue measure zero in $\mathbb{R}^{m \times n}$. The union of two sets that have Lebesgue measure zero in $\mathbb{R}^{m \times n}$ has Lebesgue measure zero in $\mathbb{R}^{m \times n}$ . Therefore, $\{\mathbf{A} \in \mathbb{R}^{m \times n}: p(\mathbf{A}) = 0\}$ has Lebesgue measure zero in $\mathbb{R}^{m \times n}$. Since $S_i(C) \subseteq \{\mathbf{A} \in \mathbb{R}^{m \times n}: p(\mathbf{A}) = 0\}$, $S_i(C)$ has Lebesgue measure zero in $\mathbb{R}^{m \times n}$. The same proof technique applies to the case where $m < n$.
    \end{proof}

\begin{remark}
    Since the union of a finite number of sets that have Lebesgue measure zero in $\mathbb{R}^{m \times n}$ individually has Lebesgue measure zero in $\mathbb{R}^{m \times n}$, by \Cref{lebesguemeasurezero}, the set
    \begin{equation}
        \bigcup_{i=1}^{r}\left \{\mathbf{A} \in\mathbb{R}^{m \times n}:  \sigma({\mathbf{A}})^\top\mathbf{e}_i = \left (1-\frac{q}{2} \right)\lambda^{\frac{1}{2-q}}\left(1-q\right)^{\frac{q-1}{2-q}} \right\}
    \end{equation}
    has Lebesgue measure zero in $\mathbb{R}^{m \times n}.$
\end{remark}

As we formalized in \Cref{chenetal}, \citet{chen2025complete} showed that, at any minimizer of the regularized deep matrix factorization problem, the layers are Frobenius-norm balanced. We extend this result as follows.
\begin{corollary}
Consider the optimization problem in (\ref{mainobjectivefunction}). Define $g: \mathbb{R}^N \times [L] \rightarrow \mathbb{R}_+$ such that $g(\mathbf{w},i) = \norm{\mathbf{W}_i}_F$, where $\mathbf{W}_i$ is the $i$th layer (factor). If 
\begin{equation}
        \mathbf{M}^\natural \notin \bigcup_{i=1}^{r}\left \{\mathbf{A} \in\mathbb{R}^{m \times n}:  \sigma({\mathbf{A}})^\top\mathbf{e}_i = \left (1-\frac{q}{2} \right)\lambda^{\frac{1}{2-q}}\left(1-q\right)^{\frac{q-1}{2-q}} \right\},
    \end{equation}
then $g$ is constant over $\Omega \times [L]$.
\label[corollary]{corollary:extended}
\end{corollary}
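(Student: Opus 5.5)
The plan is to combine the balancedness of Theorem~\ref{chenetal} with the uniqueness of the end-to-end product from Theorem~\ref{theorem:maintheorem}, transferred to the factored problem through Theorem~\ref{equivalentoptimization}. Fix $\mathbf{M}^\natural$ outside the exceptional set and take any $\mathbf{w}^* \in \Omega$.

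\emph{Step 1: the end-to-end product is fixed.} By Theorem~\ref{equivalentoptimization}, $\mathbf{M}^* := \prod_{i=1}^L \mathbf{W}_i^*$ lies in $S = \Omega_{\mathbf{M}^\natural}$. Theorem~\ref{theorem:maintheorem} says this set is a singleton. Hence $\mathbf{M}^*$, and therefore its singular value vector $\sigma(\mathbf{M}^*)$, is the same for every $\mathbf{w}^* \in \Omega$.

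\emph{Step 2: every layer has the same singular values.} Theorem~\ref{chenetal} gives $\mathbf{W}_i^*\mathbf{W}_i^{*\top} = \mathbf{W}_{i+1}^{*\top}\mathbf{W}_{i+1}^*$ for all $i$. Consequently all layers share the same nonzero singular values $s_1 \ge s_2 \ge \cdots$, padded with zeros. I would then show by induction that
\[
(\mathbf{W}_L^*\cdots\mathbf{W}_1^*)^\top(\mathbf{W}_L^*\cdots\mathbf{W}_1^*) = (\mathbf{W}_1^{*\top}\mathbf{W}_1^*)^L .
\]
The induction peels off one layer at a time: each step replaces $\mathbf{W}_{k+1}^{*\top}\mathbf{W}_{k+1}^*$ by $\mathbf{W}_k^*\mathbf{W}_k^{*\top}$ and regroups the product. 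This is the standard telescoping identity for balanced factors.

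It follows that the singular values of $\mathbf{M}^*$ are exactly $s_j^L$. Since $\sigma(\mathbf{M}^*)$ is fixed by Step 1 and $s_j \ge 0$, the values $s_j = \sigma_j(\mathbf{M}^*)^{1/L}$ are uniquely determined.

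\emph{Step 3: conclude.} We have $\|\mathbf{W}_i^*\|_F^2 = \sum_j s_j^2 = \sum_j \sigma_j(\mathbf{M}^*)^{2/L} = \|\mathbf{M}^*\|_{\mathcal{S}^{2/L}}^{2/L}$. This quantity does not depend on $i$ (by balancedness) and does not depend on the choice of $\mathbf{w}^* \in \Omega$ (by Step 1). Hence $g$ is constant on $\Omega \times [L]$.

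As an alternative route to Step 3, I could skip the singular values. At a minimizer the factored objective equals $\|\mathbf{M}^\natural - \mathbf{M}^*\|_F^2 + \frac{\lambda}{L}\sum_i \|\mathbf{W}_i^*\|_F^2$, and it also equals the minimum value of \eqref{equiavlentform}. The data term is fixed by uniqueness, so $\sum_i \|\mathbf{W}_i^*\|_F^2$ is fixed. Balancedness gives $\|\mathbf{W}_i^*\|_F^2 = \operatorname{tr}(\mathbf{W}_i^*\mathbf{W}_i^{*\top}) = \operatorname{tr}(\mathbf{W}_{i+1}^{*\top}\mathbf{W}_{i+1}^*)$, so all layer norms are equal, and each therefore equals the fixed sum divided by $L$.

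This alternative is cleaner and avoids the induction, so I would use it as the main argument. The only point needing care is matching the minimum values of the two problems. That follows directly from Theorem~\ref{equivalentoptimization} together with the variational form in \Cref{theorem:variationalform}, so I expect no real obstacle.
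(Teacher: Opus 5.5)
Your proposal is correct, and the route you adopt as your main argument is essentially the paper's proof. The optimal value $\mathcal{L}(\mathbf{w}^*)$ is common to all of $\Omega$, and the data-fit term is fixed by uniqueness of the end-to-end product (Theorems~\ref{equivalentoptimization} and~\ref{theorem:maintheorem}), so the regularizer is fixed and balancedness splits it equally among the layers. You do not need to match this value with the minimum of \eqref{equiavlentform}, since $\mathcal{L}$ takes the same value on all of $\Omega$ by definition. Your first route via $\mathbf{M}^{*\top}\mathbf{M}^* = (\mathbf{W}_1^{*\top}\mathbf{W}_1^*)^L$ is also valid, and it additionally identifies the common value as $\|\mathbf{W}_i^*\|_F^2 = \|\mathbf{M}^*\|_{\mathcal{S}^{2/L}}^{2/L}$.
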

\begin{proof}
    By definition, for any $\mathbf{w}^* \in \Omega$, $\mathcal{L}(\mathbf{w}^*) = \epsilon_*$, where $\epsilon_*$ is the optimal error. If $\mathbf{M}^\natural$ is not in the Lebesgue measure-zero set, then $D(\mathbf{w}^*)$ is constant across $\Omega$. This implies that $R(\mathbf{w}^*) = \mathcal{L}(\mathbf{w}^*)-D(\mathbf{w}^*)$ is constant across $\Omega$. Since layers (factors) are balanced for any $\mathbf{w}^* \in \Omega$,  $\norm{\mathbf{W}_i^*}_F$ is constant for any $\mathbf{w}^* \in \Omega$ and $i \in [L]$.
\end{proof}

\begin{figure*}
    \centering

    \begin{subfigure}[t]{0.49\textwidth}
        \centering
        \includegraphics[width=\linewidth]{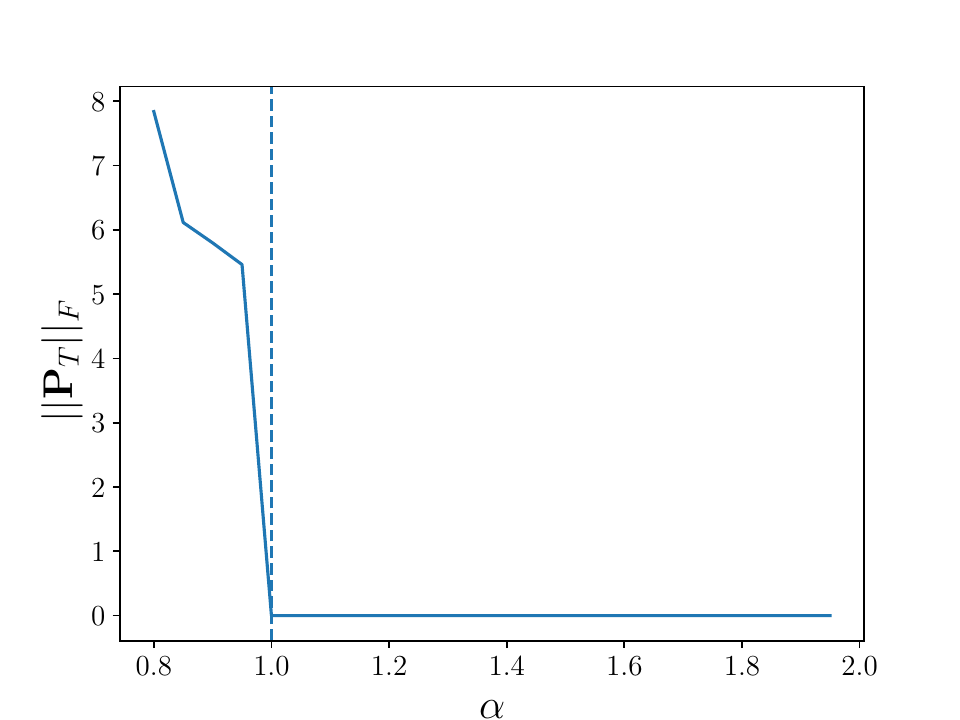}
        \subcaption{$\ell^2$-regularized depth-3 matrix factorization of a Gaussian matrix in $\mathbb{R}^{25 \times 30}$. The hidden layer is a $32$-by-$32$ square matrix.}
    \end{subfigure}
    \hfill
    \begin{subfigure}[t]{0.49\textwidth}
        \centering
        \includegraphics[width=\linewidth]{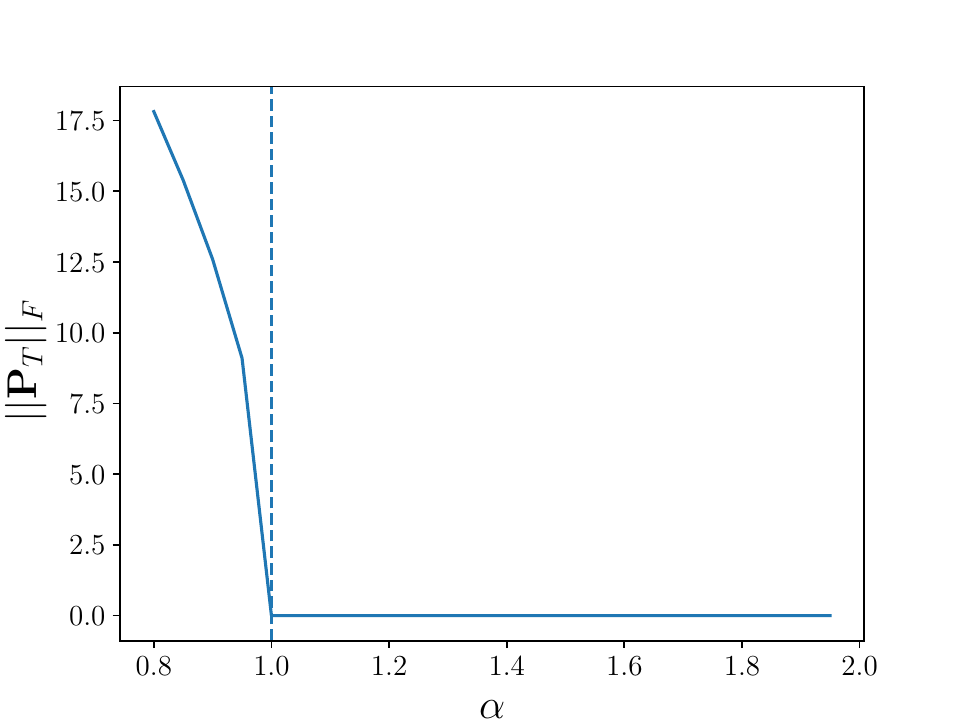}
        \subcaption{$\ell^2$-regularized depth-4 matrix factorization of a Gaussian matrix in $\mathbb{R}^{50 \times 50}$. The hidden layers are $80$-by-$80$ square matrices.}
    \end{subfigure}
    \caption{Frobenius norm of the converged point obtained after $ T=5000$ steps of GD with regularization parameter $\lambda = \alpha \tau$ applied to \eqref{mainobjectivefunction},  where $\tau = \left(\norm{\sigma(\mathbf{M}^\natural)}_{\infty} /\left (1-\frac{q}{2} \right)\left(1-q\right)^{({q-1})/({2-q})}\right)^{(2-q)}$.}
    \label{fig:figure3}
\end{figure*}

\subsection{Collapsing of the Unique Minimum to Zero}

\begin{corollary}
    Consider the optimization problem in~(\ref{mainobjectivefunction}). If $ \lambda > \left(\norm{\sigma(\mathbf{M}^\natural)}_{\infty} / \left (1-\frac{q}{2} \right)\left(1-q\right)^{\frac{q-1}{2-q}}\right)^{2-q}$, then for any $\mathbf{w}^* \in \Omega$, $\prod_{i=1}^L \mathbf{W}_i^* = \mathbf{0}$. 
    \label[corollary]{corollary:collapse}
\end{corollary}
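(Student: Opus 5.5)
The plan is to reduce the corollary to the scalar threshold of \Cref{theorem:uniqueminscalar}, via \Cref{equivalentoptimization} and the decoupling used in the proof of \Cref{theorem:maintheorem}. Write $\tau_\lambda := \left(1-\frac{q}{2}\right)\lambda^{\frac{1}{2-q}}(1-q)^{\frac{q-1}{2-q}}$ for the scalar threshold, with $q=2/L$ and $L\ge 3$ so that $0<q<1$. For $L\in\{1,2\}$ the displayed expression is degenerate, so I would restrict attention to $L\geq 3$ and note this.

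\emph{Step 1: invert the hypothesis.} Since $2-q>0$, raising both sides of the hypothesis to the power $1/(2-q)$ is monotone. This rewrites $\lambda > \left(\|\sigma(\mathbf{M}^\natural)\|_\infty / ((1-\frac{q}{2})(1-q)^{\frac{q-1}{2-q}})\right)^{2-q}$ as
\begin{equation}
\sigma_1(\mathbf{M}^\natural)=\|\sigma(\mathbf{M}^\natural)\|_\infty < \tau_\lambda .
\end{equation}
Because the singular values are ordered, it follows that $\sigma_i(\mathbf{M}^\natural)<\tau_\lambda$ for every $i\in[r]$. In particular $\mathbf{M}^\natural$ lies outside the exceptional set in (\ref{theset}), so \Cref{theorem:maintheorem} applies and the end-to-end minimizer is unique.

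\emph{Step 2: identify the minimizer.} By the proof of \Cref{theorem:maintheorem}, every minimizer $\mathbf{M}^*$ of (\ref{equiavlentform}) shares its singular vectors with $\mathbf{M}^\natural$. Its singular values solve the decoupled problems $\min_{x_i\ge 0} (\sigma_i(\mathbf{M}^\natural)-x_i)^2+\lambda x_i^q$. \Cref{theorem:uniqueminscalar} treats exactly this scalar problem with $m=\sigma_i(\mathbf{M}^\natural)$, after restricting to $\rho\geq 0$ via the sign argument given there. Since $|m|<\tau_\lambda$, its unique minimizer is $0$. Hence $\mathbf{x}(\mathbf{M}^*)=\mathbf{0}$, and so $\mathbf{M}^*=\mathbf{0}$ is the unique element of $S$.

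\emph{Step 3: lift to parameter space.} By \Cref{equivalentoptimization}, for any $\mathbf{w}^*\in\Omega$ we have $\prod_i \mathbf{W}_i^*\in S=\{\mathbf{0}\}$, which is the claim.

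The main obstacle is Step 2, specifically the scalar fact that $|m|<\tau_\lambda$ forces the unique minimizer to be $0$. \Cref{theorem:uniqueminscalar} asserts this, but its proof only locates the tie point. If I needed to justify it independently, I would argue as follows. The tie value $m=\tau_\lambda$ is the unique $m$ at which $\min_{\rho>0}\phi(\rho)=\phi(0)=m^2$. Moreover, $\min_{\rho>0}[(m-\rho)^2+\lambda\rho^q-m^2]=\min_{\rho>0}[\rho^2-2m\rho+\lambda\rho^q]$ is strictly decreasing in $m$ for $m>0$, because its derivative in $m$ is $-2\rho<0$ at the minimizer. So for $m<\tau_\lambda$ the positive local minimum is strictly worse than $0$, and when $\phi'>0$ throughout, $0$ is trivially optimal. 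Everything else is bookkeeping.
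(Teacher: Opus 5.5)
Your proof is correct and follows the route the paper intends; the corollary is stated without a separate proof. You invert the hypothesis to get $\sigma_1(\mathbf{M}^\natural)<\tau_\lambda$, apply the case $|m|<\tau_\lambda$ of \Cref{theorem:uniqueminscalar} to each decoupled singular-value problem from \Cref{theorem:maintheorem}, and lift to $\Omega$ via \Cref{equivalentoptimization}. One small correction to your optional side argument: $\inf_{\rho>0}[\rho^2-2m\rho+\lambda\rho^q]$ is identically $0$ on $(0,\tau_\lambda]$, so it is not strictly decreasing there, and the step is better closed by the pointwise comparison $\rho^2-2m\rho+\lambda\rho^q>\rho^2-2\tau_\lambda\rho+\lambda\rho^q\geq 0$ for every $\rho>0$ when $m<\tau_\lambda$ (or directly by $\rho^2-2m\rho+\lambda\rho^q=\rho\,(\rho+\lambda\rho^{q-1}-2m)$ together with $\min_{\rho>0}(\rho+\lambda\rho^{q-1})=2\tau_\lambda$).
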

\noindent
As shown in Fig. \ref{fig:figure3}, when the regularization parameter $\lambda$ exceeds the threshold, the end-to-end function after $T$ iterations collapses to $0$. Note that the optimization objective in (\ref{mainobjectivefunction}) has spurious minima when $L \geq 3$ \citep[Theorem~2.2]{chen2025complete}. Furthermore, when

\begin{equation}
\lambda^L \neq \sigma(\mathbf{M}^\natural)_i\left [L^L\left(
\left(\frac{L-2}{L}\right)^{\frac{L}{2(L-1)}}
+
\left(\frac{L}{L-2}\right)^{\frac{L-2}{2(L-1)}}
\right)^{-2(L-1)}\right]  \quad \forall i \in [\min\{m,n\}], 
\end{equation}
we know that every critical point of the regularized deep matrix factorization loss in (\ref{mainobjectivefunction}) is either a local minimizer or a strict saddle point \citep[Corollary~2.3]{chen2025complete}. Under these conditions, the loss landscape is not benign. Therefore, the converged points shown in Fig.~\ref{fig:figure3} are the result of an exhaustive hyperparameter search performed at each regularization level.

\subsection{A Lower Bound for the Trace of the Hessian}
\label{sec:hessian}
By leveraging \Cref{corollary:extended}, we present a lower bound for the trace of the Hessian matrix of the regularized deep matrix factorization loss in (\ref{mainobjectivefunction}) at any minimizer. We leave the question of whether this lower bound is achieved at a flat minimum as an open problem.

\begin{theorem}
    Consider the objective function in (\ref{mainobjectivefunction}). If 
\begin{equation}
        \mathbf{M}^\natural \notin \bigcup_{i=1}^{r}\left \{\mathbf{A} \in\mathbb{R}^{m \times n}:  \sigma({\mathbf{A}})^\top\mathbf{e}_i = \left (1-\frac{q}{2} \right)\lambda^{\frac{1}{2-q}}\left(1-q\right)^{\frac{q-1}{2-q}} \right\},
    \end{equation}
    then for any $\mathbf{w}^* \in \Omega$, 
    \begin{equation}
        \operatorname{tr}(\nabla^2 \mathcal{L}(\mathbf{w}^*)) \geq 2L \frac{\norm{\mathbf{P}^*}^2}{{g^*}^2} + \frac{2\lambda}{L}N,
    \end{equation}
    where $\mathbf{P}^* := \prod_{i=1}^L \mathbf{W}^*_i$ and $g^* = g(\mathbf{w}^*,i)$. Note that $g$ is constant over $\Omega \times [L]$.
    \label{theorem:trace}
\end{theorem}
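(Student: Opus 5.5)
The plan is to compute the diagonal of the Hessian of $\mathcal{L}$ in \eqref{mainobjectivefunction} block by block and sum it. Write $\mathbf{A}_i := \prod_{j=i+1}^L \mathbf{W}_j$ and $\mathbf{B}_i := \prod_{j=1}^{i-1}\mathbf{W}_j$, so that $\mathbf{P} = \mathbf{A}_i \mathbf{W}_i \mathbf{B}_i$. For a direction $\boldsymbol{\Delta}$ in the $i$th block, the second directional derivative of the data term is
\begin{equation}
2\norm{\mathbf{A}_i\boldsymbol{\Delta}\mathbf{B}_i}_F^2 ,
\end{equation}
and there is no cross term $-2\langle \mathbf{M}^\natural-\mathbf{P}, \cdot\rangle$ inside a single block, since $\mathbf{P}$ is linear in $\mathbf{W}_i$. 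The regularizer contributes $\frac{2\lambda}{L}\norm{\boldsymbol{\Delta}}_F^2$. Summing over an orthonormal basis of each block gives
\begin{equation}
\operatorname{tr}(\nabla^2\mathcal{L}(\mathbf{w})) = 2\sum_{i=1}^L \norm{\mathbf{A}_i}_F^2\norm{\mathbf{B}_i}_F^2 + \frac{2\lambda}{L}N,
\end{equation}
using $\sum_{a,b}\norm{\mathbf{A}_i \mathbf{e}_a\mathbf{e}_b^\top \mathbf{B}_i}_F^2 = \norm{\mathbf{A}_i}_F^2\norm{\mathbf{B}_i}_F^2$. This identity holds at every $\mathbf{w}$, so it is enough to bound each product $\norm{\mathbf{A}_i}_F\norm{\mathbf{B}_i}_F$ from below at a minimizer.

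For that lower bound, use submultiplicativity in the mixed form
\begin{equation}
\norm{\mathbf{P}^*}_F = \norm{\mathbf{A}_i^*\mathbf{W}_i^*\mathbf{B}_i^*}_F \le \norm{\mathbf{A}_i^*}_F\,\sigma_{\max}(\mathbf{W}_i^*)\,\norm{\mathbf{B}_i^*}_F .
\end{equation}
Since $\sigma_{\max}(\mathbf{W}_i^*) \le \norm{\mathbf{W}_i^*}_F = g^*$, this gives
\begin{equation}
\norm{\mathbf{A}_i^*}_F^2\norm{\mathbf{B}_i^*}_F^2 \ge \frac{\norm{\mathbf{P}^*}_F^2}{{g^*}^2}
\end{equation}
for each $i$, provided $g^*>0$. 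Summing over the $L$ blocks yields the stated bound $2L\norm{\mathbf{P}^*}^2/{g^*}^2 + \frac{2\lambda}{L}N$, reading $\norm{\cdot}$ in the statement as the Frobenius norm.

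Two points remain. First, by \Cref{corollary:extended}, $g(\mathbf{w}^*,i)$ takes the same value for every $i$ and every minimizer, so $g^*$ is well defined. By \Cref{theorem:maintheorem}, $\mathbf{P}^*$ is the same for all minimizers. Together these make the bound uniform over $\Omega$. Second, there is the degenerate case $g^*=0$. Then $\mathbf{P}^*=\mathbf{0}$ and the ratio must be read as $0$ (or the statement restricted to $\mathbf{P}^*\neq \mathbf{0}$), and the bound reduces to $\frac{2\lambda}{L}N$, which the trace formula gives directly.

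The main obstacle is getting the Hessian trace formula exactly right, including the normalization by $\lambda/L$ and the claim that no residual term appears on the diagonal blocks. I would verify it by expanding $\mathcal{L}(\mathbf{w}+t\boldsymbol{\Delta})$ to second order in $t$ for $\boldsymbol{\Delta}$ supported on one block. If a sharper bound is wanted, the balancedness in \Cref{chenetal} could replace $\sigma_{\max}(\mathbf{W}_i^*)$ by the common top singular value, but the Frobenius bound already suffices for the stated inequality.
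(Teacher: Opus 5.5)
Your proposal is correct and follows the paper's proof essentially step for step. It uses the same block-diagonal Hessian trace identity $\operatorname{tr}(\nabla^2\mathcal{L}(\mathbf{w})) = 2\sum_{i=1}^L \|\mathbf{A}_i\|_F^2\|\mathbf{B}_i\|_F^2 + \frac{2\lambda}{L}N$ (which the paper obtains via Kronecker products rather than directional derivatives), then the submultiplicativity bound $\|\mathbf{P}^*\|_F \le \|\mathbf{A}_i\|_F\|\mathbf{W}_i^*\|_F\|\mathbf{B}_i\|_F$, and then \Cref{corollary:extended} to make $g^*$ uniform over $\Omega$. Your detour through $\sigma_{\max}(\mathbf{W}_i^*)$ is only a slightly sharper intermediate step, and your handling of the degenerate case $g^*=0$ (where $\mathbf{P}^*=\mathbf{0}$ and the ratio is undefined) covers a point the paper leaves unaddressed.
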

\begin{proof}
    Note that we can rewrite the optimization objective in (\ref{mainobjectivefunction}) as follows.
    \begin{align}
        \mathcal{L}(\mathbf{w}) &= \operatorname{tr}\left[\left(\mathbf{M}^\natural - \mathbf{A}_k\mathbf{W}_k\mathbf{B}_k\right)^\top
\left(\mathbf{M}^\natural - \mathbf{A}_k\mathbf{W}_k\mathbf{B}_k\right)\right] + \frac{\lambda}{L}\sum_{i=1}^L\operatorname{tr}\left(\mathbf{W}_i^\top\mathbf{W}_i\right) \\  &= 
\operatorname{tr}\bigl({\mathbf{M}^\natural}^\top \mathbf{M}^\natural\bigr)
-2\,\operatorname{tr}\bigl({\mathbf{M}^\natural}^\top\mathbf{A}_k\mathbf{W}_k\mathbf{B}_k\bigr)
+\operatorname{tr}\!\Bigl[(\mathbf{A}_k\mathbf{W}_k\mathbf{B}_k)^\top(\mathbf{A}_k\mathbf{W}_k\mathbf{B}_k)\Bigr]
+\frac{\lambda}{L}\sum_{i=1}^L\operatorname{tr}\bigl(\mathbf{W}_i^\top\mathbf{W}_i\bigr),
    \end{align}
    where $\mathbf{A}_k := \prod_{i=k+1}^L \mathbf{W}_i$ and $\mathbf{B}_k = \prod_{i=1}^{k-1}\mathbf{W}_i$. We know that for any matrices $\mathbf{B},\mathbf{C},\mathbf{X}$ for which the product $\mathbf{B}^\top \mathbf{X}^\top \mathbf{C}\mathbf{X}\mathbf{B}$ exists, the following holds:

\begin{equation}
\frac{\partial}{\partial \mathbf{X}}
\operatorname{tr}\left(\mathbf{B}^\top \mathbf{X}^\top \mathbf{C} \mathbf{X} \mathbf{B}\right) =
\mathbf{C}\,\mathbf{X}\,\mathbf{B}\,\mathbf{B}^\top +
\mathbf{C}^\top\,\mathbf{X}\,\mathbf{B}\,\mathbf{B}^\top.
\end{equation}
Therefore, 
\begin{equation}
    \frac{\partial \mathcal{L}(\mathbf{w})}{\partial \mathbf{W}_k} = 2\mathbf{A}_k^\top \left(\mathbf{A}_k\mathbf{W}_k\mathbf{B}_k - \mathbf{M}^\natural \right)\mathbf{B}_k^\top
\;+\;\frac{2\lambda}{L}\,\mathbf{W}_k.
\end{equation}
By leveraging basic matrix calculus \citep{petersen2008matrix}, the block diagonal entries of the $\nabla^2 \mathcal{L}(\mathbf{w})$ are 
\begin{align}
     \frac{\partial^2\mathcal{L}(\mathbf{w})}{\partial \mathbf{W}_k^2} &= 2\left(\mathbf{B}_k\mathbf{B}_k^\top \otimes \mathbf{A}_k^\top\mathbf{A}_k\right) +\frac{2\lambda}{L}\mathbf{I} \\
     &=  2\left(\mathbf{B}_k\otimes \mathbf{A}_k^\top \right)\left(\mathbf{B}_k^\top\otimes \mathbf{A}_k \right)
+\frac{2\lambda}{L}\mathbf{I},
\end{align}
by using the fact that for any matrices $\mathbf{A}, \mathbf{B}, \mathbf{C}, \mathbf{D}$ such that the matrix products $\mathbf{AB}$ and $\mathbf{CD}$ are well defined, we have
\begin{equation}
(\mathbf{A} \otimes \mathbf{C})(\mathbf{B} \otimes \mathbf{D}) = \mathbf{AB} \otimes \mathbf{CD}.
\end{equation}
Moreover, note that transposition is distributive over the Kronecker product and---unlike the matrix product---order is preserved. This means that for any matrices $\mathbf{X}$ and $\mathbf{Y}$
\begin{equation}
    (\mathbf{X} \otimes \mathbf{Y})^\top = (\mathbf{X}^\top \otimes \mathbf{Y}^\top).
\end{equation}
Thus,
\begin{equation}
    \frac{\partial^2\mathcal{L}(\mathbf{w})}{\partial \mathbf{W}_k^2} = 2\left(\mathbf{B}_k^\top\otimes \mathbf{A}_k \right)^\top \left(\mathbf{B}_k^\top\otimes \mathbf{A}_k \right) +\frac{2\lambda}{L}\mathbf{I}.
\end{equation}
This means that 
\begin{align}
    \operatorname{tr}(\nabla^2 \mathcal{L}(\mathbf{w})) &= \frac{2\lambda}{L}N + \sum_{i=1}^L 2 \norm{\mathbf{B}_i^\top \otimes \mathbf{A}_i}_F^2 \\
    &= \frac{2\lambda}{L}N + \sum_{i=1}^L 2 \norm{\mathbf{B}_i}_F^2 \norm{\mathbf{A}_i}_F^2,
\end{align}
where $N = \sum_{i=1}^{L} d_i d_{i-1}$. Suppose 
\begin{equation}
        \mathbf{M}^\natural \notin \bigcup_{i=1}^{r}\left \{\mathbf{A} \in\mathbb{R}^{m \times n}:  \sigma({\mathbf{A}})^\top\mathbf{e}_i = \left (1-\frac{q}{2} \right)\lambda^{\frac{1}{2-q}}\left(1-q\right)^{\frac{q-1}{2-q}} \right\}.
\end{equation}
Then for any $\mathbf{w}^* \in \Omega$, $\mathbf{A}_i \mathbf{W}_i^* \mathbf{B}_i = \prod_{i=1}^L \mathbf{W}_i^* =: \mathbf{P}^*$, which is constant. Furthermore, the Cauchy-Schwarz inequality states that 
\begin{equation}
    \norm{\mathbf{P}^*}_F= \norm{\mathbf{A}_i \mathbf{W}^*_i \mathbf{B}_i}_F \leq \norm{\mathbf{A}_i}_F \norm{\mathbf{W}^*_i}_F \norm{\mathbf{B}_i}_F.
\end{equation}
Hence for all $i \in [L]$, 
\begin{equation}
   \norm{\mathbf{A}_i}_F^2 \norm{\mathbf{B}_i}_F^2 \geq \frac{\norm{\mathbf{P}^*}_F^2}{\norm{\mathbf{W}^*_i}_F^2}.
\end{equation}
We showed in Corollary \ref{corollary:extended} that $\norm{\mathbf{W}^*_i}$ is constant over $\Omega \times [L]$. Denote $g^* := \norm{\mathbf{W}^*_i}$. This implies that for all $\mathbf{w}^* \in \Omega$,
\begin{equation}
    \operatorname{tr}(\nabla^2 \mathcal{L}(\mathbf{w}^*)) \geq \frac{2\lambda}{L}N + 2L\frac{\norm{\mathbf{P}^*}_F^2}{g*^2}.
\end{equation}
\end{proof}

\section{Discussion}
\label{sec:discussion}
As we showed in \Cref{equivalentoptimization}, in order to understand the structure of the set of functions represented by the minimizers of the regularized deep matrix factorization problem, we can examine the set of minimizers of the Schatten-$2/L$ regularized problem.  Beyond this observation, the Schatten-$p$ regularized problem is of independent interest in low-rank matrix recovery. 

Optimization problems designed to recover an underlying low-rank matrix from a small number of linear measurements incorporate an $\ell^0$ penalty on the singular values to penalize high-rank solutions. However, this problem is NP-hard, and so the $\ell^1$ norm is used as a convex surrogate for the $\ell^0$ norm \citep{candes2005decoding}. Nevertheless, the $\ell^1$ penalty has been observed to be suboptimal for this task, with non-convex penalties usually outperforming $\ell^1$ regularization \citep{candes2008enhancing}. Therefore, \citet{lu2014generalized} proposed to solve the following non-convex low-rank matrix recovery problem.

\begin{equation}
    \min_{\mathbf{X} \in \mathbb{R}^{m \times n}} f(\mathbf{X}) := h(\mathbf{X}) + \sum_{j=1}^{\min \{m,n\}}g(\sigma_i(\mathbf{X})),
\end{equation}
where $g:\mathbb{R}_+ \rightarrow \mathbb{R}_+$ is a continuous, concave, and non-decreasing function. To solve this family of non-convex problems efficiently, \citet{lu2015generalized} proposed the Generalized Proximal Gradient (GPG) method whose update rule takes the following form:

\begin{equation}
    \mathbf{X}_{k+1} = \textbf{Prox}_{\frac{1}{\eta}g}^{\sigma}\left(\mathbf{X}_k - \frac{1}{\eta} \nabla h(\mathbf{X}_k)\right),
\end{equation}
where 
\begin{equation}
    \textbf{Prox}_g^\sigma(\mathbf{M}^\natural) := \argmin_{\mathbf{M}\in \mathbb{R}^{m \times n}} \frac{1}{2}\norm{\mathbf{M}^\natural-\mathbf{M}}_F^2 + \sum_{j=1}^{\min \{m,n\}} g(\sigma_i(\mathbf{M})),
\end{equation}
and $\textbf{Prox}_g^{\sigma}\left (\cdot \right)$ is called the \emph{generalized singular value thresholding operator}. They rigorously proved that when minimizing the proximal operator objective, the singular vectors can be decoupled from the singular values, as the singular vectors of any minimizer of the problem must align with those of the target matrix. They also provided precise conditions under which this problem has a unique minimizer. Furthermore, \citet{chen2016computing} characterized these conditions in the context of computing the proximal operator for the $\ell^p$ quasi-norm, i.e., $g:= |\cdot|^p$. Our work leverages these results to analyze the solution set of overparameterized deep matrix factorization problems, though our proof techniques differ substantially from theirs. Additionally, our analysis reveals remarkable properties of the geometry of the landscape near minima, which is one of the key aspects that determine the implicit bias of gradient-based algorithms toward certain solutions.

\section{Conclusion}
\label{sec:conclusion}
In this paper, we showed that $\ell^2$-regularized deep scalar factorization problems with squared-error loss admit a unique end-to-end minimizer for all scalars except two. By leveraging this, we characterized the full Hessian spectrum of the regularized deep scalar factorization loss at any minimizer. This revealed a fundamental property of the loss landscape of the regularized scalar factorization problem: all global minima are equally flat almost always. Furthermore, we showed that the regularized deep matrix factorization problem admits a unique end-to-end minimizer for all target matrices subject to factorization, except for a set of Lebesgue measure zero determined by the depth and the regularization parameter. This observation revealed a remarkable result: in regularized deep matrix factorization, the Frobenius norm of each layer is the same across all minimizers almost always. From this, we derived a global lower bound for the trace of the Hessian matrix evaluated at any minimizer of the regularized deep matrix factorization problem that holds almost always. Finally, we provided a critical threshold for the level of regularization above which the unique end-to-end minimizer collapses to zero.
\bibliographystyle{plainnat}  
\bibliography{refs}       

\newpage
\appendix
\section{Variational Form of the Schatten-$2/L$ Quasi-Norm}
\label[appendix]{appendix:schattenvariational}
In this section, we develop the ideas behind the variational form of the Schatten-p norm/quasi-norm presented by \citet{schatten1,schatten2}.

\begin{definition}
    Let $\mathbf{x} \in \mathbb {R}^n$ and $\mathbf{x}^{\downarrow}$ be the vector obtained by ordering the entries of $\mathbf{x}$ in descending order. If 
    \begin{equation}
        \sum_{i=1}^n x_i^\downarrow \leq \sum_{i=1}^n y_i^\downarrow,
    \end{equation}
    then $\mathbf{x}$ is \emph{weakly majorized by $\mathbf{y}$}, and denoted by $\mathbf{x} \succ_w \mathbf{y}$. If $\mathbf{x}, \mathbf{y} \in \mathbb{R}_+^n$ and 
    \begin{equation}
        \prod_{i=1}^n x_i^\downarrow \leq \prod_{i=1}^n y_i^\downarrow,
    \end{equation}
    then $\mathbf{x}$ is weakly $\log$-majorized by $\mathbf{y}$, and denoted by $\mathbf{x} \succ_{w\log} \mathbf{y}$.
\end{definition}

\begin{lemma}[Young's Theorem]
Suppose $a,b$ are non-negative scalars and $p,q \geq 1$ such that $1/q + 1/p = 1$. Then
\begin{equation}
    ab \leq \frac{a^p}{p} + \frac{b^q}{q}.
\end{equation}
\label[lemma]{lemma:youngsinequality}
\end{lemma}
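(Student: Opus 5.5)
We would prove the inequality via the convexity of the exponential function, which reduces the multiplicative statement to a weighted Jensen inequality for two points. First, the degenerate cases: if $a = 0$ or $b = 0$, the left-hand side is $0$ while the right-hand side is a sum of non-negative terms, so the inequality holds trivially. We would also note that the constraint $1/p + 1/q = 1$ with $p, q \geq 1$ forces $p, q > 1$, since $p = 1$ would require $1/q = 0$. Hence both weights $1/p$ and $1/q$ lie strictly in $(0,1)$ and sum to one.

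For $a, b > 0$, we would write
\begin{equation*}
ab = \exp(\log a + \log b) = \exp\!\left(\tfrac{1}{p}\log a^p + \tfrac{1}{q}\log b^q\right).
\end{equation*}
Then we would apply the convexity of $t \mapsto e^t$ with weights $1/p$ and $1/q$ to get
\begin{equation*}
\exp\!\left(\tfrac{1}{p}\log a^p + \tfrac{1}{q}\log b^q\right) \leq \tfrac{1}{p}e^{\log a^p} + \tfrac{1}{q}e^{\log b^q} = \frac{a^p}{p} + \frac{b^q}{q}.
\end{equation*}
This is exactly the claim. Equivalently, one could phrase the same step as concavity of $\log$ applied to the convex combination $\tfrac1p a^p + \tfrac1q b^q$.

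If an equality characterization is wanted for the later Schatten variational argument, we would add it using strict convexity of $\exp$: equality holds if and only if $\log a^p = \log b^q$, i.e.\ $a^p = b^q$, or trivially when $a = b = 0$. There is no real obstacle here. The only point needing care is the bookkeeping of the boundary cases ($a$ or $b$ zero, and the excluded endpoint $p = 1$), so that the logarithm is well defined and the weights are genuine convex weights.
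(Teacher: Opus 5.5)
Your proof is correct and follows the same route as the paper: both write $ab = \exp\bigl(\tfrac{1}{p}\log a^p + \tfrac{1}{q}\log b^q\bigr)$ and apply convexity of $t \mapsto e^t$ with weights $1/p$ and $1/q$. Your separate treatment of the case $a = 0$ or $b = 0$, where $\log$ is undefined, is a small improvement, since the paper's argument silently assumes $a, b > 0$.
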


\begin{proof}
    Suppose $f:\mathbb{R} \rightarrow \mathbb{R}$ such that $f(x) =  e^x$. Then
    \begin{align}
        ab &= f(\log(a)+\log(b)) \\
        &= f(\frac{p\log(a)}{p} + \frac{q\log(b)}{q}) \\
        &\leq \frac{1}{p}f(p\log(a)) + \frac{1}{q}f(q\log(b)) \\
        &= \frac{1}{p}e^{p\log(a)} + \frac{1}{q}e^{q\log(b)} \\
        &= \frac{1}{p}\left(e^{\log(a)}\right) ^p+ \frac{1}{q}\left(e^{\log(b)}\right) ^q \\ 
        &= \frac{a^p}{p} + \frac{b^q}{q}.
    \end{align}
\end{proof}

\begin{lemma}[Hölder's Inequality]
    For any $p,q \in \mathbb{R}_+$ satisfying $1/p +1/q = 1/r$, and for any $\mathbf{x},\mathbf{y} \in \mathbb R^n$, we have
    \begin{equation}
    \sum_{i=1}^n \abs{x_i y_i}^r  \leq \norm{\mathbf{x}}_p^r \norm{\mathbf{y}}_q^r.
\end{equation}

\end{lemma}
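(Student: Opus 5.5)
The plan is to reduce the statement to the scalar Young inequality (\Cref{lemma:youngsinequality}) applied entrywise, so that Hölder's inequality follows from a normalization argument. First I dispose of the degenerate cases. If $\mathbf{x} = \mathbf{0}$ or $\mathbf{y} = \mathbf{0}$, the left-hand side vanishes and there is nothing to prove. Otherwise $\norm{\mathbf{x}}_p > 0$ and $\norm{\mathbf{y}}_q > 0$. Both sides are homogeneous of degree $r$ in $\mathbf{x}$ and separately in $\mathbf{y}$, so I may replace $\mathbf{x}$ by $\mathbf{x}/\norm{\mathbf{x}}_p$ and $\mathbf{y}$ by $\mathbf{y}/\norm{\mathbf{y}}_q$. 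It therefore suffices to prove
\begin{equation}
\sum_{i=1}^n \abs{x_i y_i}^r \leq 1 \quad \text{whenever } \norm{\mathbf{x}}_p = \norm{\mathbf{y}}_q = 1.
\end{equation}

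Next I rescale exponents to bring the problem into the form of \Cref{lemma:youngsinequality}. Set $P := p/r$ and $Q := q/r$. The hypothesis $1/p + 1/q = 1/r$ gives $1/P + 1/Q = 1$. Since $p, q, r > 0$, we get $r < p$ and $r < q$, hence $P, Q > 1$, so Young's inequality is applicable. Put $a_i := \abs{x_i}^r$ and $b_i := \abs{y_i}^r$. Then
\begin{equation}
\abs{x_i y_i}^r = a_i b_i \leq \frac{a_i^P}{P} + \frac{b_i^Q}{Q} = \frac{r}{p}\abs{x_i}^p + \frac{r}{q}\abs{y_i}^q.
\end{equation}
Summing over $i$ and using the normalization, the right-hand side totals $r/p + r/q = 1$, which is the desired bound.

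A small point needs care: the proof of \Cref{lemma:youngsinequality} in the paper uses $\log a$ and $\log b$. So I will note separately that Young's inequality holds trivially when $a_i = 0$ or $b_i = 0$, since then the left side is zero.

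The only place I expect friction is the verification that $P, Q \geq 1$, which is exactly what the lemma requires. This rests on reading $p, q \in \mathbb{R}_+$ as strictly positive and finite. I would remark that if $q = \infty$ is intended (so $p = r$), the claim reduces directly to $\sum_i \abs{x_i}^r \abs{y_i}^r \leq \norm{\mathbf{y}}_\infty^r \sum_i \abs{x_i}^r$, which is immediate. Everything else is routine algebra.
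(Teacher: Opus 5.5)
Your proof is correct and follows the same route as the paper. Both arguments rescale the exponents to $p/r$ and $q/r$ so that they are Young-conjugate, normalize by $\norm{\mathbf{x}}_p$ and $\norm{\mathbf{y}}_q$, apply \Cref{lemma:youngsinequality} entrywise to $\abs{x_i}^r$ and $\abs{y_i}^r$, and sum to obtain $r/p + r/q = 1$. Your explicit handling of two degenerate cases fills small gaps that the paper's version leaves open: the zero-vector case (the paper divides by the norms without comment), and the case $a_i = 0$ or $b_i = 0$ (where the paper's logarithm-based proof of Young's inequality does not apply).
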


\begin{proof}
    If $1/p + 1/q = 1/r$, then $1/(p/r) + 1/(q/r) = 1$. We also know that 
    \begin{equation}
       \sum_{i=1}^n \frac{\abs{x_i y_i}^r}{\norm{\mathbf x}_p ^r\norm{\mathbf y}_q ^r} = \sum_{i=1}^n \frac{\abs{x_i}^r \abs{y_i}^r}{\norm{\mathbf x}_p ^r \norm{\mathbf y}_q ^r}.
   \end{equation} 
   Suppose $\abs{x_i}^r/\norm{\mathbf{x}}_p^r =: a_i$ and $\abs{y_i}^r/\norm{\mathbf{x}}_q^r=: b_i$. Then, by using \Cref{lemma:youngsinequality},
    \begin{align}
       \sum_{i = 1}^n a_ib_i \leq \sum_{i=1}^n \frac{ra_i^{p/r}}{p} + \frac{r b_i^{q/r}}{q} &:= \sum_{i=1}^n \frac{r\abs{x_i}^p}{p\norm{\mathbf{x}}_p ^p} + \frac{r\abs{y_i}^q}{q\norm{\mathbf{y}}_q ^q}\\
       &=\frac{r}{p}\sum_{i=1}^n \frac{\abs{x_i}^p}{\norm{\mathbf{x}}_p^p} + \frac{r}{q} \sum_{i=1}^n \frac{\abs{y_i}^q}{\norm{\mathbf{y}}_q ^q} \\
       &= \frac{r}{p} + \frac{r}{q} = \frac{1}{p/r} + \frac{1}{q/r} \\
       &= 1.
   \end{align}
\end{proof}

\begin{theorem}
Let $\mathbf{A} \in \mathbb{R}^{m \times k}$, $\mathbf{B} \in \mathbb{R}^{k \times n}$ and $r = \min\{\operatorname{rank}(\mathbf{A}), \operatorname{rank}(\mathbf{B})\}$. Denote by  $\sigma_{1}(\mathbf{A}) \geq \cdots \geq \sigma_r(\mathbf{A}) > 0$ and $\sigma_{1}(\mathbf{B}) \geq \cdots \geq \sigma_r(\mathbf{B}) > 0$ the ordered singular values of $\mathbf{A}$ and $\mathbf{B}$, respectively. Then, for any $p \in \mathbb{R}_+$,
\begin{equation}
    \sum_{i=1}^{r} \sigma_{i}(\mathbf{AB})^p \leq\sum_{i=1}^{r} \sigma_{i}(\mathbf{A})^p\sigma_{i}(\mathbf{B})^p.
\end{equation}
\label{theorem:p}
\end{theorem}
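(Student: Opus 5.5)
We plan to combine Horn's weak log-majorization of the singular values of a product with the elementary fact that a weak log-majorization can be converted into a weak majorization by composing with an increasing function $t\mapsto e^{pt}$. The inequality to be shown is a statement about sums of $p$-th powers. Since $t\mapsto t^p$ is increasing on $\mathbb{R}_+$ for every $p>0$, the natural route goes through products of singular values rather than sums.

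The first step is Horn's inequality: for every $k\le r$,
\begin{equation}
\prod_{i=1}^k \sigma_i(\mathbf{AB}) \le \prod_{i=1}^k \sigma_i(\mathbf{A})\sigma_i(\mathbf{B}).
\end{equation}
We would prove this via exterior powers (compound matrices). The $k$-th compound $C_k(\mathbf{X})$ has largest singular value $\prod_{i\le k}\sigma_i(\mathbf{X})$, and it is multiplicative, $C_k(\mathbf{AB})=C_k(\mathbf{A})C_k(\mathbf{B})$. Submultiplicativity of the spectral norm then gives the claim. Alternatively, we could use the Courant--Fischer characterization $\prod_{i\le k}\sigma_i(\mathbf{X})=\max|\det(\mathbf{U}^\top\mathbf{X}\mathbf{V})|$ over orthonormal $k$-frames, bounding $\det(\mathbf{U}^\top\mathbf{AB}\mathbf{V})$ by inserting a projection onto the range of $\mathbf{B}\mathbf{V}$.

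The second step is to pass from log-majorization to the desired power sums. Set $a_i=\log\sigma_i(\mathbf{AB})$ and $b_i=\log(\sigma_i(\mathbf{A})\sigma_i(\mathbf{B}))$ for $i\le r$. Both sequences are non-increasing; the second is so because both factors are ordered, so products of ordered sequences remain ordered. By the first step, the partial sums satisfy $\sum_{i\le k}a_i\le\sum_{i\le k}b_i$ for all $k$, i.e., $\mathbf{a}$ is weakly submajorized by $\mathbf{b}$. The function $f(t)=e^{pt}$ is convex and increasing, so the Tomić--Weyl (Karamata-type) inequality yields $\sum_i f(a_i)\le\sum_i f(b_i)$, which is exactly the statement. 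We would prove this inequality by Abel summation. Write $f(a_i)-f(b_i)\le c_i(a_i-b_i)$, where $c_i$ is the slope of the chord of $f$ between $a_i$ and $b_i$. Because $f$ is convex and both sequences are decreasing, the $c_i$ are non-increasing and nonnegative. Summation by parts with $D_k=\sum_{i\le k}(a_i-b_i)\le 0$ gives $\sum_i c_i(a_i-b_i)=\sum_k (c_k-c_{k+1})D_k + c_rD_r\le 0$.

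There is one point to handle with care: if some $\sigma_i(\mathbf{AB})=0$ for $i\le r$, then $a_i=-\infty$. In that case we either truncate to the indices with nonzero singular values, where the partial-product inequality still holds, or argue by a limiting perturbation. In either version the zero terms contribute $0$ to the left-hand side and do not affect the inequality.

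The main obstacle is the first step, Horn's inequality. Everything after it is a standard majorization argument, whereas Horn's inequality requires either the compound-matrix machinery or a careful variational determinant argument. We would first try the compound-matrix route, since multiplicativity and the identification $\sigma_1(C_k(\mathbf{X}))=\prod_{i\le k}\sigma_i(\mathbf{X})$ (via the SVD of $\mathbf{X}$ inducing the SVD of $C_k(\mathbf{X})$) are short to verify.
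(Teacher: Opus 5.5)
Your proposal is correct and follows essentially the same route as the paper: Horn's weak log-majorization of $\sigma(\mathbf{AB})$ by $\sigma(\mathbf{A})\odot\sigma(\mathbf{B})$, then conversion to weak majorization of the $p$-th powers, then truncation to the first $\operatorname{rank}(\mathbf{AB})$ indices to handle zero singular values. The paper cites Marshall--Olkin for the first two facts, whereas you prove them via compound matrices and the Tomi\'c--Weyl/Abel summation argument, and you correctly use the partial-product inequalities for every $k\le r$ (which the majorization step needs), while the paper's display records only the full product over $i\le r$.
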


\begin{proof}
By using the fact that $[\sigma_1(\mathbf{AB)}   \cdots \sigma_r(\mathbf{AB})] \succ_{w\log} [\sigma_1(\mathbf{A})\sigma_1(\mathbf{B}) \cdots \sigma_r(\mathbf{A})\sigma_r(\mathbf{B})]$ (Theorem~9.H.1, \cite{marshall1979inequalities}),
\begin{equation}
    \prod_{i=1}^r \sigma_{i}(\mathbf{AB}) \leq \prod_{i=1}^r \sigma_{i}(\mathbf{A}) \sigma_i(\mathbf{B}). 
\end{equation}
Therefore, for any $p \in [0,\infty)$, 
\begin{equation}
    \prod_{i=1}^r \sigma_{i}(\mathbf{AB})^p \leq \prod_{i=1}^r \sigma_{i}(\mathbf{A}) ^p\sigma_i(\mathbf{B})^p. 
\end{equation}
This implies that $\mathbf{\sigma}(\mathbf{AB})^p \in \mathbb{R}^r$ is weakly $\log$-majorized by $\left(\mathbf{\sigma}(\mathbf{A}) \odot \mathbf{\sigma}(\mathbf{B}) \right)^p \in \mathbb{R}^d$. It is well-known that a weak $\log$-majorization implies the weak majorization when the entries in vectors are strictly positive (Theorem~5.A.2.b, \citet{marshall1979inequalities}). Suppose $\operatorname{rank}(\mathbf{AB}) =: \hat{r} \leq r$. Therefore,
\begin{equation}
    \sum_{i=1}^{r} \sigma_{i}(\mathbf{AB})^{p} = \sum_{i=1}^{\hat{r}} \sigma_{i}(\mathbf{AB})^p\leq \sum_{i=1}^{\hat{r}}  \sigma_{i}(\mathbf{A})^{p} \sigma_{i}(\mathbf{B})^{p} \leq \sum_{i=1}^{r} \sigma_{i}(\mathbf{A})  ^p\sigma_{i}(\mathbf{B})^p. 
\end{equation}
\end{proof}

\begin{corollary}
\label[corollary]{corollary:schattenp}
Let $\mathbf{A} \in \mathbb{R}^{m \times k}$, $\mathbf{B} \in \mathbb{R}^{k \times n}$. For any $p,q \in \mathbb{R}_+$ satisfying $1/p +1/q = 1/\gamma$, we have
    \begin{equation}
     \norm{\mathbf{AB}}_{\mathcal{S}^\gamma}^\gamma \leq \norm{\mathbf{A}}_{\mathcal{S}^p}^\gamma \norm{\mathbf{B}}_{\mathcal{S}^q}^\gamma.
\end{equation}
\end{corollary}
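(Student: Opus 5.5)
The plan is to combine \Cref{theorem:p} with the scalar H\"older inequality (the lemma just above) applied to singular-value vectors. Let $r$ be the number of nonzero singular values entering \Cref{theorem:p}, i.e.\ $r=\min\{\operatorname{rank}(\mathbf{A}),\operatorname{rank}(\mathbf{B})\}$. If $r=0$ then $\mathbf{AB}=\mathbf{0}$ and the inequality is trivial, so assume $r\geq 1$. Since $\operatorname{rank}(\mathbf{AB})\leq r$, we have $\norm{\mathbf{AB}}_{\mathcal{S}^\gamma}^\gamma=\sum_{i=1}^r\sigma_i(\mathbf{AB})^\gamma$. Applying \Cref{theorem:p} with exponent $\gamma$ gives
\begin{equation}
\norm{\mathbf{AB}}_{\mathcal{S}^\gamma}^\gamma \leq \sum_{i=1}^r \sigma_i(\mathbf{A})^\gamma\sigma_i(\mathbf{B})^\gamma .
\end{equation}

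Next, set $\mathbf{x}:=(\sigma_1(\mathbf{A}),\dots,\sigma_r(\mathbf{A}))$ and $\mathbf{y}:=(\sigma_1(\mathbf{B}),\dots,\sigma_r(\mathbf{B}))$ in $\mathbb{R}^r$. The right-hand side above equals $\sum_{i=1}^r|x_iy_i|^\gamma$. By H\"older's inequality with $1/p+1/q=1/\gamma$ (the lemma plays $r$ as our $\gamma$), this is bounded by $\norm{\mathbf{x}}_p^\gamma\norm{\mathbf{y}}_q^\gamma$.

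Finally, $\mathbf{x}$ is a truncation of the full singular-value vector of $\mathbf{A}$, and all omitted entries are nonnegative. Hence $\norm{\mathbf{x}}_p^p\leq\sum_i\sigma_i(\mathbf{A})^p=\norm{\mathbf{A}}_{\mathcal{S}^p}^p$, so $\norm{\mathbf{x}}_p\leq\norm{\mathbf{A}}_{\mathcal{S}^p}$. Likewise $\norm{\mathbf{y}}_q\leq\norm{\mathbf{B}}_{\mathcal{S}^q}$. Raising both to the power $\gamma>0$ preserves these inequalities, and chaining the three bounds gives the claim.

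The main obstacle is not the chaining but the validity of the H\"older step. In the lemma's proof the normalizations must be $a_i=|x_i|^r/\norm{\mathbf{x}}_p^r$ and $b_i=|y_i|^r/\norm{\mathbf{y}}_q^r$ (there is a typo with $\norm{\mathbf{x}}_q$), and Young's inequality must be used with conjugate exponents $p/\gamma$ and $q/\gamma$. These exponents are at least $1$ automatically, since $1/p+1/q=1/\gamma$ with $p,q>0$ forces $p,q\geq\gamma$. This holds even when $\gamma<1$, so the quasi-norm regime, which is the relevant case $2/L$, is covered.

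I would also double-check that \Cref{theorem:p} applies for all $\gamma>0$. Its proof relies on weak log-majorization implying weak majorization of $\sigma^\gamma$, which holds for any positive power because $t\mapsto e^{\gamma t}$ is increasing and convex. Degenerate cases where $\norm{\mathbf{x}}_p$ or $\norm{\mathbf{y}}_q$ vanishes are already excluded by restricting to $r\geq 1$ nonzero singular values.
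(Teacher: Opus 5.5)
Your proof is correct and follows essentially the same route as the paper. Both arguments apply \Cref{theorem:p} with exponent $\gamma$, then use a H\"older/Young step with conjugate exponents $p/\gamma$ and $q/\gamma$, and finally bound the truncated sums by the full Schatten quantities. The only cosmetic difference is the order of these steps. The paper inlines Young's inequality after normalizing directly by $\norm{\mathbf{A}}_{\mathcal{S}^p}^\gamma\norm{\mathbf{B}}_{\mathcal{S}^q}^\gamma$, absorbing the truncation into $\sum_{i\le r}\sigma_i(\mathbf{A})^p\le\norm{\mathbf{A}}_{\mathcal{S}^p}^p$; you instead cite the H\"older lemma on the truncated vectors and compare norms afterwards.
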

    
\begin{proof}
If $1/p + 1/q = 1/\gamma$, then $1/(p/\gamma) + 1/(q/\gamma) = 1$. Define $r := \min \{\operatorname{rank}(\mathbf{A}),\operatorname{rank}(\mathbf{B}) \}$. By Theorem~\ref{theorem:p}, we also know that
\begin{equation}
    \sum_{i=1}^{r}\frac{\sigma_i(\mathbf{AB})^\gamma}{\norm{\mathbf{A}}_{\mathcal{S}^p}^\gamma\norm{\mathbf{B}}_{\mathcal{S}^q}^\gamma} \leq \sum_{i=1}^{r}\frac{\sigma_i(\mathbf{A})^\gamma \sigma_i(\mathbf{B})^\gamma}{\norm{\mathbf{A}}_{\mathcal{S}^p}^\gamma\norm{\mathbf{B}}_{\mathcal{S}^q}^\gamma}.
\end{equation}
Suppose $\sigma_i(\mathbf{A})^\gamma / \norm{\mathbf{A}}_{\mathcal{S}^p}^\gamma  =: a_i$ and $\sigma_i(\mathbf{B})^\gamma / \norm{\mathbf{B}}_{\mathcal{S}^p}^\gamma  =: b_i$. Then, by using \Cref{lemma:youngsinequality},
 \begin{align}
       \sum_{i = 1}^r a_ib_i \leq \sum_{i=1}^r \frac{\gamma a_i^{p/\gamma}}{p} + \frac{\gamma b_i^{q/\gamma}}{q} &:= \sum_{i=1}^r \frac{\gamma \sigma_i({\mathbf{A}})^p}{p\norm{\mathbf{A}}_{\mathcal{S}^p} ^p} + \frac{\gamma \sigma_i({\mathbf{B}})^q}{q\norm{\mathbf{B}}_{\mathcal{S}^q} ^q} \\
       &\leq\frac{\gamma}{p} \sum_{i=1}^{\operatorname{rank}(\mathbf{A})} \frac{ \sigma_i({\mathbf{A}})^p}{\norm{\mathbf{A}}_{\mathcal{S}^p} ^p} + \frac{\gamma}{q}  \sum_{i=1}^{\operatorname{rank}(\mathbf{B})} \frac{ \sigma_i({\mathbf{B}})^q}{\norm{\mathbf{B}}_{\mathcal{S}^q} ^q} \\
       &= \frac{\gamma}{p} + \frac{\gamma}{q} = \frac{1}{p/\gamma} + \frac{1}{q/\gamma} \\
       &= 1
   \end{align}
   
\end{proof}

\begin{remark}
\label[remark]{remark:schattenmulti}
    For any $p_1,p_2,\cdots, p_L \in \mathbb{R}_+$ satisfying $\sum_{i=1}^L 1/p_i = 1/\gamma$, and for any matrices $\mathbf{X}_1, \mathbf{X}_2, \cdots, \mathbf{X}_L$ such that the product $\mathbf{X}_L \mathbf{X}_{L-1} \cdots \mathbf{X}_1$ is feasible, we have
    \begin{equation}
        \norm{\mathbf{X}_L \mathbf{X}_{L-1} \cdots \mathbf{X}_1}_{\mathcal{S}^\gamma}^\gamma \leq \norm{\mathbf{X}_L}_{\mathcal{S}^{p_L}}^\gamma \norm{\mathbf{X}_{L-1}}_{\mathcal{S}^{p_{L-1}}}^\gamma \cdots \norm{\mathbf{X}_1}_{\mathcal{S}^{p_1}}^\gamma.
    \end{equation}
\end{remark}
\begin{proof}
    Let us define $\mathbf{X}_L \mathbf{X}_{L-1} \cdots \mathbf{X}_2 := \hat{\mathbf{X}}$ and $\sum_{i=2}^L 1/p_i = 1/\hat{\gamma}$. Therefore, by \Cref{corollary:schattenp},
    \begin{equation}
    \norm{\hat{\mathbf{X}}\mathbf{X}_1}_{\mathcal{S}^{\gamma}}^\gamma \leq \norm{\hat{\mathbf{X}}}_{\mathcal{S}^{\hat \gamma}}^\gamma \norm{\mathbf{X}_1}_{\mathcal{S}^{p_1}}^\gamma
    \end{equation}
    This is equivalent to
     \begin{equation}
    \norm{\hat{\mathbf{X}}\mathbf{X}_1}_{\mathcal{S}^{\gamma}}^\gamma \leq \left (\norm{\hat{\mathbf{X}}}_{\mathcal{S}^{\hat \gamma}}^ {\hat{\gamma}} \right) ^{\gamma/\hat{\gamma}} \norm{\mathbf{X}_1}_{\mathcal{S}^{p_1}}^\gamma
    \end{equation}
    Now, define $\mathbf{X}_L \mathbf{X}_{L-1} \cdots \mathbf{X}_3 := \hat{\mathbf{Y}}$ and $\sum_{i=3}^L 1/p_i = 1/\hat{\beta}$. Therefore, 
    \begin{align}
    \norm{\hat{\mathbf{X}}\mathbf{X}_1}_{\mathcal{S}^{\gamma}}^\gamma \leq \left (\norm{\hat{\mathbf{Y}} \mathbf{X}_2}_{\mathcal{S}^{\hat \gamma}}^ {\hat{\gamma}} \right) ^{\gamma/\hat{\gamma}} \norm{\mathbf{X}_1}_{\mathcal{S}^{p_1}}^\gamma &\leq \left ( \norm{\hat{\mathbf{Y}}}_{\mathcal{S}^{\hat \beta}}^{\hat{\gamma}} \norm{\mathbf{X}_2}_{\mathcal{S}^{p_2}}^{\hat{\gamma}}\right)^{\gamma/\hat{\gamma}} \norm{\mathbf{X}_1}_{\mathcal{S}^{p_1}}^\gamma \\
    &= \left ( \norm{\hat{\mathbf{Y}}}_{\mathcal{S}^{\hat \beta}}^{\hat{\beta}} \right)^{\gamma/\hat{\beta}} \norm{\mathbf{X}_2}_{\mathcal{S}^{p_2}}^{\gamma}\norm{\mathbf{X}_1}_{\mathcal{S}^{p_1}}^\gamma.
    \end{align}
    By induction, 
    \begin{equation}
        \norm{\mathbf{X}_L \mathbf{X}_{L-1} \cdots \mathbf{X}_1}_{\mathcal{S}^\gamma}^\gamma \leq \norm{\mathbf{X}_L}_{\mathcal{S}^{p_L}}^\gamma \norm{\mathbf{X}_{L-1}}_{\mathcal{S}^{p_{L-1}}}^\gamma \cdots \norm{\mathbf{X}_1}_{\mathcal{S}^{p_1}}^\gamma.
    \end{equation}
\end{proof}
\begin{theorem}
    Let $L \in \mathbb{N}$ such that $L \geq 1$. For any $\mathbf{X} \in \mathbb{R}^{m \times n}$,
    \begin{equation}
        \norm{\mathbf{X}}_{\mathcal{S}^{2/L}}^{2/L} = \min_{\substack{\mathbf{W}_1, \cdots,\mathbf{W}_L: \\
        \mathbf{W}_L \cdots \mathbf{W}_1 = \mathbf{X} }} \frac{1}{L} \sum_{i=1}^L \norm{\mathbf{W}_i}_F^2.
    \end{equation}
    \label{theorem:variationalform}
\end{theorem}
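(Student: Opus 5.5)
The proof shows two inequalities: every factorization has regularizer at least $\|\mathbf{X}\|_{\mathcal{S}^{2/L}}^{2/L}$, and one explicit factorization attains this value.

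\emph{Lower bound.} Fix any $\mathbf{W}_1,\dots,\mathbf{W}_L$ with $\mathbf{W}_L\cdots\mathbf{W}_1=\mathbf{X}$. I apply \Cref{remark:schattenmulti} with $p_1=\cdots=p_L=2$. Then $\sum_{i=1}^L 1/p_i = L/2$, so $\gamma = 2/L$. This gives
\begin{equation*}
\norm{\mathbf{X}}_{\mathcal{S}^{2/L}}^{2/L} \leq \prod_{i=1}^L \norm{\mathbf{W}_i}_{\mathcal{S}^2}^{2/L} = \prod_{i=1}^L \left(\norm{\mathbf{W}_i}_F^2\right)^{1/L},
\end{equation*}
using $\norm{\cdot}_{\mathcal{S}^2}=\norm{\cdot}_F$. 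By the AM--GM inequality (equivalently, Young's inequality \Cref{lemma:youngsinequality} iterated, or concavity of $\log$), the geometric mean $\prod_i (\norm{\mathbf{W}_i}_F^2)^{1/L}$ is at most $\frac{1}{L}\sum_i \norm{\mathbf{W}_i}_F^2$. Hence the infimum over factorizations is at least $\norm{\mathbf{X}}_{\mathcal{S}^{2/L}}^{2/L}$.

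\emph{Attainment.} Write the SVD $\mathbf{X}=\mathbf{U}_\mathbf{X}\mathbf{\Sigma}_\mathbf{X}\mathbf{V}_\mathbf{X}^\top$ with $r=\min\{m,n\}$. I build a balanced factorization by splitting each singular value equally. Set
\begin{equation*}
\mathbf{W}_1=\mathbf{E}_1\,\mathbf{\Sigma}^{1/L}\mathbf{V}_\mathbf{X}^\top, \qquad \mathbf{W}_i=\mathbf{E}_i\,\mathbf{\Sigma}^{1/L}\mathbf{E}_{i-1}^\top \ (2\le i\le L-1), \qquad \mathbf{W}_L=\mathbf{U}_\mathbf{X}\mathbf{\Sigma}^{1/L}\mathbf{E}_{L-1}^\top.
\end{equation*}
Here $\mathbf{\Sigma}^{1/L}=\operatorname{diag}(\sigma_1^{1/L},\dots,\sigma_r^{1/L})$, and each $\mathbf{E}_i\in\mathbb{R}^{d_i\times r}$ has orthonormal columns. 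Such $\mathbf{E}_i$ exist because $d_i\ge r$, which is the feasibility assumption $\min_i d_i\ge\min\{d_0,d_L\}$. The padding of $\mathbf{U}_\mathbf{X}$ and $\mathbf{V}_\mathbf{X}$ is handled by using their first $r$ columns.

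The product telescopes to $\mathbf{X}$, since $\mathbf{E}_i^\top\mathbf{E}_i=\mathbf{I}_r$. Each layer also satisfies $\norm{\mathbf{W}_i}_F^2=\sum_j\sigma_j^{2/L}=\norm{\mathbf{X}}_{\mathcal{S}^{2/L}}^{2/L}$. The average of the squared Frobenius norms therefore equals the lower bound, so the minimum is attained.

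\emph{Main obstacle.} The delicate step is the lower bound, which relies on \Cref{remark:schattenmulti} being valid for quasi-norm exponents $\gamma<1$. This holds through \Cref{theorem:p}, whose log-majorization argument applies to all $p>0$. One must also check that the rank truncations there cause no issue when some factor is rank deficient. In that case zero singular values contribute nothing, and the inequality still holds trivially when the product is zero. If needed, I would instead argue directly: apply Horn's inequality to $\mathbf{X}=\mathbf{W}_L\cdots\mathbf{W}_1$ to get $\sigma_j(\mathbf{X})^{2/L}$ weakly majorized by $\prod_i\sigma_j(\mathbf{W}_i)^{2/L}$, then use AM--GM termwise and sum.
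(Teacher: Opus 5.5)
Your proposal is correct and is essentially the paper's proof: the lower bound comes from \Cref{remark:schattenmulti} with all $p_i=2$ followed by AM--GM, and attainment comes from the balanced factorization that assigns $\mathbf{\Sigma}^{1/L}$ to every layer. Your isometries $\mathbf{E}_i$ are a small refinement that lets the attaining factorization respect the prescribed widths $d_i\ge r$ (the paper's construction uses $r\times r$ middle factors), and your fallback through Horn's inequality with termwise AM--GM is a valid, more self-contained substitute for the Schatten--H\"older step.
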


\begin{proof}
Suppose that $\mathbf{X} = \mathbf{W}_L \cdots \mathbf{W}_1$. Define $p_1,p_2,\cdots,p_L$ such that $\sum_{i=1}^L 1/p_i = L/2$. By using \Cref{remark:schattenmulti}, 

\begin{equation}
    \norm{\mathbf{X}}_{\mathcal{S}^{2/L}}^{2/L} \leq \norm{\mathbf{W}_L}_{\mathcal{S}^{p_L}}^{2/L} \norm{\mathbf{W}_{L-1}}_{\mathcal{S}^{p_{L-1}}}^{2/L} \cdots \norm{\mathbf{W}_1}_{\mathcal{S}^{p_1}}^{2/L}.   
\end{equation}
We can determine a specific configuration of $p_i$'s such that $p_1 = p_2 = \cdots = p_L = 2$. Note that Schatten-$2$ norm is equivalent to Frobenius norm. Therefore, 
\begin{align}
    \norm{\mathbf{X}}_{\mathcal{S}^{2/L}}^{2/L} &\leq \left ( \prod_{i=1}^L \norm{\mathbf{W}_i}^{2}\right)^{1/L} \\
    &\leq \frac{1}{L} \sum_{i=1}^L \norm{\mathbf{W}_i}^{2}. 
\end{align}
The second inequality follows from the AM-GM inequality. This implies that 
\begin{equation}
    \norm{\mathbf{X}}_{\mathcal{S}^{2/L}}^{2/L} \leq  \min_{\substack{\mathbf{W}_1, \cdots,\mathbf{W}_L: \\
         \mathbf{W}_L \cdots \mathbf{W}_1 = \mathbf{X} }} \frac{1}{L} \sum_{i=1}^L \norm{\mathbf{W}_i}_F^2.
\end{equation}
Now, let us decompose $\mathbf{X}$ by its thin singular value decomposition such that $\mathbf{X}= \mathbf{U}\mathbf{\Sigma}\mathbf{V^\top}$, where $\mathbf{U}\in \mathbb{R}^{m\times r}$ , 
\(\mathbf{\Sigma}\in \mathbb{R}^{r\times r}\), 
\(\mathbf{V}\in \mathbb{R}^{n\times r}\), $\mathbf{U^\top U} = \mathbf{V^\top \mathbf{V}} = \mathbf{I}$ and $\Sigma = \operatorname{diag}(\sigma_{1}(\mathbf{X}), \cdots, \sigma_{r}(\mathbf{X}))$. We can configure a feasible point 
\(\hat{\mathbf{W}}_1,\hat{\mathbf{W}}_2,\dots,\hat{\mathbf{W}}_L\) by
\begin{equation}
\hat{\mathbf{W}}_L = \mathbf{U}\,\mathbf{\Sigma}^{\frac{1}{L}},\quad
\hat{\mathbf{W}}_{L-1} = \mathbf{\Sigma}^{\frac{1}{L}},\;\dots,\;
\hat{\mathbf{W}}_{2} = \mathbf{\Sigma}^{\frac{1}{L}},\quad
\hat{\mathbf{W}}_1= \mathbf{\Sigma}^{\frac{1}{L}}\,\mathbf{V}^\top.
\end{equation}
Therefore, 
\begin{align}
    \norm{\mathbf{X}}_{\mathcal{S}^{2/L}}^{2/L} &= \frac{1}{L} \sum_{i=1}^L\norm{\hat{\mathbf{W}}_i}_F^2 \\
    &\geq \min_{\substack{\mathbf{W}_1, \cdots,\mathbf{W}_L: \\
        \mathbf{W}_L \cdots \mathbf{W}_1 = \mathbf{X}}} \frac{1}{L} \sum_{i=1}^L \norm{\mathbf{W}_i}_F^2.
\end{align}
\end{proof}

\section{Stochastic and Doubly Stochastic Matrices} \label[appendix]{appendix:doubly}

\begin{definition}
    A \emph{permutation} of $[n]$ is a bijective function $\sigma: [n] \rightarrow [n]$. The identity permutation is defined as $\sigma(i) = i$ for all $i \in [n]$. 
    \label[definition]{defPerm}
\end{definition}
\begin{remark}
    For any $\mathbf{X} = [x_{ij}] \in \mathbb{F}^{n\times n}$, 
    \begin{equation}
        \operatorname{det}(\mathbf{X}) = \sum_{\sigma \in S_n} \operatorname{sgn}(\sigma)\prod_{i=1}^n x_{i\sigma(i)},
    \end{equation}
    where $\operatorname{sgn}$ is the sign function over all possible permutations from $[n]$ to $[n]$, which returns $+1$ if the permutation is even, and $-1$ otherwise. The sign of a permutation can be defined as $\operatorname{sgn}(\sigma) = (-1)^{N(\sigma)}$, where $N(\sigma)$ is the number of inversions in $\sigma$.
    \label[remark]{rem:definition_of_permutation}
\end{remark}

\begin{definition}
    A non-negative matrix $\mathbf{X} \in \mathbb{R}^{n \times n}$ satisfying the property $\mathbf{X}\mathbf{1} = \mathbf{1}$ is said to be a \emph{row stochastic matrix}. A \emph{column stochastic matrix} is simply the transpose of a row stochastic matrix. If it is simultaneously row and column stochastic, then it is called a \emph{doubly stochastic matrix}. 
\end{definition}

\begin{remark}
Suppose $\mathbf{X} \in \mathbb{R}^{n \times n}$ is a doubly stochastic matrix. If  $\mathbf{X}$ has only $n$ positive entries, then it is a permutation matrix. Furthermore, it has at most $n^2-n-1$ zero entries unless $\mathbf{X}$ is a permutation matrix.
\label[remark]{remark:iteration}
\end{remark}

\begin{definition}
    A non-negative matrix $\mathbf{X}$ is \emph{doubly substochastic} if all row and column sums are at most $1$, i.e., $\mathbf{X1} \leq \mathbf{1}$ and $\mathbf{1}^\top \mathbf{X} \leq \mathbf{1}^\top$. $N(\mathbf{X}) \in \mathbb{N}$ denotes the number of entries of the vectors $\mathbf{X1}$ and $\mathbf{X}^\top \mathbf{1}$ whose entries are less than $1$. 
    \label[definition]{defsubstochastic}
\end{definition}

\begin{lemma}[Theorem~8.7.1 \citet{horn2012matrix}]
    Suppose $\mathbf{X} \in \mathbb R^{n \times n}$ be a doubly stochastic matrix that is not the identity matrix. There exists a permutation $\sigma$ of $[n]$ that is not the identity permutation such that $x_{1\sigma(1)} x_{2\sigma(2)}\cdots x_{n\sigma{(n)}} > 0$. 
    \label[lemma]{theorem:sign}
\end{lemma}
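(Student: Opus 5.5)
The plan is to reduce the statement to the existence of a positive diagonal in a doubly stochastic matrix, via a Birkhoff--von Neumann type decomposition.

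\emph{Step 1: locate a positive off-diagonal entry.} Since $\mathbf{X}$ is doubly stochastic and $\mathbf{X} \neq \mathbf{I}$, some off-diagonal entry is positive. Indeed, if every off-diagonal entry vanished, the row-sum condition $\mathbf{X}\mathbf{1} = \mathbf{1}$ would force $x_{ii} = 1$ for all $i$, i.e.\ $\mathbf{X} = \mathbf{I}$. So fix $i \neq j$ with $x_{ij} > 0$.

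\emph{Step 2: decompose $\mathbf{X}$ into permutation matrices.} I will show that
\[
\mathbf{X} = \sum_{k} \theta_k \mathbf{P}_k, \qquad \theta_k > 0,
\]
where each $\mathbf{P}_k$ is a permutation matrix whose support lies in the support of $\mathbf{X}$. The argument is by induction on the number of positive entries of $\mathbf{X}$.

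\emph{Base case.} If $\mathbf{X}$ has only $n$ positive entries, it is a permutation matrix by \Cref{remark:iteration}.

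\emph{Inductive step.} Suppose we already know that $\mathbf{X}$ has a permutation $\pi$ with $\prod_{l} x_{l\pi(l)} > 0$. Set $\theta := \min_l x_{l\pi(l)}$. If $\theta = 1$, then $\mathbf{X} = \mathbf{P}_\pi$ and we are done. Otherwise $\theta < 1$, and we form
\[
\mathbf{X}' := \frac{\mathbf{X} - \theta \mathbf{P}_\pi}{1 - \theta}.
\]
This matrix is nonnegative and doubly stochastic, and it has strictly fewer positive entries than $\mathbf{X}$, because at least the entry $(l, \pi(l))$ attaining the minimum becomes zero. By induction, $\mathbf{X}'$ decomposes as required, and hence so does $\mathbf{X}$. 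Every $\mathbf{P}_k$ in the decomposition has support inside that of $\mathbf{X}$ because all coefficients are positive and all entries are nonnegative.

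\emph{Step 3: conclude.} Since $x_{ij} = \sum_k \theta_k (\mathbf{P}_k)_{ij} > 0$, some $\mathbf{P}_k$ has a $1$ in position $(i,j)$ with $i \neq j$. The corresponding permutation $\sigma$ is therefore not the identity. Because the support of $\mathbf{P}_k$ lies in the support of $\mathbf{X}$, we get $x_{1\sigma(1)} \cdots x_{n\sigma(n)} > 0$.

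\emph{Main obstacle: existence of a positive diagonal.} The key point is that every doubly stochastic matrix admits some permutation $\pi$ with all $x_{l\pi(l)} > 0$. I would prove this via the Frobenius--K\"onig theorem. Suppose no positive diagonal exists. Then by K\"onig's theorem (equivalently, Hall's marriage theorem applied to the bipartite graph of positive entries), $\mathbf{X}$ contains an $r \times s$ zero submatrix with $r + s = n + 1$.

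Let $R$ be the set of those $r$ rows. These rows have all their mass in the remaining $n - s$ columns. Summing entries:
\[
r = \sum_{l \in R} \sum_{c} x_{lc} \;\le\; \sum_{c \notin \text{those } s \text{ columns}} \sum_{l} x_{lc} = n - s = r - 1,
\]
where the left equality uses the row sums and the right equality uses the column sums. This is a contradiction, so a positive diagonal exists.

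The combinatorial step (Hall/K\"onig) is the part I would cite rather than reprove. If a self-contained proof is preferred, I would derive it by a standard induction on $n$ via Hall's condition: every set of $k$ rows has positive entries in at least $k$ columns, which itself follows from the same mass-counting argument.
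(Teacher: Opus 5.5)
Your argument is correct, granting the Frobenius--K\"onig theorem (which is standard). It runs in the opposite logical direction from the paper, though.

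The paper argues by contrapositive through the characteristic polynomial. Assuming every non-identity diagonal product vanishes, it expands $\operatorname{det}(\lambda\mathbf{I}-\mathbf{X})$ as $\prod_i(\lambda-x_{ii})$, so the diagonal entries are the eigenvalues. Since $1$ is an eigenvalue ($\mathbf{X}\mathbf{1}=\mathbf{1}$), some $x_{ii}=1$. This forces row and column $i$ to vanish off the diagonal, so a $1\times 1$ block splits off, and iterating gives $\mathbf{X}=\mathbf{I}$. The lemma is then the engine of the paper's proof of the Birkhoff decomposition.

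You instead make the existence of a positive diagonal the key input, via K\"onig/Hall plus your mass-counting contradiction. You then run the same peeling argument the paper later uses for Birkhoff, and read the lemma off the resulting decomposition. The price is importing a matching theorem. What you gain is a stronger conclusion: every positive entry of $\mathbf{X}$ lies on a positive diagonal.

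You also gain robustness, because the paper's expansion is too quick. A non-identity $\sigma$ with fixed points contributes
$\operatorname{sgn}(\sigma)\prod_{\sigma(i)=i}(\lambda-x_{ii})\prod_{\sigma(i)\neq i}(-x_{i\sigma(i)})$,
not $\operatorname{sgn}(\sigma)\prod_{i}(-x_{i\sigma(i)})$. The hypothesis $\prod_i x_{i\sigma(i)}=0$ does not kill this term when the vanishing factor is a diagonal entry $x_{ii}$ with $\sigma(i)=i$.

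Your positive-diagonal fact is one way to close that gap. Under the contrapositive hypothesis the positive diagonal must be the identity, so every $x_{ii}>0$, and then all non-identity terms genuinely vanish.
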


\begin{proof}
For the sake of proof by contrapositive, suppose that every permutation $\sigma$ of $[n]$ that is not the identity permutation satisfies $x_{1\sigma(1)} x_{2\sigma(2)}\cdots x_{n\sigma{(n)}} = 0$ (see \Cref{defPerm}). Then, we can compute the characteristic polynomial of $\mathbf{X}$ as follows. 
\begin{align}
    \operatorname{det}(\lambda\mathbf{I}- \mathbf{X}) &= \prod_{i=1}^n (\lambda-x_{ii}) + \sum_{\sigma \neq \mathbf{I}} \left(\operatorname{sign}(\sigma) \prod_{i=1}^n -x_{i\sigma(i)}\right) \\ 
    &= \prod_{i=1}^n (\lambda-x_{ii}).
\end{align}
This implies that the diagonal entries of $\mathbf{X}$ are its eigenvalues. Note that at least one of the main diagonal entries of $\mathbf{X}$ must be $+1$ since $\mathbf{X}\mathbf{1} = \mathbf{1}$ by definition. This means that there exists a permutation matrix $\mathbf{P}$ such that 
\begin{equation}
    \mathbf{P}^\top \mathbf{X} \mathbf{P} = \begin{bmatrix}
        \mathbf{1} & \mathbf{0}\\
        \mathbf{0} & \mathbf{B}
    \end{bmatrix},
\end{equation}
where $\mathbf{B} \in \mathbb{R}^{n-1 \times n-1}$ is also doubly stochastic, and its main diagonal entries are obtained from the diagonal entries of $\mathbf{X}$. Applying the same procedure, the characteristic polynomial of $\mathbf{B}$ is 
\begin{equation}
    \operatorname{det}(\lambda \mathbf{I} -\mathbf{B}) = \prod_{i=1}^{n-1}(\lambda-b_{ii}).
\end{equation}
This means that the main diagonal entries of $\mathbf{B}$ are its eigenvalues. Since $\mathbf{B}$ is doubly stochastic, at least one of the main diagonal entries of $\mathbf{B}$ must be $+1$. Hence, at least two of the main diagonal entries of $\mathbf{X}$ is $+1$. Iterating at most $n-1$ steps in this way, we conclude that $\mathbf{X} = \mathbf{I}$.
\end{proof}
\begin{lemma}[Theorem~8.7.2, \citet{horn2012matrix}]
    A matrix $\mathbf{X} \in \mathbb{R}^{n \times n}$ is doubly stochastic \emph{if and only if} there are permutation matrices $\mathbf{P}_1, \mathbf{P}_2, \cdots, \mathbf{P}_N \in \mathbb{R}^{n\times n}$ and positive scalars $c_1,c_2, \cdots, c_N \in \mathbb R$ such that $\sum_{i}^N c_i = 1$ and 
    \begin{equation}
          \mathbf{X} = \sum_{i=1}^N c_i\mathbf{P}_i.  
    \end{equation}
    \label[lemma]{theorem:birkhoff}
\end{lemma}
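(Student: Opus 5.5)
The plan is to prove the two directions separately. The converse (``if'') direction is immediate, and the forward direction goes by induction on the number of zero entries of $\mathbf{X}$, using \Cref{theorem:sign} to peel off one permutation matrix at a time.

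\emph{If direction.} Each permutation matrix $\mathbf{P}_i$ is nonnegative and satisfies $\mathbf{P}_i\mathbf{1}=\mathbf{1}$ and $\mathbf{1}^\top\mathbf{P}_i=\mathbf{1}^\top$. Hence $\mathbf{X}=\sum_i c_i\mathbf{P}_i$ with $c_i>0$ and $\sum_i c_i=1$ is nonnegative. By linearity, $\mathbf{X}\mathbf{1}=\sum_i c_i\mathbf{1}=\mathbf{1}$, and likewise $\mathbf{1}^\top\mathbf{X}=\mathbf{1}^\top$. So $\mathbf{X}$ is doubly stochastic.

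\emph{Only if direction.} I will induct on $z(\mathbf{X})$, the number of zero entries of $\mathbf{X}$, going downward from the maximal value $n^2-n$, which a doubly stochastic matrix attains only when it is a permutation matrix (\Cref{remark:iteration}).
\begin{enumerate}
\item The key step is to show that every doubly stochastic $\mathbf{X}$ has a permutation $\sigma$ with $\prod_i x_{i\sigma(i)}>0$. If $\mathbf{X}=\mathbf{I}$, take the identity. Otherwise this is exactly \Cref{theorem:sign}. (A cleaner alternative is König--Hall: if no positive diagonal existed, some $s\times t$ zero block with $s+t=n+1$ would exist, and summing rows and columns would give a contradiction. Since \Cref{theorem:sign} is available, I will use it.)
\item Let $\mathbf{P}_\sigma$ be the permutation matrix of $\sigma$ and set $c:=\min_i x_{i\sigma(i)}\in(0,1]$.
\item If $c=1$, then every entry $x_{i\sigma(i)}$ equals $1$. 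Row sums force all other entries to vanish, so $\mathbf{X}=\mathbf{P}_\sigma$ and we are done.
\item If $c<1$, define $\mathbf{X}':=(\mathbf{X}-c\mathbf{P}_\sigma)/(1-c)$.
\begin{itemize}
\item $\mathbf{X}'$ is nonnegative by the choice of $c$.
\item Its row and column sums are $(1-c)/(1-c)=1$, so $\mathbf{X}'$ is doubly stochastic.
\item Every zero of $\mathbf{X}$ remains a zero of $\mathbf{X}'$, and the entry attaining the minimum $c$ becomes a new zero. Hence $z(\mathbf{X}')\ge z(\mathbf{X})+1$.
\end{itemize}
\item By the induction hypothesis, $\mathbf{X}'=\sum_j c'_j\mathbf{P}'_j$ is a convex combination with positive weights. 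Then
\[
\mathbf{X}=c\mathbf{P}_\sigma+(1-c)\sum_j c'_j\mathbf{P}'_j,
\]
which is a convex combination with positive coefficients summing to $1$. If $\mathbf{P}_\sigma$ coincides with some $\mathbf{P}'_j$, merge the two terms.
\end{enumerate}
Since $z$ strictly increases at each step and is bounded by $n^2-n$, the process terminates after at most $n^2-n+1$ steps. This yields $N\le n^2-n+1$.

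\emph{Main obstacle.} The only substantive step is the existence of a positive diagonal in step 1, and it is supplied by \Cref{theorem:sign}. Everything else is bookkeeping: checking that $\mathbf{X}'$ stays doubly stochastic and that the count of zeros strictly increases, which guarantees termination.
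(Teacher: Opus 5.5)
Your proof is correct and takes essentially the same route as the paper. You find a positive diagonal via \Cref{theorem:sign}, subtract $c\,\mathbf{P}_\sigma$ with $c$ the minimal diagonal entry, renormalize by $1-c$, and terminate because the zero count strictly increases and is capped at $n^2-n$ by \Cref{remark:iteration}. Your downward induction on the zero count, and your separate handling of $\mathbf{X}=\mathbf{I}$ (which \Cref{theorem:sign} excludes), are only a tidier packaging of the paper's iteration.
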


\begin{proof}
    For any permutation matrices $\mathbf{P}_1, \mathbf{P}_2, \cdots, \mathbf{P}_N \in \mathbb{R}^{n\times n}$ and positive scalars $c_1,c_2, \cdots, c_N \in \mathbb R$ such that $\sum_{i}^N c_i = 1$, it is trivial to show that
    $\sum_{i=1}^N c_i\mathbf{P}_i$ is doubly stochastic. Therefore, we need to prove the converse implication of the proposition. 
    For the sake of proof by exhaustion, suppose that $\mathbf{X}$ is a permutation matrix. Then, the proof is complete. Now, suppose that $\mathbf{X}$ is not a permutation matrix. Then, by \Cref{theorem:sign}, there exists a non-identity permutation $\sigma$ of $[n]$ such that $x_{1\sigma(1)} x_{2\sigma(2)} \cdots x_{n\sigma({n})} > 0$. Let $\beta_1 := \min \left \{ x_{1\sigma(1)}, x_{2\sigma(2)}, \cdots, x_{n\sigma{(n)}}  \right \}$ and define the permutation matrix $\mathbf{P}_1$ such that it corresponds to the permutation $\sigma$.  If $\beta_1 = 1$, then $\mathbf{X}$ is a permutation matrix, which is a contradiction. Therefore, $\beta_1$ must be in $(0,1)$. Suppose $\mathbf{X}_1 := (1-\beta_1)^{-1} (\mathbf{X}- \beta_1 \mathbf{P}_1)$. Notice that 
    \begin{equation}
        \mathbf{P}_1 := [p_{ij}] \in \mathbb{R}^{n\times n} = \begin{cases}
1 
& j = \sigma(i),\\[1em]
0
& \text{otherwise}.
\end{cases}  
    \end{equation}
    Therefore, $\mathbf{X}_1$ has at least one more zero entry than $\mathbf{X}$. Additionally, the set of doubly stochastic matrices in $\mathbb{R}^{n \times n}$ is a convex set; hence, $\mathbf{X}_1$ is doubly stochastic. Furthermore, 
    \begin{equation}
        \mathbf{X}_1 := (1-\beta_1)^{-1} (\mathbf{X}- \beta_1 \mathbf{P}_1) \Leftrightarrow \mathbf{X} = (1-\beta_1)\mathbf{X}_1 + \beta_1\mathbf{P}_{1}.
    \end{equation}
    If $\mathbf{X}_1$ is a permutation matrix, the proof is complete. Otherwise, we can iterate this process at most $n^2 -n$ times using \Cref{remark:iteration}.
\end{proof}

\begin{corollary}
    A convex (concave) real-valued function over the set of doubly stochastic $n$ by $n$ matrices attains its maximum (minimum) at a permutation matrix.
    \label[corollary]{corollary:attainmax}
\end{corollary}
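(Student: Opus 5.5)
The plan is to combine \Cref{theorem:birkhoff} (Birkhoff) with the elementary fact that a convex function on a convex hull of finitely many points is maximized at one of those points. The set $\mathcal{D}_n$ of $n\times n$ doubly stochastic matrices is compact and convex, so a continuous convex $f$ attains its maximum on $\mathcal{D}_n$. Convexity of a real-valued function on the relatively open interior of its domain gives continuity. To avoid boundary issues, I will phrase the argument so that continuity is not needed. I will show directly that $\sup_{\mathcal{D}_n} f \le \max_{\mathbf{P}} f(\mathbf{P})$, where the max is over the finitely many permutation matrices. Since every permutation matrix lies in $\mathcal{D}_n$, this shows the supremum equals this finite max and is attained at a permutation matrix.

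First I fix an arbitrary $\mathbf{X}\in\mathcal{D}_n$. By \Cref{theorem:birkhoff}, I write $\mathbf{X}=\sum_{i=1}^N c_i\mathbf{P}_i$ with $c_i>0$, $\sum_i c_i=1$, and each $\mathbf{P}_i$ a permutation matrix.

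Second, I apply the finite form of Jensen's inequality, obtained by induction on $N$ from the two-point definition of convexity:
\begin{equation}
f(\mathbf{X}) = f\Bigl(\sum_{i=1}^N c_i\mathbf{P}_i\Bigr)\le \sum_{i=1}^N c_i f(\mathbf{P}_i)\le \max_{i\in[N]} f(\mathbf{P}_i)\le \max_{\mathbf{P}} f(\mathbf{P}).
\end{equation}
For the inductive step, I write $\sum_{i=1}^N c_i\mathbf{P}_i=c_N\mathbf{P}_N+(1-c_N)\mathbf{Y}$, where $\mathbf{Y}:=\sum_{i<N}\frac{c_i}{1-c_N}\mathbf{P}_i$. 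Here $\mathbf{Y}$ is again doubly stochastic, so it lies in the domain. The case $c_N=1$ is trivial.

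Third, I conclude as follows. Let $\mathbf{P}^\star$ be a permutation matrix achieving the finite maximum of $f$ over the $n!$ permutation matrices. Then $f(\mathbf{X})\le f(\mathbf{P}^\star)$ for all $\mathbf{X}\in\mathcal{D}_n$, and $\mathbf{P}^\star\in\mathcal{D}_n$. So the maximum exists and is attained at $\mathbf{P}^\star$.

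The concave case follows by applying the convex case to $-f$, which turns the maximum statement into the corresponding minimum statement.

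The only mildly delicate point is attainment without assuming continuity. Reducing to the finite set of permutation matrices handles this, so I expect no real obstacle beyond correctly invoking \Cref{theorem:birkhoff}.
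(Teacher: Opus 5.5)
Your proof is correct and uses the same two ingredients as the paper: the Birkhoff decomposition (\Cref{theorem:birkhoff}), then finite Jensen's inequality, then a bound by the largest value at a permutation matrix. The paper argues in a different order: it first takes a maximizer $\mathbf{X}^*$, justifying its existence only by compactness (which tacitly needs some (semi)continuity of $f$), and then decomposes $\mathbf{X}^*$. You instead bound $f(\mathbf{X})$ for every $\mathbf{X}$ by the finite maximum over the $n!$ permutation matrices, so attainment comes for free; this is a slightly cleaner ordering of the same argument.
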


\begin{proof}
Suppose $g$ is a convex real-valued function over the set of $n$ by $n$ doubly stochastic matrices, and let $\mathbf{X}^*$ be a doubly stochastic matrix at which $g$ attains its maximum value. Note that the set of stochastic matrices is a compact set since its all entries lie in the closed interval $[0,1]$; therefore, the maximum is attainable. By using \Cref{theorem:birkhoff}, we can represent $\mathbf{X}^* = c_1 \mathbf{P}_1 + c_2 \mathbf{P}_2 + \cdots + c_N \mathbf{P}_N$ as a convex combination of permutation matrices. Let $j$ be an index such that $g(\mathbf{P}_j) = \max \{ g(\mathbf{P}_i): i \in [n]\}$. Then, 
\begin{align}
    g(\mathbf{X}^*) &= g(c_1 \mathbf{P}_1 + c_2 \mathbf{P}_2 + \cdots + c_N \mathbf{P}_N) \leq c_1 g(\mathbf{P}_1) + c_2 g(\mathbf{P}_2) + \cdots c_N g(\mathbf{P}_N) \\
    & \leq c_1 g(\mathbf{P}_j) +c_2g(\mathbf{P}_j) + \cdots + c_N g(\mathbf{P}_j)= g(\mathbf{P}_j).
\end{align}
Since $g$ attains its maximum at $\mathbf{X}^*$, we have $g(\mathbf{X}^*) = g(\mathbf{P}_j)$. The similar argument holds also for the case where $g$ is concave and attains its minimum at $\mathbf{X}^*$.
\end{proof}

\begin{lemma}[Lemma~8.7.5, \cite{horn2012matrix}]
Suppose $\mathbf{X} \in \mathbb R^{n \times n}$ is doubly substochastic. Then, there exists a doubly stochastic matrix $\mathbf{Y} \in \mathbb{R}^{n \times n}$ such that $\mathbf{X} \leq \mathbf{Y}$.
\label[lemma]{lemma:stochasticbound}
\end{lemma}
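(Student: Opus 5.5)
The statement to prove is \Cref{lemma:stochasticbound}: every doubly substochastic $\mathbf{X}$ is entrywise dominated by some doubly stochastic $\mathbf{Y}$. My plan is a greedy augmentation argument that drives the deficiency count $N(\mathbf{X})$ of \Cref{defsubstochastic} down to zero. Write $r_i := 1-(\mathbf{X}\mathbf{1})_i \ge 0$ and $c_j := 1-(\mathbf{1}^\top\mathbf{X})_j \ge 0$ for the row and column deficits. The key identity is
\begin{equation}
\sum_i r_i = n - \mathbf{1}^\top\mathbf{X}\mathbf{1} = \sum_j c_j .
\end{equation}
Consequently, if some row has a positive deficit, some column must also have a positive deficit.

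\textbf{Step 1.} If $N(\mathbf{X})=0$, then $\mathbf{X}$ is already doubly stochastic and we take $\mathbf{Y}=\mathbf{X}$.

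\textbf{Step 2.} Otherwise, by the identity, we can choose a row $i$ with $r_i>0$ and a column $j$ with $c_j>0$. Define $\mathbf{X}' := \mathbf{X} + \min\{r_i,c_j\}\,\mathbf{e}_i\mathbf{e}_j^\top$.

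\textbf{Step 3.} Check that $\mathbf{X}'$ is valid and has made progress:
\begin{itemize}
\item $\mathbf{X}'$ is nonnegative.
\item $\mathbf{X}' \ge \mathbf{X}$ entrywise.
\item Row $i$ and column $j$ still have sums at most $1$, because we added no more than either deficit. All other row and column sums are unchanged, so $\mathbf{X}'$ is still doubly substochastic.
\item At least one of row $i$ or column $j$ now sums exactly to $1$. No other sum changed, so $N(\mathbf{X}') \le N(\mathbf{X})-1$.
\end{itemize}

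\textbf{Step 4.} Since $N(\mathbf{X})\le 2n$, iterating Step 2 at most $2n$ times yields a matrix $\mathbf{Y}$ with $N(\mathbf{Y})=0$. This $\mathbf{Y}$ is doubly stochastic, and by transitivity of the entrywise order, $\mathbf{Y}\ge\mathbf{X}$.

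The only point needing care is the existence of a deficient column whenever a deficient row exists (and vice versa). Without this, the augmentation step could stall. The total-sum identity above handles it cleanly, because it forces both deficit vectors to be nonzero simultaneously. Everything else is routine bookkeeping. If one prefers a non-iterative argument, an alternative is to set $\mathbf{Y} := \mathbf{X} + \mathbf{r}\mathbf{c}^\top/s$ with $s:=\sum_i r_i>0$. This works directly: row $i$ gains $r_i\sum_j c_j/s = r_i$, column $j$ gains $c_j$, and all added entries are nonnegative. I would present this one-line construction as the proof, with the iterative version only as intuition.
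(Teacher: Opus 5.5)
Your proof is correct. The rank-one construction you plan to present is a genuinely different route from the paper's, while your fallback iterative argument is the paper's strategy carried out correctly.

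The paper argues iteratively. Whenever $N(\mathbf{X})>0$, it picks a deficient row $i$ and a deficient column $j$. It then adds a nonnegative $\tilde{\mathbf{X}}$ chosen to bring \emph{both} line sums to $1$ while leaving every other row and column sum unchanged, so that $N$ strictly decreases. But a nonnegative $\tilde{\mathbf{X}}$ that changes no other line sum must be of the form $t\,\mathbf{e}_i\mathbf{e}_j^\top$. Saturating both lines at once is therefore possible only when $r_i=c_j$. Your increment $\min\{r_i,c_j\}$, which saturates at least one of the two lines, is the correct repair. The paper also takes for granted that a deficient row forces a deficient column. Your identity $\sum_i r_i=\sum_j c_j$ supplies exactly that, and it also shows $N$ can never equal $1$.

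The rank-one correction $\mathbf{Y}=\mathbf{X}+\mathbf{r}\mathbf{c}^\top/s$ avoids all of this bookkeeping. It needs no termination argument and no case analysis on which line saturates, only the same total-sum identity, and it gives an explicit formula. The cost is a generally dense correction, whereas the greedy scheme modifies at most $2n-1$ entries, since the last step always takes $N$ from $2$ to $0$. In the paper this lemma is used only to bound $\bigl[\lvert w_{ij}y_{ji}\rvert\bigr]$ entrywise by a doubly stochastic matrix in the proof of von Neumann's inequality. Only existence matters there, so the one-line construction is the cleaner choice.
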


\begin{proof}
    Suppose $\mathbf{X} \in \mathbb{R}^{n \times n}$ is doubly substochastic. Let $N(\mathbf{X}) > 0$ (see \Cref{rem:definition_of_permutation}). This implies that there exist indices $i,j \in [n]$ such that 
    \begin{equation}
        \norm{\mathbf{X}\mathbf{e}_i}_1 < 1 \quad \text{and} \quad \norm{\mathbf{X}^\top\mathbf{e}_j}_1 < 1.
    \end{equation}
    Let us define the non-negative matrix $\Tilde{\mathbf{X}}$ such that
    \begin{equation}
    \norm{\left(\mathbf{X}+ \Tilde{\mathbf{X}} \right)\mathbf{e}_i}_1 = 1, \norm{\left(\mathbf{X}+ \Tilde{\mathbf{X}} \right)^\top \mathbf{e}_j}_1 = 1
    \end{equation}
    and 
    \begin{equation}
    \norm{\left(\mathbf{X}+ \Tilde{\mathbf{X}} \right)\mathbf{e}_k}_1 = \norm{\mathbf{X} \mathbf{e}_k}_1, \norm{\left(\mathbf{X}+ \Tilde{\mathbf{X}} \right)^\top\mathbf{e}_l}_1 = \norm{\mathbf{X}^\top \mathbf{e}_l}_1 \quad \forall (k,l): (k,l) \neq (i,j).
    \end{equation}
    Then, $\mathbf{X} + \Tilde{\mathbf{X}}$ is doubly substochastic, $\mathbf{X} \leq\mathbf{X} + \Tilde{\mathbf{X}}$ and $N(\mathbf{X} + \Tilde{\mathbf{X}}) < N(\mathbf{X})$. Note that any matrix $\mathbf{Z} \in \mathbb{R}^{n \times n}$ is doubly stochastic if and only if $N(\mathbf{Z}) = 0$. Therefore, we can iterate the same process $t$ times  until we reach a matrix $\mathbf{Y} = \mathbf{X} + \Tilde{\mathbf{X}}_t$ such that $N(\mathbf{Y}) = 0$.
\end{proof}

\section{Proof of von Neumann's Trace Inequality}
\label[appendix]{sec:proof_of_von_Neumann}
\begin{proof}
Let the singular value decompositions of $\mathbf{A}$ and $\mathbf{B}$ be $\mathbf{U}_1 \mathbf{\Sigma}_\mathbf A \mathbf{V}_1^\top$ and $\mathbf{U}_2\mathbf{\Sigma}_\mathbf{B} \mathbf{V}_2^\top$, respectively. Let $\mathbf{W} = \mathbf{V}_1^\top \mathbf{U}_2$ and $\mathbf{Y} = \mathbf{V}_2^\top \mathbf{U}_1$. Then 
\begin{align}
    \operatorname{tr}(\mathbf{AB}) &= \operatorname{tr}(\mathbf{U}_1 \mathbf{\Sigma}_\mathbf A \mathbf{V}_1^\top \mathbf{U}_2\mathbf{\Sigma}_\mathbf{B} \mathbf{V}_2^\top)   \\
    &= \operatorname{tr}(\mathbf{\Sigma}_\mathbf A \mathbf{W}\mathbf{\Sigma}_\mathbf{B}\mathbf{Y}) \\
    &= \sum_{i=1}^n \sum_{j =1}^n \sigma_i(\mathbf{A}) \sigma_j({\mathbf{B}}) w_{ij}y_{ji} \\
    &\leq \sum_{i=1}^n \sum_{j =1}^n \sigma_i(\mathbf{A}) \sigma_j({\mathbf{B}}) \abs{w_{ij}y_{ji}}.
\end{align}
Denote
\begin{equation}
    \mathbf{W} = \begin{bmatrix}
        \mathbf{w}_1 \\
        \mathbf{w}_2 \\
        \vdots\\
        \mathbf{w}_n
    \end{bmatrix} = \begin{bmatrix}
        \hat{\mathbf{w}}_1 & \hat{\mathbf{w}}_2 & \cdots & \hat{\mathbf{w}}_n
    \end{bmatrix},  \mathbf{Y} = \begin{bmatrix}
        \mathbf{y}_1 & \mathbf{y}_2 & \cdots & \mathbf{y}_n
    \end{bmatrix} = \begin{bmatrix}
        \hat{\mathbf{y}}_1 \\
        \hat{\mathbf{y}}_2 \\
        \vdots\\
        \hat{\mathbf{y}}_n,
    \end{bmatrix}
\end{equation}
Note that $\mathbf{W}$ and $\mathbf{Y}$ are orthogonal matrices; therefore, $\norm{\mathbf{w}_i}_2 = \norm{ \hat{\mathbf{w}}_i}_2 =  \norm{\mathbf{y}_i}_2 = \norm{ \hat{\mathbf{y}}_i}_2 = 1$ for all $i \in [n]$. Furthermore, $\abs{\mathbf{W} \odot \mathbf{Y}^\top} = [\abs{w_{ij}y_{ji}}] \in \mathbb{R}^{n\times n}$. 
The sum of the entries in any row of $[\abs{w_{ij}y_{ji}}]$ equals
\begin{equation}
    \sum_{j=1}^n \abs{w_{ij}y_{ji}} \leq \norm{\mathbf{w}_i}_2 \norm{\mathbf{y}_i}_2 = 1 \quad \forall i \in [n]
\end{equation}
by Hölder's inequality. Moreover, the sum of the entries in any column of $[\lvert w_{ij} y_{ji} \rvert]$ also satisfies
\begin{equation}
    \sum_{i=1}^n \lvert w_{ij} y_{ji} \rvert
    \leq \lVert \hat{\mathbf{w}}_j \rVert_2 \, \lVert \hat{\mathbf{y}}_j \rVert_2
    = 1 \qquad \forall\, j \in [n].
\end{equation}
Therefore, $[\abs{w_{ij}y_{ji}}]$ is doubly substochastic (see \Cref{defsubstochastic}). We know that there exists a doubly stochastic matrix $\mathbf{C}$ such that $[\abs{w_{ij}y_{ji}}] \leq \mathbf{C} = [c_{ij}] \in \mathbb{R}^{n \times n}$. Hence, 
\begin{equation}
    \operatorname{tr}(\mathbf{AB}) \leq \sum_{i=1}^{n} \sum_{j=1}^{n} \sigma_{i}(\mathbf{A}) \sigma_{j}(\mathbf{B}) c_{ij} =: f(C).
\end{equation}
Note that $f$ is a convex function over the set of doubly stochastic matrices. Therefore, it attains its maximum at a permutation matrix $\mathbf{P}= [p_{ij}] \in \mathbb{R}^{n \times n}$ (see \Cref{corollary:attainmax}). Denote by $\hat\pi :[n] \rightarrow [n]$ the permutation of $[n]$ corresponding to $\mathbf{P}$. Note that $p_{ij} = 1$ if and only if $\hat \pi(i) = j$. Therefore, 
\begin{equation}
    \operatorname{tr}(\mathbf{AB}) \leq \sum_{i=1}^{n} \sum_{j=1}^{n} \sigma_{i}(\mathbf{A}) \sigma_{j}(\mathbf{B}) p_{ij} = \sum_{i=1}^n \sigma_i(\mathbf{A})\sigma_{\hat \pi(i)}(\mathbf{B}).
\end{equation}
Define $S(\pi) =  \sum_{i=1}^n \sigma_i(\mathbf{A})\sigma_{\pi(i)}(\mathbf{B})$. For any permutation $\pi$ that is not the identity, there exists an inversion; that is, there exist indices $i,j \in [n]$ with $i<j$ such that $\pi(i) > \pi(j)$. We can obtain another permutation $\pi':[n] \rightarrow [n]$ from any permutation $\pi$ by simply swapping the values between $\pi(i)$ and $\pi(j)$, i.e., $\pi'(i) = \pi(j)$ and $\pi'(j) = \pi(i)$. Then
\begin{align}
    S(\pi) - S(\pi') &= \sigma_i(\mathbf{A})[\sigma_{\pi(i)}(\mathbf{B}) - \sigma_{\pi(j)}(\mathbf{B})] +  \sigma_j(\mathbf{A})[\sigma_{\pi(j)}(\mathbf{B}) - \sigma_{\pi(i)}(\mathbf{B})] \\
    &=\sigma_{\pi(i)}(\mathbf{B})[\sigma_{i}(\mathbf{A})-\sigma_{j}(\mathbf{A})] + \sigma_{\pi(j)}(\mathbf{B})[\sigma_{j}(\mathbf{A})-\sigma_{i}(\mathbf{A})] \\
    &= [\sigma_{i}(\mathbf{A})-\sigma_{j}(\mathbf{A})][\sigma_{\pi(i)}(\mathbf{B})-\sigma_{\pi(j)} (\mathbf{B})] \\
    &\leq 0.
\end{align}
This is equivalent to $S(\pi') - S(\pi) \geq 0$. This means that swapping an inversion does not decrease the sum. Since any permutation $\pi$ can be constructed from the identity permutation by a sequence of inversions,
\begin{equation}
    \sum_{i=1}^n \sigma_i(\mathbf{A})\sigma_{\hat \pi(i)}(\mathbf{B}) \leq \sum_{i=1}^n \sigma_i(\mathbf{A})\sigma_i(\mathbf{B}),
\end{equation}
which implies
\begin{equation}
    \operatorname{tr}(\mathbf{AB}) \leq  \sum_{i=1}^n \sigma_i(\mathbf{A})\sigma_i(\mathbf{B}),
\end{equation}
with equality if and only if $\mathbf{W} = \mathbf{Y} = \mathbf{I}$ if and only if $\mathbf{A}$ and $\mathbf{B}^\top$ have the same singular vectors.
\end{proof}
\noindent
Note that $\mathbf{A}$ and $\mathbf{B}$ do not need to be square matrices for this inequality to hold.
\begin{remark}
Let $\mathbf{A},\mathbf{B} \in \mathbb{R}^{m \times n}$ and $r = \min\{m,n\}$. Denote by  $\sigma_{1}(\mathbf{A}) \geq \cdots \geq \sigma_r(\mathbf{A})$ and $\sigma_{1}(\mathbf{B}) \geq \cdots \geq \sigma_r(\mathbf{B})$ the ordered singular values of $\mathbf{A}$ and $\mathbf{B}$, respectively. Then 
    \begin{equation}
        \operatorname{tr}(\mathbf{AB}^\top) \leq \sum_{i=1}^r \sigma_{i}(\mathbf{A})\sigma_i(\mathbf{B}).
    \end{equation}
    \label{corollary:VonNeumann}
\end{remark}

\begin{proof}
    Define $q = \max\{m,n\}$ and $\mathcal{A},\mathcal{B} \in \mathbb{R}^{q \times q}$ such that 
    \begin{equation}
        \mathcal{A} := \begin{bmatrix}
            \mathbf{A} & \mathbf{0}_{m \times (q-n)} \\
            \mathbf{0}_{(q-m) \times n} & \mathbf{0}_{(q-m) \times (q-n)}
        \end{bmatrix},
        \mathcal{B} := \begin{bmatrix}
            \mathbf{B}^\top & \mathbf{0}_{n \times (q-m)} \\
            \mathbf{0}_{(q-n)\times m} & \mathbf{0}_{(q-n) \times (q-m)}
        \end{bmatrix},
    \end{equation}
    where 
    \begin{equation}
        \mathcal{AB} = \begin{bmatrix}
            \mathbf{AB}^\top & \mathbf{0}_{m \times (q-m)} \\
            \mathbf{0}_{(q-m) \times m} & \mathbf{0}_{(q-m) \times (q-m)}.
        \end{bmatrix}
    \end{equation}
    Therefore, 
    \begin{equation}
        \operatorname{tr}(\mathbf{AB}^\top) = \operatorname{tr}(\mathcal{AB}) \leq \sum_{i=1}^q\sigma_i(\mathcal{A})\sigma_i(\mathcal{B}) = \sum_{i=1}^r\sigma_i(\mathbf{A})\sigma_i(\mathbf{B}),
    \end{equation}
    with equality if and only if $\mathbf{A}$ and $\mathbf{B}$ have the same singular vectors.
\end{proof}

\end{document}